\documentclass[twoside,11pt]{article}

\usepackage{blindtext}

\usepackage[preprint]{jmlr2e}

\usepackage{amsmath}
\usepackage{enumitem}
\usepackage[ruled,vlined]{algorithm2e}
\usepackage{subcaption}
\usepackage{booktabs}
\usepackage{multirow}

\usepackage{lastpage}
\jmlrheading{00}{0000}{1-\pageref{LastPage}}{00/00; Revised 00/00}{00/00}{21-0000}{Zheng Wang and Nizar Bouguila}
\makeatletter
\def\ps@jmlrtps{%
  \let\@mkboth\@gobbletwo
  \def\@oddhead{}%
  \def\@oddfoot{}
  \def\@evenhead{}%
  \def\@evenfoot{}%
}
\makeatother

\ShortHeadings{Vectorized Bayes for LDTA}{Wang and Bouguila}
\firstpageno{1}

\begin{document}

\title{Vectorized Bayesian Inference for Latent Dirichlet-Tree Allocation}

\author{\name Zheng Wang \email zheng.wang.20211@mail.concordia.ca \\
       \addr Concordia Institute for Information Systems Engineering\\
       Concordia University\\
       Montreal, QC H3G 1M8, Canada
       \AND
       \name Nizar Bouguila \email nizar.bouguila@concordia.ca \\
       \addr Concordia Institute for Information Systems Engineering\\
       Concordia University\\
       Montreal, QC H3G 1M8, Canada}

\editor{My editor}

\maketitle

\begin{abstract}
Latent Dirichlet Allocation (LDA) is a foundational model for discovering latent thematic structure in discrete data, but its Dirichlet prior cannot represent the rich correlations and hierarchical relationships often present among topics.
We introduce the framework of Latent Dirichlet-Tree Allocation (LDTA), a generalization of LDA that replaces the Dirichlet prior with an arbitrary Dirichlet-Tree (DT) distribution.
LDTA preserves LDA’s generative structure but enables expressive, tree-structured priors over topic proportions.
To perform inference, we develop universal mean-field variational inference and Expectation Propagation, providing tractable updates for all DT.
We reveal the vectorized nature of the two inference methods through theoretical development, and perform fully vectorized, GPU-accelerated implementations.
The resulting framework substantially expands the modeling capacity of LDA while maintaining scalability and computational efficiency.
The source code is available at https://github.com/intaiyjw/ldta.
\end{abstract}

\begin{keywords}
  Bayesian Computing, Latent Dirichlet Allocation, Dirichlet-Tree, Mean-Field Variational Inference, Expectation Propagation
\end{keywords}

\section{Introduction}

Probabilistic topic modeling has become one of the central frameworks for understanding latent structure in large, unorganized collections of discrete data.
As digital text, image annotations, scientific literature, biomedical data, and social media content continue to grow at unprecedented scale, tools for discovering hidden themes and interpretable patterns have become indispensable in machine learning and data mining \citep{blei2012probabilistic}.
Among these tools, Latent Dirichlet Allocation (LDA) has played a foundational role.
Since its introduction by \cite{blei:03}, LDA has remained one of the most widely used generative models for unsupervised learning of thematic structure. 
Its influence extends far beyond text analysis: LDA has been applied in genetics \citep{Liu2016TopicModeling}, marketing \citep{hofmann1999plsi}, computer vision \citep{feifei2005bayesian}, recommendation systems \citep{wang2011collab}, and many other domains where observed data can be interpreted as arising from mixtures of latent components.

The original LDA model assumes that each document is represented by a probability vector over latent topics, and each topic is a probability distribution over a vocabulary of terms \citep{blei:03}. 
Crucially, LDA places a Dirichlet prior over the document-level topic proportions. 
This choice of prior has several attractive properties: it is conjugate to the multinomial distribution, it facilitates tractable posterior updates, and it induces a simple geometric interpretation on the simplex \citep{bishop:06, gelman2013bayesian}. 
However, the Dirichlet prior also imposes structural limitations. 
Although its parameters can control concentration and sparsity, the Dirichlet distribution cannot express fine-grained correlations among topic components beyond the inherent negative dependence due to the simplex constraint \citep{wallach2009rethinking}. 
As a result, while LDA is simple and effective, it is often too rigid to capture the complex hierarchical or clustered topic structures that arise in real data.

This limitation has led to numerous extensions and generalizations of LDA. 
Correlated Topic Models \citep{blei2007ctm} introduced a logistic–normal prior to encode richer covariance structures, while hierarchical topic models such as the Hierarchical Pachinko Allocation Model \citep{li2006pachinko} and the nested Chinese restaurant process \citep{blei2010nested} allow for topic hierarchies and tree structures. 
Although these models provide greater flexibility, they typically sacrifice conjugacy and introduce substantial computational overhead. 
Many such models require non-conjugate variational inference or sampling procedures, which can be computationally expensive and difficult to scale \citep{hoffman2013stochastic, teh2006hdp}.

A powerful yet underutilized alternative to the Dirichlet distribution is the Dirichlet-Tree (DT) distribution, conceptualized for the first time by \cite{connor1969dirichlet} and later revived in works by \cite{dennis:91} and \cite{minka:99}.
The Dirichlet-Tree distribution provides a natural way to encode structured correlations through a tree of nested Dirichlet random variables. 
Each internal node of the tree defines a sub-Dirichlet distribution governing how probability mass is split among its children, and the full distribution over the leaves (which lie on the simplex) arises from the multiplicative combination of these local branching decisions. 
This formulation has two particularly attractive properties: (1) it can express arbitrarily rich correlation patterns and hierarchical structures by choosing appropriate tree shapes, and (2) it retains many of the computational conveniences of the ordinary Dirichlet distribution, including local conjugacy and analytical tractability.

Given the advantages of the Dirichlet-Tree distribution, it is natural to ask whether the traditional LDA model can be generalized by replacing the Dirichlet prior with a Dirichlet-Tree prior. 
Doing so would allow topic models to encode complex prior structures—such as that certain topics should co-occur, that some topics form meaningful groups, or that certain topics follow hierarchical relationships—without abandoning the general framework of LDA.
Indeed, the special cases of Dirichlet-Tree, including Beta-Liouville \citep{GuptaRichards1987Liouville, BakhtiariBouguila2016BetaLiouville} and Generalized Dirichlet \citep{Fang1990Symmetric, Wong1998GenDirichlet, BakhtiariBouguila2014VariationalCount}, have been explored and utilized in recent topic modeling literature.
This idea motivates the framework developed in this paper, which we call Latent Dirichlet–Tree Allocation (LDTA). 
LDTA generalizes LDA by allowing an arbitrary Dirichlet-Tree to govern the topic proportions. 
It preserves the generative semantics of LDA, retains interpretability, and supports a broad class of structured priors suitable for modeling realistic topic correlations.
However, generalizing LDA to LDTA also introduces significant challenges in posterior inference. 
While LDA benefits from the conjugacy of the Dirichlet and multinomial distributions, LDTA introduces a hierarchical and potentially deep tree structure whose internal dependencies complicate updates. 
Efficient and scalable inference algorithms are therefore essential. 
Two influential approaches to approximate Bayesian inference are particularly relevant: Mean-Field Variational Inference (MFVI) and Expectation Propagation (EP).

Mean-Field Variational Inference (MFVI) \citep{jordan1999variational, wainwright2008graphical, blei2017variational}, introduced in the late 1990s and now widely used in modern machine learning, replaces the exact posterior with a factorized approximation chosen to optimize a lower bound on the log marginal likelihood (the evidence lower bound, or ELBO). 
Variational inference is deterministic, often fast, and well-suited for models where conjugacy provides closed-form updates. 
It has become a standard tool for scalable Bayesian modeling and is used in diverse settings ranging from Bayesian neural networks to probabilistic graphical models. 
For LDTA, variational inference enables the derivation of structured coordinate ascent algorithms that update each factor of the variational distribution in closed or semi-closed form.

On the other hand, Expectation Propagation (EP), introduced by \cite{minka2001expectation, Minka2002EPAspectModel}, takes a different approach.
Rather than optimizing a lower bound, EP iteratively approximates each non-conjugate factor in the model by projecting it onto a chosen exponential family distribution through moment matching. 
EP often provides more accurate approximations than MFVI, especially in models where the posterior is not well approximated by fully factorized distributions (mean-field) \citep{Minka2005Divergence}. 
EP has been used in Gaussian process classification \citep{HernandezLobato2016ScalableGPC_EP}, Bayesian logistic regression \citep{Vehtari2020EPWayOfLife}, and many other settings where non-conjugacy arises. 
For LDTA, EP offers a principled way to approximate the interactions between different nodes of the Dirichlet-tree, yielding improved posterior estimates at the cost of increased algorithmic complexity.

As modern machine learning applications frequently involve very large corpora, it is critical that inference algorithms for topic models leverage vectorized computation and GPU acceleration. 
Frameworks such as PyTorch \citep{paszke2019pytorch} have made it easy to express complex probabilistic updates in terms of batched tensor operations that can run efficiently on CUDA-capable GPUs. 
While traditional implementations of LDA rely on nested loops and sequential updates, a vectorized implementation can exploit parallelism across documents, topics, and tree nodes. 
Such acceleration is essential for large datasets where non-vectorized methods would be prohibitively slow.

This paper develops both mean-field variational inference and Expectation Propagation algorithms for LDTA and implements them in a fully vectorized fashion that exploits GPU parallelism. 
The resulting algorithms can perform efficient inference even for large corpora and complex Dirichlet-tree structures.
Compared with scalar or loop-based implementations, the vectorized approach drastically reduces runtime and allows the algorithm to scale to modern datasets.
The remainder of the paper proceeds as follows.
We develop an improved and comprehensive formulation of Dirichlet-Tree in Section 2, revealing the properties of Dirichlet-Tree and preparing it to integrate into LDA, mean-field variational inference and expectation propagation.
In Section 3, we describe the LDTA model, which is a generalization of LDA by extending the Dirichlet prior to an arbitrary Dirichlet-Tree distribution.
In Section 4, a vectorized universal mean-field variational inference method is derived for any LDTA model.
In Section 5, we interpret the idea and demonstrate the implementation steps of the vectorized Expectation Propagation for any LDTA model.
In Section 6, we choose three typical Dirichlet-Tree priors: the conventional Dirichlet, the Beta-Liouville and the Generalized Dirichlet to implement our models.
We validate and compare our methods by a range of experiments including document modeling, document and image classification, and RNA-sequencing in bioinformatics.
Finally, in Section 7, we conclude our paper and discuss the potential future directions.

\section{Dirichlet-Tree}

This section serves as a formal and comprehensive introduction to the Dirichlet-Tree distribution.
Dirichlet-Tree was developed as a generalization of Dirichlet distribution by \cite{dennis:91} and further explored by \cite{minka:99}.
We further prove the exponential form of Dirichlet-Tree distribution, and the triple forms of Dirichlet-Tree are concluded: the node form, the general form and the exponential form.
As a result, we show that Dirichlet-Tree remains a conjugate prior to the multinomial likelihood with a transformed parameter updating.
Finally, several important concepts including Bayesian operator are proposed to further assist the derivation of vectorized inferences for Latent Dirichlet-Tree Allocation.

\subsection{Tree}

The tree is a well-studied data structure in computer science.
It is useful to introduce several tree-related notations before deriving the Dirichlet-Tree distribution.
Figure~\ref{fig:tree} shows a general tree with height values of 3.
A tree consists of multiple nodes and branches (or edges).
We refer to any node $\lambda$ that has multiple child nodes, including the root node, as an internal node; and the set for all internal nodes is denoted by $\boldsymbol{\Lambda}$.
In particular, $\boldsymbol{\Lambda}^{\backslash r}$ denotes the set of all internal nodes except the root node.
Any node $\omega$ that does not have a child node is called a terminal node (or leaf) and the set of all terminal nodes is $\boldsymbol{\Omega}$.

Between a parent node and one of its child nodes is a branch (or edge).
We use $t \mid s$ to denote the branch starting from node $s$ and pointing to node $t$.
An indicator function $\delta_{t \mid s}(\omega)$ whose domain is $\boldsymbol{\Omega}$ is defined and assigned to each branch:

\begin{equation*}
\delta_{t \mid s}(\omega) =
\begin{cases}
1, & \text{if the branch } t|s \text{ leads to leaf } \omega, \\[6pt]
0,        & \text{otherwise}
\end{cases}
\quad \omega \in \boldsymbol{\Omega}, \; s \in \boldsymbol{\Lambda}
\label{eq:tree branch indicator}
\end{equation*}

Sometimes, the starting node of a branch does not matter, and we denote this branch that directly points to node $s$ as $s|*$.
We define the function $l(s),\;s \in \boldsymbol{\Lambda} \cup \boldsymbol{\Omega}$ as the number of terminal nodes that a node $s$ travels to. 
When $s$ is the root node, we have:
\begin{equation*}
    l(r) = \sum_{t|r} \sum_{\omega \in \boldsymbol{\Omega}} \delta_{t|r}(\omega) = K
\end{equation*}
where $K = |\boldsymbol{\Omega}|$ is the number of all terminal nodes.
For $s \in \boldsymbol{\Omega}$, we have:
\begin{equation*}
    l(s) = \sum_{\omega \in \boldsymbol{\Omega}} \delta_{s|*}(\omega) = 1, \qquad s \in \boldsymbol{\Omega}
\end{equation*}
For any $s \in \boldsymbol{\Lambda}^{\backslash r}$:
\begin{equation*}
    l(s) = \sum_{\omega \in \boldsymbol{\Omega}} \delta_{s|*}(\omega) = \sum_{t|s} \sum_{\omega \in \boldsymbol{\Omega}} \delta_{t|s}(\omega), \qquad s \in \boldsymbol{\Lambda}^{\backslash r}
\end{equation*}
We further define the function $c(s),\;s \in \boldsymbol{\Lambda}$ as the number of direct child nodes (or branches) under an internal node.
For example, in Figure~\ref{fig:tree}, $c(\lambda_2) = 3$.
The following theorem is obvious and useful to mention:
\begin{theorem}\label{thm:tree_relation}
In a tree structure, for any internal node $s \in \boldsymbol{\Lambda}$,
\begin{equation}
l(s) = c(s) + \sum_{t \mid s} \left( l(t) - 1 \right), 
\qquad s \in \boldsymbol{\Lambda}.
\label{eq:tree_relation}
\end{equation}
\end{theorem}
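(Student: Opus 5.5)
The identity is a reorganization of the additive decomposition of leaf counts over children. Since $c(s)$ is the number of branches $t\mid s$, we can write $c(s)=\sum_{t\mid s}1$. The right-hand side of \eqref{eq:tree_relation} then collapses to $\sum_{t\mid s} l(t)$. It therefore suffices to show, for every internal node $s\in\boldsymbol{\Lambda}$,
\begin{equation*}
l(s)=\sum_{t\mid s} l(t).
\end{equation*}

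To prove this, start from the definition of $l(s)$ given just before the theorem. For $s=r$ it is stated directly as $\sum_{t|r}\sum_{\omega}\delta_{t|r}(\omega)$. For $s\in\boldsymbol{\Lambda}^{\backslash r}$ it is stated as $\sum_{t|s}\sum_{\omega}\delta_{t|s}(\omega)$. In both cases we need the identity $\sum_{\omega\in\boldsymbol{\Omega}}\delta_{t|s}(\omega)=l(t)$ for each child $t$ of $s$. This holds because the branch $t\mid s$ leads to leaf $\omega$ exactly when $\omega$ lies in the subtree rooted at $t$, that is, exactly when $\delta_{t|*}(\omega)=1$ (the branch into $t$ is unique in a tree). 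We then split on the type of $t$:
\begin{itemize}
\item if $t\in\boldsymbol{\Omega}$, the sum equals $1=l(t)$, since the only leaf reached is $t$ itself;
\item if $t\in\boldsymbol{\Lambda}^{\backslash r}$, the sum is $\sum_\omega \delta_{t|*}(\omega)$, which is $l(t)$ by definition.
\end{itemize}
Substituting gives $l(s)=\sum_{t|s} l(t)$. Adding and subtracting $c(s)=\sum_{t|s}1$ yields \eqref{eq:tree_relation}.

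There is essentially no real obstacle. The only point needing care is justifying $\sum_\omega \delta_{t|s}(\omega)=\sum_\omega\delta_{t|*}(\omega)$. This uses the fact that every non-root node has a unique parent, so the branch pointing to $t$ is exactly $t\mid s$. It also uses the fact that the leaves reached through that branch are disjoint across different children $t$ of $s$, which guarantees no double counting in the sum.
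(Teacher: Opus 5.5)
Your proposal is correct. The paper gives no proof and calls the identity ``obvious''; your reduction, writing $c(s)=\sum_{t\mid s}1$ so that the claim becomes the leaf-count additivity $l(s)=\sum_{t\mid s} l(t)$, which follows directly from the stated definitions of $l$, is exactly the reasoning the paper leaves implicit.
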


\begin{figure}[htbp]
    \centering
    \includegraphics[width=0.75\linewidth]{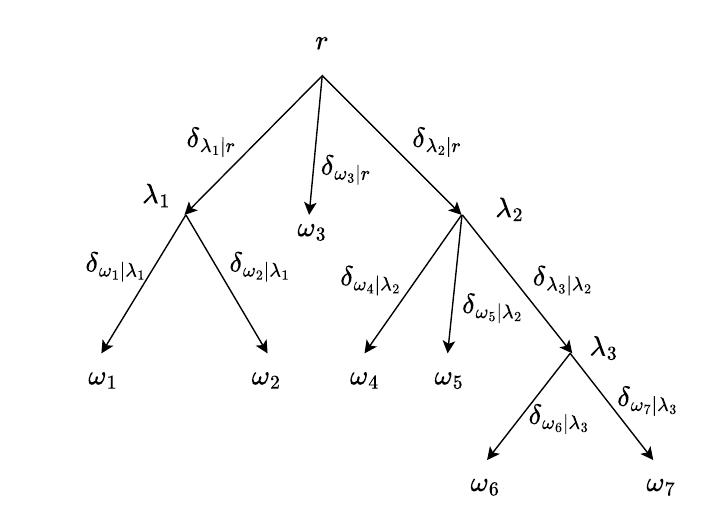}
    \caption{Notation for a general tree structure}
    \label{fig:tree}
\end{figure}

\subsection{Hierarchical Multinomial and Dirichlet-Tree}

\begin{figure}[htbp]
    \centering
    \includegraphics[width=1.0\linewidth]{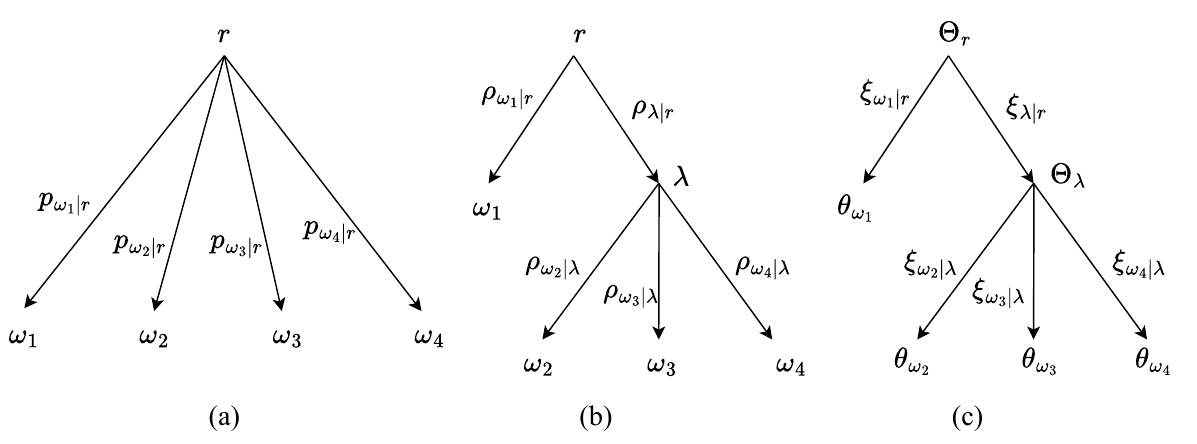}
    \caption{(a) Conventional Multinomial; (b) Hierarchical Multinomial; (c) Dirichlet-Tree}
    \label{fig:h_multinomial_to_dt}
\end{figure}

The Dirichlet distribution has been widely used as a conjugate prior to multinomial under conventional parameterization \citep{minka:99, blei:03, bishop:06}.
\cite{minka:99} described the sample of a conventional multimomial as the outcome of a K-sided die.
In this analogy, a K-sided die is rolled once and one out of $K$ possible outcomes is produced, where each outcome is assigned to a possibility matter $p_k, \; k=1,2,\cdots,K$.
As in Figure~\ref{fig:h_multinomial_to_dt} (a), we can formally represent the conventional multinomial by assigning each $p_k$ to a branch in a tree with height 1.
The probability mass function (PMF) is given by:
\begin{equation*}
    p(\omega \mid \boldsymbol{p}) = 
    \prod_{\omega \mid r} {p_{\omega \mid r}}^{\delta_{\omega \mid r}(\omega)},
    \qquad \omega \in \boldsymbol{\Omega},\;\boldsymbol{p} \in \Delta^{K-1},\;r \text{ is root}
\end{equation*}
Correspondingly, its conjugate Dirichlet can be represented by the same tree with each branch assigned a parameter $\alpha_{\omega \mid r}$, and each leaf assigned the component of the $(K-1)$-dimensional simplex.
The probability density function (PDF) follows:
\begin{gather*}
    \text{Dir}(\boldsymbol{\theta} \mid \boldsymbol{\alpha}_r) = 
    \frac{1}{B(\boldsymbol{\alpha}_r)} \prod_{\omega \in \boldsymbol{\Omega}} {\theta_{\omega}}^{\alpha_{\omega \mid r} - 1},
    \qquad \boldsymbol{\theta} \in \Delta^{|\boldsymbol{\Omega}|-1}, \; \boldsymbol{\alpha}_{r} \in \mathbb{R}_{>0}^{K}, \; r \text{ is root} \\[6pt]
    B(\boldsymbol{\alpha}_r) = \frac{\prod_{\omega \mid r} \Gamma\left(\alpha_{\omega \mid r}\right)}{\Gamma\left(\sum_{\omega \mid r} \alpha_{\omega \mid r}\right)}
\end{gather*}
where $\Delta^{|\boldsymbol{\Omega}|-1}$ is the $(|\boldsymbol{\Omega}|-1)$-dimensional simplex defined by:
\begin{equation*}
    \Delta^{|\boldsymbol{\Omega}|-1} = \left\{ \boldsymbol{\theta} \in \mathbb{R}^{|\boldsymbol{\Omega}|} \;\middle|\;
    \theta_{\omega} \ge 0, \; \sum_{\omega \in \boldsymbol{\Omega}} \theta_{\omega} = 1 \right\}
\end{equation*}
In the case of Dirichlet, $|\boldsymbol{\Omega}| = K$.
Despite its convenience as a conjugate prior for multinomial, Dirichlet suffers from the following limitations:
(i) all components share a common variance parameter although each component has its own mean \citep{minka:99}, and
(ii) the covariance between any two components is strictly negative \citep{mosimann:62}.
To solve these limitations, \cite{dennis:91} developed the Dirichlet-Tree distribution as a generalization of Dirichlet.

Instead of rolling a $K$-dimensional die once, the sample of a multinomial can also be thought of as rolling several different dice for finite times, as shown in Figure~\ref{fig:h_multinomial_to_dt} (b).
This finite stochastic process results in a hierarchical multinomial under a tree-like parameterization with the following PMF:
\begin{equation}
    p(\omega \mid \boldsymbol{\rho}) = 
    \prod_{s \in \boldsymbol{\Lambda}} \prod_{t \mid s} {\rho_{t \mid s}}^{\delta_{t \mid s}(\omega)}, \qquad
    \omega \in \boldsymbol{\Omega}, \; \boldsymbol{\rho}_s \in \Delta_{s}^{c(s)-1}, \; s \in \boldsymbol{\Lambda}
    \label{eq:hierarchical_multinomial}
\end{equation}
Consequently, the corresponding prior for the hierarchical multinomial is based on the collection of $|\boldsymbol{\Lambda}|$ independent Dirichlets endowed with a tree structure (Figure~\ref{fig:h_multinomial_to_dt} (c)).
And the PDF is given by:
\begin{equation}
    p(\boldsymbol{\rho} \mid \boldsymbol{\xi}) = 
    \prod_{s \in \boldsymbol{\Lambda}} \frac{1}{B(\boldsymbol{\xi}_s)} \prod_{t \mid s} {\rho_{t \mid s}}^{\xi_{t | s} - 1}, \qquad \boldsymbol{\rho}_s \in \Delta_{s}^{c(s) - 1}, \; \boldsymbol{\xi}_s \in \mathbb{R}_{>0}^{c(s)}, \; s \in \boldsymbol{\Lambda}
    \label{eq:pdf_dirichlets}
\end{equation}
This is not the final form of Dirichlet-Tree distribution yet.
What we want is a distribution over $(|\boldsymbol{\Omega}| - 1)$-dimensional simplex assigned to the set of terminal nodes.
A transformation from $\boldsymbol{\rho} \in \prod_{s \in \boldsymbol{\Lambda}} \Delta_{s}^{c(s) - 1}$ to $\boldsymbol{\theta} \in \Delta^{|\boldsymbol{\Omega}| - 1}$ is needed to achieve this distribution.
As \eqref{eq:hierarchical_multinomial} implies, the component $\theta_{\omega}$ assigned to a terminal node is equal to the product of all probability mass assigned to the branches that lead to the terminal node.
We denote this mapping as follows:
\begin{equation}
\begin{aligned}
T: \prod_{s \in \boldsymbol{\Lambda}} \Delta_{s}^{c(s) - 1} 
&\to \Delta^{|\boldsymbol{\Omega}| - 1}, 
\quad \boldsymbol{\rho} \mapsto T(\boldsymbol{\rho}) \\[1ex]
\theta_{\omega}
= \prod_{s \in \boldsymbol{\Lambda}}\prod_{t \mid s} &{\rho_{t \mid s}}^{\delta_{t\mid s}(\omega)}, \qquad \omega \in \boldsymbol{\Omega}\\[1ex]
T(\boldsymbol{\rho}) 
&= \bigcirc_{s \in \boldsymbol{\Lambda}} T_s(\boldsymbol{\rho}_s)
\label{eq:trans_rho_theta}
\end{aligned}
\end{equation}
where $T$ is the composition of all sub-mappings $T_s$ under each internal node.
Conversely, to show the inverse transform $T^{-1}$, we first define the node mass $\Theta_s$:
\begin{equation*}
\Theta_s(\boldsymbol{\theta}) =
\begin{cases}
1, & s \text{ is the root}, \\[6pt]
\sum_{\omega \in \boldsymbol{\Omega}} \delta_{s \mid *}\left( \omega \right) \theta_{\omega}, & s \in \boldsymbol{\Lambda}^{\backslash r} \cup \boldsymbol{\Omega} 
\end{cases}
\end{equation*}
Obviously, $\Theta_{\omega} = \theta_{\omega}$ when $\omega \in \boldsymbol{\Omega}$.
Thus, we have the inverse transform as follows:
\begin{equation}
\begin{aligned}
T^{-1}: \Delta^{|\boldsymbol{\Omega}| - 1} 
&\to \prod_{s \in \boldsymbol{\Lambda}} \Delta_{s}^{c(s) - 1}, 
\quad \boldsymbol{\theta} \mapsto T^{-1}(\boldsymbol{\theta}) \\[1ex]
\rho_{t \mid s} 
&= \frac{\Theta_t}{\Theta_s}, \quad s \in \boldsymbol{\Lambda}
\end{aligned}
\label{eq:trans_theta_rho}
\end{equation}
Now, we have a pair of inverse transforms and assume the transform to be smooth.
We go back to \eqref{eq:trans_rho_theta} and derive the Jacobian of $T$.
For $\forall s \in \boldsymbol{\Lambda}$, we denote the sub-transform by free-coordinates representation:
\begin{equation*}
    T_s(\boldsymbol{\rho}_s): (\rho_{1 \mid s}, \dots, \rho_{c(s) - 1 \mid s}, \Theta_s) \;\mapsto\; (\Theta_{1}, \dots, \Theta_{c(s)})
\end{equation*}
Note that we enumerate the set of immediate child nodes of node $s$ by $1, \dots, c(s)$.
For $i = 1, \dots, c(s)-1$ and $j = 1, \dots, c(s)-1$:
\begin{equation*}
    \frac{\partial \Theta_i}{\partial \rho_{j \mid s}} = \frac{\partial \left( \rho_{i \mid s} \Theta_s \right)}{\partial \rho_{j \mid s}} = \Theta_s \mathbf{1}_{\{i=j\}}, \qquad \frac{\partial \Theta_{c(s)}}{\partial \rho_{j \mid s}} = \frac{\partial \left( \left( 1 - \sum_{l=1}^{c(s)-1} \rho_{l \mid s} \right) \Theta_s \right)}{\partial \rho_{j \mid s}} = -\Theta_s
\end{equation*}
For the column corresponding to $\Theta_s$:
\begin{equation*}
    \frac{\partial \Theta_i}{\partial \Theta_s} = \rho_{i \mid s}, \qquad i = 1, \dots, c(s)
\end{equation*}
Thus, we have the Jacobian with the form:
\begin{equation*}
J_{T_s} =
\begin{pmatrix}
\displaystyle \Theta_s I_{c(s)-1} & b \\[1ex]
\displaystyle c^{\top} & d
\end{pmatrix}
\end{equation*}
where $I_{c(s)-1}$ is ($c(s)-1$)-dimensional identity, $b = (\rho_{1 \mid s}, \dots, \rho_{c(s)-1 \mid s})^{\top}$, $c^{\top} = (-\Theta_s, \dots, -\Theta_s)$, $d = \rho_{c(s) \mid s}$.
Thus, the determinant derives as follows:
\begin{equation}
    \det(J_{T_s}) = \det(\Theta_s I_{c(s)-1}) \left( d - c^{\top} \left(\Theta_s I_{c(s)-1}\right)^{-1}b \right) = {\Theta_s}^{c(s)-1}
    \label{eq:det_jacobian}
\end{equation}
According to the change-of-variable theorem, what we call the node form of Dirichlet-Tree is derived as follows by combining \eqref{eq:pdf_dirichlets}, \eqref{eq:trans_theta_rho} and \eqref{eq:det_jacobian}:
\begin{equation}
\begin{aligned}
    \text{DT}(\boldsymbol{\theta} \mid \boldsymbol{\xi}) 
    &= p\left(T^{-1}(\boldsymbol{\theta})\mid\boldsymbol{\xi}\right)\;\left|\det(J_T)\right|^{-1} \\[1ex]
    &= \prod_{s \in \boldsymbol{\Lambda}} \frac{1}{B(\boldsymbol{\xi}_{s})} \left( \frac{1}{\Theta_s} \right)^{c(s) - 1} \prod_{t \mid s} \left( \frac{\Theta_t}{\Theta_s} \right)^{\xi_{t \mid s} - 1},
    \qquad \boldsymbol{\theta} \in \Delta^{|\boldsymbol{\Omega}|-1},\;\boldsymbol{\xi} \in \mathbb{R}_{>0}^{\sum_{s \in \boldsymbol{\Lambda}}c(s)}
\end{aligned}
\label{eq:node_form}
\end{equation}
It is easy to derive a more general form of Dirichlet-Tree from \eqref{eq:node_form}.
Based on this general form, we give a formal definition of Dirichlet-Tree distribution.
\begin{definition}[General form of Dirichlet-Tree]
With respect to a tree structure where $r$ is the root node, $\boldsymbol{\Omega}$ is the set of terminal nodes and $\boldsymbol{\Lambda}^{\backslash r}$ is the set of the remaining nodes, a Dirichlet-Tree distribution is a probability measure on $\Delta^{|\boldsymbol{\Omega}|-1}$ with PDF
\begin{equation}
    \mathrm{DT}(\boldsymbol{\theta} \mid \boldsymbol{\xi}) =
    \frac{1}{B(\boldsymbol{\xi}_r)} \prod_{\omega \in \boldsymbol{\Omega}} {\theta_{\omega}}^{\xi_{\omega \mid *} - 1} \left( \prod_{s \in \boldsymbol{\Lambda}^{\backslash r}} \frac{1}{B(\boldsymbol{\xi}_s)} {\Theta_s}^{\xi_{s \mid *} - \xi_{* \mid s}} \right)
\end{equation}
where $\xi_{t \mid s} \in \mathbb{R}_{>0}$ is assigned to each branch for $s \in \boldsymbol{\Lambda}$, and
\begin{equation*}
    \xi_{* \mid s} = \sum_{t \mid s} \xi_{t \mid s}
\end{equation*}
and $B(\cdot)$ is the multivariate beta function.
\end{definition}

\subsection{Exponential Form and Conjugacy}

We further prove that a Dirichlet-Tree distribution belongs to the exponential family.
The exponential form of a Dirichlet-Tree is given as follows:
\begin{theorem}[Exponential form of Dirichlet-Tree]
\begin{equation}
\begin{aligned}
    \mathrm{DT}(\boldsymbol{\theta} \mid \boldsymbol{\xi}) 
    &= \prod_{s \in \boldsymbol{\Lambda}} \frac{1}{B(\boldsymbol{\xi}_s)} \prod_{t \mid s} \left( \frac{\Theta_t}{\Theta_s} \right)^{\xi_{t \mid s} - l(t)} \\[1ex]
    &= \exp\left(\sum_{s \in \boldsymbol{\Lambda}}\sum_{t \mid s}\left(\xi_{t \mid s}-l(t)\right)\log\left(\frac{\Theta_t}{\Theta_s}\right) - \log\prod_{s \in \boldsymbol{\Lambda}}B(\boldsymbol{\xi}_s)\right)
\end{aligned}
\label{eq:expo_form}
\end{equation}
where $l(t)$ is the number of leaf nodes that the node $t$ leads to.
\end{theorem}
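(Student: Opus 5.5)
The plan is to start from the node form \eqref{eq:node_form} and show that the extra Jacobian factor $\prod_{s\in\boldsymbol{\Lambda}}\Theta_s^{-(c(s)-1)}$ can be absorbed into the ratios $\Theta_t/\Theta_s$ with exponent shift $-l(t)$ instead of $-1$. Since the normalizers $\prod_s B(\boldsymbol{\xi}_s)^{-1}$ and the factors $\prod_{t|s}(\Theta_t/\Theta_s)^{\xi_{t|s}}$ appear identically on both sides, it suffices to prove the identity
\begin{equation*}
\prod_{s \in \boldsymbol{\Lambda}} \Theta_s^{-(c(s)-1)} \prod_{t\mid s}\left(\frac{\Theta_t}{\Theta_s}\right)^{-1}
= \prod_{s \in \boldsymbol{\Lambda}} \prod_{t\mid s}\left(\frac{\Theta_t}{\Theta_s}\right)^{-l(t)} .
\end{equation*}
I will compare the exponent of each $\Theta_u$, $u \in \boldsymbol{\Lambda}\cup\boldsymbol{\Omega}$, on the two sides. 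Every non-root node $u$ appears exactly once as a child $t$ (of its unique parent), and every internal node $u$ appears as the parent $s$ in exactly $c(u)$ ratios. The root can be ignored since $\Theta_r = 1$.

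On the left, a leaf $\omega$ gets exponent $-1$ from being a child. An internal non-root node $u$ gets $-1$ as a child, $-(c(u)-1)$ from the Jacobian factor, and $+c(u)$ from its children's denominators, for a total of $0$. On the right, a leaf gets $-l(\omega) = -1$. An internal non-root $u$ gets $-l(u)$ as a child and $+\sum_{t\mid u} l(t)$ as a parent. By Theorem~\ref{thm:tree_relation}, $l(u) = c(u) + \sum_{t\mid u}(l(t)-1) = \sum_{t\mid u} l(t)$, so this total is also $0$. The exponents match for every node, which establishes the first equality.

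The second, exponential, line follows by writing the product as $\exp$ of the sum of logarithms and moving $\log\prod_s B(\boldsymbol{\xi}_s)$ into the exponent. This exhibits sufficient statistics $\log(\Theta_t/\Theta_s) = \log \rho_{t\mid s}$ with natural parameters $\xi_{t\mid s}-l(t)$ and log-partition $\sum_s \log B(\boldsymbol{\xi}_s)$.

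The main obstacle is the bookkeeping of the exponents, in particular making sure the Jacobian's $-(c(s)-1)$ combines correctly with the $-1$'s of the node form. The one nontrivial input is the identity $l(u)=\sum_{t\mid u} l(t)$, which I take from Theorem~\ref{thm:tree_relation}. As a sanity check, I will verify the depth-one case: there all children are leaves, $l=1$, and the formula reduces to the ordinary Dirichlet.
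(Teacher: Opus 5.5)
Your proposal is correct and takes essentially the same route as the paper. Both arguments reduce the claim to showing that $\prod_{s}\Theta_s^{-(c(s)-1)}\prod_{t\mid s}(\Theta_t/\Theta_s)^{l(t)-1}=1$ by collecting the exponent of each $\Theta_u$, using Theorem~\ref{thm:tree_relation} (equivalently $l(u)=\sum_{t\mid u}l(t)$) and $\Theta_r=1$; your separate node-by-node count on each side is a slightly more explicit version of the paper's regrouping.
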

\begin{proof}
Dividing \eqref{eq:node_form} by \eqref{eq:expo_form} yields
\begin{equation*}
\begin{aligned}
    &\prod_{s \in \boldsymbol{\Lambda}}\left(\frac{1}{\Theta_s}\right)^{c(s)-1} \prod_{t \mid s}\left(\frac{\Theta_t}{\Theta_s}\right)^{l(t)-1} \\[1ex]
    =& \left(\prod_{s \in \boldsymbol{\Lambda}}\left(\frac{1}{\Theta_s}\right)^{c(s)+\sum_{t \mid s}\left(l(t)-1\right) -1}\right) \left(\prod_{t \in \boldsymbol{\Lambda}^{\backslash r}\cup\boldsymbol{\Omega}}\Theta_t^{l(t)-1}\right) \\[1ex]
    =& \left(\prod_{s \in \boldsymbol{\Lambda}}\left(\frac{1}{\Theta_s}\right)^{l(s)-1}\right) \left(\prod_{t \in \boldsymbol{\Lambda}^{\backslash r}} \Theta_t ^{l(t)-1}\right) \\[1ex]
    =& \;1
\end{aligned}
\end{equation*}
The derivation uses Theorem~\ref{thm:tree_relation} and the fact that 
\begin{equation*}
    \Theta_{\text{root}} = 1
\end{equation*}
\end{proof}
The natural parameter, sufficient statistics, normalizer, and base measure are respectively denoted as:
\begin{gather*}
    \eta_{t\mid s}(\boldsymbol{\xi}) = \xi_{t\mid s} - l(t), \quad s\in\boldsymbol{\Lambda} \\[1ex]
    u_{t\mid s}(\boldsymbol{\theta}) = \log\left(\frac{\Theta_t}{\Theta_s}\right), \quad s\in\boldsymbol{\Lambda} \\[1ex]
    g\left(\boldsymbol{\eta}(\boldsymbol{\xi})\right) = \prod_{s\in\boldsymbol{\Lambda}}B(\boldsymbol{\xi}_s) \\[1ex]
    h(\boldsymbol{\theta}) = 1
\end{gather*}
Given the exponential form, all properties of the exponential family listed in Appendix~\ref{app:properties_expo} will apply.
\begin{corollary}[Expectation of sufficient statistics]\label{cor:mean_u}
\begin{gather*}
    \mathbb{E}_{\mathrm{DT}(\boldsymbol{\theta}\mid\boldsymbol{\xi})}\left[\log\left(\frac{\Theta_t}{\Theta_s}\right)\right]
    = \frac{\partial \left(\log\prod_{s\in\boldsymbol{\Omega}}B(\boldsymbol{\xi}_s)\right)}{\partial \xi_{t\mid s}}
    = \psi\left(\xi_{t\mid s}\right) - \psi\left(\xi_{*\mid s}\right), \quad s \in \boldsymbol{\Lambda} \\[1ex]
    \xi_{*\mid s} = \sum_{t\mid s} \xi_{t\mid s}
\end{gather*}
\end{corollary}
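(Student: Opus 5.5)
The plan is to use the standard exponential-family identity: the gradient of the log-normalizer with respect to the natural parameter equals the expected sufficient statistic. The exponential form \eqref{eq:expo_form} already supplies the three ingredients. The natural parameters are $\eta_{t\mid s}=\xi_{t\mid s}-l(t)$. The sufficient statistics are $u_{t\mid s}=\log(\Theta_t/\Theta_s)$. The log-normalizer is $A(\boldsymbol{\eta})=\log\prod_{s\in\boldsymbol{\Lambda}}B(\boldsymbol{\xi}_s)$, with $h\equiv 1$.

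\textbf{Step 1: the identity.} From normalization, $\int_{\Delta}\exp(\boldsymbol{\eta}^\top \boldsymbol{u}(\boldsymbol{\theta}))\,d\boldsymbol{\theta}=\exp A(\boldsymbol{\eta})$. Differentiating under the integral sign with respect to $\eta_{t\mid s}$ gives
\[
\mathbb{E}[u_{t\mid s}]=\frac{\partial A}{\partial \eta_{t\mid s}}.
\]
This is the property recorded in Appendix~\ref{app:properties_expo}. Differentiation under the integral is justified because the parameter set $\xi_{t\mid s}>0$ is open, and the integrand is dominated locally.

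An alternative route avoids that justification. Under $T^{-1}$, the statistic $\log(\Theta_t/\Theta_s)$ is just $\log\rho_{t\mid s}$. The $\boldsymbol{\rho}_s$ are independent with $\boldsymbol{\rho}_s\sim\mathrm{Dir}(\boldsymbol{\xi}_s)$ by \eqref{eq:pdf_dirichlets}, so the claim reduces to the classical Dirichlet fact $\mathbb{E}[\log\rho_{t\mid s}]=\psi(\xi_{t\mid s})-\psi(\xi_{*\mid s})$. I would mention this as a consistency check.

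\textbf{Step 2: change of variables.} Since $l(t)$ is a constant determined by the tree, $\partial/\partial\eta_{t\mid s}=\partial/\partial\xi_{t\mid s}$. The derivative can therefore be taken in $\boldsymbol{\xi}$.

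\textbf{Step 3: compute the derivative.} The factor $B(\boldsymbol{\xi}_{s'})$ depends on $\xi_{t\mid s}$ only when $s'=s$. This is because each branch parameter belongs to exactly one internal node's block. So the derivative is
\[
\frac{\partial}{\partial\xi_{t\mid s}}\Big[\sum_{t'\mid s}\log\Gamma(\xi_{t'\mid s})-\log\Gamma(\xi_{*\mid s})\Big]=\psi(\xi_{t\mid s})-\psi(\xi_{*\mid s}),
\]
using $\partial\xi_{*\mid s}/\partial\xi_{t\mid s}=1$.

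Note that the product in the statement's middle expression should range over internal nodes $\boldsymbol{\Lambda}$; the $\boldsymbol{\Omega}$ there appears to be a typo. I would state the result with $\boldsymbol{\Lambda}$.

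The only mildly delicate point is Step 1, namely the interchange of differentiation and integration. If needed, I would bypass it entirely with the $\boldsymbol{\rho}$-reduction, which rests on the classical Dirichlet result.
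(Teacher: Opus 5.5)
Your proposal is correct and follows essentially the paper's route. Like the paper, you read off $\boldsymbol{\eta}$, $\mathbf{u}$ and $\log g$ from the exponential form and apply the appendix identity $\nabla_{\boldsymbol{\eta}}\log g(\boldsymbol{\eta})=\mathbb{E}[\mathbf{u}]$; then, since $\boldsymbol{\eta}$ is a constant translate of $\boldsymbol{\xi}$, you differentiate $\log B(\boldsymbol{\xi}_s)$ directly, and you are right that the product in the middle expression should run over $\boldsymbol{\Lambda}$, not $\boldsymbol{\Omega}$.
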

We proceed to show the conjugacy of Dirichlet-Tree to any conventional $|\boldsymbol{\Omega}|$-dimensional multinomial.
Combining the pair of the transforms \eqref{eq:trans_rho_theta} and \eqref{eq:trans_theta_rho}, we can represent each component $\theta_{\omega}$ assigned to a terminal node by node mass and indicator functions assigned to non-terminal nodes and branches as follows:
\begin{equation}
    \label{eq:theta_by_node_mass}
    \theta_{\omega} = \prod_{s \in \boldsymbol{\Lambda}}\prod_{t \mid s}\left(\frac{\Theta_t}{\Theta_s}\right)^{\delta_{t \mid s}(\omega)}, \quad \omega \in \boldsymbol{\Omega}
\end{equation}
Let a Dirichlet-Tree distribution with $\boldsymbol{\Omega}$ be the prior to a $|\boldsymbol{\Omega}|$-dimensional multinomial.
Given a collection of multinomial samples $\boldsymbol{n} = (n_1, \dots, n_K)$ where $K=|\boldsymbol{\Omega}|$, the posterior is given as:
\begin{equation*}
\begin{aligned}
    p(\boldsymbol{\theta}\mid\boldsymbol{n}) 
    &\propto p(\boldsymbol{n}\mid\boldsymbol{\theta})p(\boldsymbol{\theta}) \\[1ex]
    &\propto \left(\prod_{\omega\in\boldsymbol{\Omega}}{\theta_{\omega}}^{n_\omega}\right)\left(\prod_{s\in\boldsymbol{\Lambda}}\prod_{t\mid s}\left(\frac{\Theta_t}{\Theta_s}\right)^{\xi_{t\mid s}-l(t)}\right) \\[1ex]
    &= \exp\left\{\sum_{\omega\in\boldsymbol{\Omega}}n_{\omega}\log\theta_{\omega} + \sum_{s\in \boldsymbol{\Lambda}}\sum_{t\mid s}\left(\xi_{t\mid s}-l(t)\right)\log\left(\frac{\Theta_t}{\Theta_s}\right)\right\} \\[1ex]
    &= \exp\left\{ \sum_{\omega\in\boldsymbol{\Omega}}n_{\omega}\sum_{s\in\boldsymbol{\Lambda}}\sum_{t\mid s}\delta_{t\mid s}(\omega)\log\left(\frac{\Theta_t}{\Theta_s}\right) + \sum_{s\in \boldsymbol{\Lambda}}\sum_{t\mid s}\left(\xi_{t\mid s}-l(t)\right)\log\left(\frac{\Theta_t}{\Theta_s}\right)\right\} \\[1ex]
    &= \exp\left\{ \sum_{s\in \boldsymbol{\Lambda}}\sum_{t\mid s}\left(\xi_{t\mid s} + \sum_{\omega\in\boldsymbol{\Omega}}n_{\omega}\delta_{t\mid s}(\omega) - l(t)\right)\log\left(\frac{\Theta_t}{\Theta_s}\right) \right\}
\end{aligned}
\end{equation*}
Now it is obvious that the posterior remains the same kind of Dirichlet-Tree with a transformed parameter updating.
For convenience, we denote the posterior Dirichlet-Tree as
\begin{equation*}
    \mathrm{DT}(\boldsymbol{\theta}\mid\boldsymbol{\xi}^{\prime}) = 
    \mathrm{DT}(\boldsymbol{\theta}\mid\boldsymbol{\xi}\oplus_{p}\boldsymbol{n}) = 
    \mathrm{DT}(\boldsymbol{\theta}\mid\boldsymbol{\xi}+\mathbf{D}_{p}\boldsymbol{n})
\end{equation*}
where we call $\oplus_p$ the Bayesian addition; and $\mathbf{D}_p$ the Dirichlet selection operator as $\mathbf{D}_p$ selects the components from the partitioned samples to add to the Dirichlet-Tree parameter corresponding to each branch in the tree structure; and
\begin{equation*}
    {\xi_{t\mid s}}^{\prime} = \;\xi_{t\mid s} + \sum_{\omega\in\boldsymbol{\Omega}}n_{\omega}\delta_{t\mid s}(\omega),\quad s\in \boldsymbol{\Lambda}
\end{equation*}
We have the following corollary by combining Corollary~\ref{cor:mean_u} and \eqref{eq:theta_by_node_mass}:
\begin{corollary}[Log-space expectation]\label{mean_log_theta}
\begin{equation*}
    \mathbb{E}_{\mathrm{DT}(\boldsymbol{\theta}\mid\boldsymbol{\xi})}\left[\log \theta_{\omega}\right]
    = \mathbb{E}_{\mathrm{DT}(\boldsymbol{\theta\mid\boldsymbol{\xi}})}\left[\sum_{s\in\boldsymbol{\Lambda}}\sum_{t\mid s}\delta_{t\mid s}(\omega)\log\left(\frac{\Theta_t}{\Theta_s}\right)\right]
    = \sum_{s\in\boldsymbol{\Lambda}}\sum_{t\mid s}\delta_{t\mid s}(\omega)\left(\psi(\xi_{t\mid s})- \psi(\xi_{*\mid s})\right)
\end{equation*}
\end{corollary}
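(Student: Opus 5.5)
The proof has two ingredients: a pathwise identity for $\log\theta_\omega$, and linearity of expectation combined with Corollary~\ref{cor:mean_u}. We work directly from \eqref{eq:theta_by_node_mass}.

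\textbf{Step 1 (pathwise identity).} Take logarithms in \eqref{eq:theta_by_node_mass}. For every $\boldsymbol{\theta}$ in the interior of $\Delta^{|\boldsymbol{\Omega}|-1}$ this gives
\begin{equation*}
\log\theta_\omega=\sum_{s\in\boldsymbol{\Lambda}}\sum_{t\mid s}\delta_{t\mid s}(\omega)\log\left(\frac{\Theta_t}{\Theta_s}\right).
\end{equation*}
The identity holds because, along the unique root-to-$\omega$ path, the ratios $\Theta_t/\Theta_s$ telescope to $\Theta_\omega/\Theta_r=\theta_\omega$, since $\Theta_r=1$. All other branches carry $\delta_{t\mid s}(\omega)=0$. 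The boundary of the simplex has DT-measure zero, so the identity holds almost surely. This yields the first equality of the corollary.

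\textbf{Step 2 (linearity).} The sum is finite and the coefficients $\delta_{t\mid s}(\omega)\in\{0,1\}$ are deterministic. Hence the expectation passes inside the sum:
\begin{equation*}
\mathbb{E}\left[\log\theta_\omega\right]=\sum_{s\in\boldsymbol{\Lambda}}\sum_{t\mid s}\delta_{t\mid s}(\omega)\,\mathbb{E}\left[\log\left(\frac{\Theta_t}{\Theta_s}\right)\right].
\end{equation*}
Each term is finite because $\xi_{t\mid s}>0$.

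\textbf{Step 3 (evaluate each term).} Apply Corollary~\ref{cor:mean_u} to each term, which gives $\psi(\xi_{t\mid s})-\psi(\xi_{*\mid s})$. This is the required formula.

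As a sanity check, or as an alternative route, note that under the change of variables $T^{-1}$ the quantity $\Theta_t/\Theta_s$ equals $\rho_{t\mid s}$. By \eqref{eq:pdf_dirichlets}, $\rho_{t\mid s}$ is a coordinate of an independent $\mathrm{Dir}(\boldsymbol{\xi}_s)$ vector. The standard Dirichlet identity $\mathbb{E}[\log\rho_{t\mid s}]=\psi(\xi_{t\mid s})-\psi(\xi_{*\mid s})$ then confirms Step 3.

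\textbf{Main obstacle.} There is no real obstacle. The only point needing care is justifying the interchange of expectation and sum, together with integrability of $\log\theta_\omega$. Finiteness of the sum and of each digamma term settles both.
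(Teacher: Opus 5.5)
Your proposal is correct and follows the paper's own route. The paper likewise takes logarithms in \eqref{eq:theta_by_node_mass} and then applies linearity of expectation with Corollary~\ref{cor:mean_u} term by term; your remarks on telescoping, the measure-zero boundary, and integrability just make explicit what the paper leaves implicit.
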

Intuitively, the expectation of $\log\theta_{\omega}$ is equal to the sum of $\mathbb{E}_{\mathrm{DT}(\boldsymbol{\theta}\mid\boldsymbol{\xi})}\left[u_{t\mid s}\right]$ corresponding to all the branches that leads to $\omega$.

\subsection{Bayesian operator}

In this section, we propose several concepts based on the exponential form and conjugacy of Dirichlet-Tree to further assist the derivations of two approximate Bayesian inference algorithms introduced in the following sections.
In the Bayesian inference of generative graphical models like Latent Dirichlet Allocation \citep{blei:03}, it is common to have a substantial number of parameter updating processes.
Bayesian operator is proposed to represent this kind of processes in a compact way.
We first give the formal definition of Bayesian operator:
\begin{definition}[Bayesian operator]\label{def:bayes_operator}
Let $\mathcal{T}$ be the statistical manifold of Dirichlet-Tree.
In conjugacy to multinomial, the Bayesian operator for $\forall p \in \mathcal{T}$ transforms the $p$ as a prior to a posterior $p^{\prime}\in\mathcal{T}$ given the multinomial observation $\boldsymbol{n}$:
\begin{equation*}
    p^{\prime} = \mathbf{B}(\boldsymbol{n})\;p
\end{equation*}
\end{definition}
It is easy to derive:
\begin{equation*}
    \mathbf{B}(\boldsymbol{n}) = \left(\frac{g(\boldsymbol{\eta}(\boldsymbol{\xi}+\mathbf{D}_{p}\boldsymbol{n}))}{g(\boldsymbol{\eta}(\boldsymbol{\xi}))}\right)^{-1} \exp\left(\boldsymbol{n}^{\top}\log\boldsymbol{\theta}\right)
\end{equation*}
The Bayesian operator exhibits a closure nature and provides a new expression of conjugacy.
The mathematical expression in our following derivation of Expectation Propagation is largely simplified with the Bayesian operator.
Given the Bayesian operator, the relationship between the Dirichlet selection operator and the Bayesian operator is built, which is our central theorem:
\begin{theorem}[The central theorem]
    \begin{equation*}
        p(\boldsymbol{\theta}\mid\boldsymbol{\xi}+\mathbf{D}_p \boldsymbol{n}) = \mathbf{B}(\boldsymbol{n})p(\boldsymbol{\theta}\mid \boldsymbol{\xi})
    \end{equation*}
\end{theorem}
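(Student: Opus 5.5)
The plan is to verify the identity pointwise on the simplex, using the exponential form \eqref{eq:expo_form} of the Dirichlet-Tree and the explicit expression for $\mathbf{B}(\boldsymbol{n})$ given after Definition~\ref{def:bayes_operator}. Write the prior as
\[
p(\boldsymbol{\theta}\mid\boldsymbol{\xi})=\frac{1}{g(\boldsymbol{\eta}(\boldsymbol{\xi}))}\exp\Bigl(\sum_{s\in\boldsymbol{\Lambda}}\sum_{t\mid s}(\xi_{t\mid s}-l(t))\,u_{t\mid s}(\boldsymbol{\theta})\Bigr).
\]
Applying $\mathbf{B}(\boldsymbol{n})$ multiplies this density by $\exp(\boldsymbol{n}^{\top}\log\boldsymbol{\theta})$ and by $g(\boldsymbol{\eta}(\boldsymbol{\xi}))/g(\boldsymbol{\eta}(\boldsymbol{\xi}+\mathbf{D}_p\boldsymbol{n}))$.

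The first key step rewrites $\boldsymbol{n}^{\top}\log\boldsymbol{\theta}$ in the sufficient statistics. By \eqref{eq:theta_by_node_mass}, $\log\theta_\omega=\sum_{s\in\boldsymbol{\Lambda}}\sum_{t\mid s}\delta_{t\mid s}(\omega)\,u_{t\mid s}(\boldsymbol{\theta})$. Exchanging the order of summation gives
\[
\boldsymbol{n}^{\top}\log\boldsymbol{\theta}=\sum_{s}\sum_{t\mid s}\Bigl(\sum_\omega n_\omega\delta_{t\mid s}(\omega)\Bigr)u_{t\mid s}(\boldsymbol{\theta})=\sum_s\sum_{t\mid s}(\mathbf{D}_p\boldsymbol{n})_{t\mid s}\,u_{t\mid s}(\boldsymbol{\theta}),
\]
which is exactly the definition of the Dirichlet selection operator. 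This is the same computation as in the conjugacy derivation above, so I can cite it directly.

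The second step adds exponents. The product of the prior and $\exp(\boldsymbol{n}^\top\log\boldsymbol{\theta})$ is $\exp\bigl(\sum_s\sum_{t\mid s}(\xi'_{t\mid s}-l(t))u_{t\mid s}\bigr)/g(\boldsymbol{\eta}(\boldsymbol{\xi}))$, with $\boldsymbol{\xi}'=\boldsymbol{\xi}+\mathbf{D}_p\boldsymbol{n}$. It is essential here that $l(t)$ depends only on the tree and not on $\boldsymbol{\xi}$, so the natural parameter shifts by precisely $\mathbf{D}_p\boldsymbol{n}$: $\boldsymbol{\eta}(\boldsymbol{\xi}')=\boldsymbol{\eta}(\boldsymbol{\xi})+\mathbf{D}_p\boldsymbol{n}$. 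Multiplying by the normalizer ratio in $\mathbf{B}(\boldsymbol{n})$ then replaces $1/g(\boldsymbol{\eta}(\boldsymbol{\xi}))$ with $1/g(\boldsymbol{\eta}(\boldsymbol{\xi}'))$. The result is the exponential form of $p(\boldsymbol{\theta}\mid\boldsymbol{\xi}')$.

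The main obstacle is the normalization, in case $\mathbf{B}(\boldsymbol{n})$ is read abstractly as in Definition~\ref{def:bayes_operator}, meaning the map sending a prior to its normalized posterior. Then I must check that the stated factor is the correct normalizer, that is, $\int\exp(\boldsymbol{n}^\top\log\boldsymbol{\theta})\,p(\boldsymbol{\theta}\mid\boldsymbol{\xi})\,d\boldsymbol{\theta}=g(\boldsymbol{\eta}(\boldsymbol{\xi}'))/g(\boldsymbol{\eta}(\boldsymbol{\xi}))$. This follows because the integrand, after the second step, is $g(\boldsymbol{\eta}(\boldsymbol{\xi}'))/g(\boldsymbol{\eta}(\boldsymbol{\xi}))$ times a Dirichlet-Tree density with parameter $\boldsymbol{\xi}'$. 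That density integrates to one because $\boldsymbol{\xi}'$ has positive entries, given $n_\omega\ge 0$ and $\boldsymbol{\xi}>0$. It equals the product of $\prod_s B(\boldsymbol{\xi}'_s)$-normalized independent Dirichlets under the change of variables of the node form \eqref{eq:node_form}. With this, both readings of $\mathbf{B}(\boldsymbol{n})$ agree and the theorem follows.
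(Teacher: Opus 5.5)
Your argument is correct and takes essentially the paper's own route. The paper obtains the theorem from its conjugacy computation (rewriting $\boldsymbol{n}^{\top}\log\boldsymbol{\theta}$ via \eqref{eq:theta_by_node_mass} and absorbing it into the natural parameters $\xi_{t\mid s}-l(t)$) combined with the explicit normalizer ratio defining $\mathbf{B}(\boldsymbol{n})$. Your check that this ratio really is the normalizer, which reconciles the abstract and explicit readings of $\mathbf{B}(\boldsymbol{n})$, is a point the paper leaves implicit and only recovers afterwards in its expectation-operator corollary.
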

Even though the multinomial observation $\boldsymbol{n}$ is non-negative integers, $\boldsymbol{n}$ can be naturally extended to any real numbers in Bayesian operator that makes the posterior meaningful.
The Bayesian operator satisfied the following properties:
\begin{itemize}[leftmargin=*]
    \item \textbf{Additivity and Commutativity}
    \begin{equation*}
            \mathbf{B}(\boldsymbol{n}_1 + \boldsymbol{n}_2) = \mathbf{B}(\boldsymbol{n}_1) \mathbf{B}(\boldsymbol{n}_2) = \mathbf{B}(\boldsymbol{n}_2) \mathbf{B}(\boldsymbol{n}_1)
    \end{equation*} 
    \textbf{Proof}
    \begin{equation*}
        \mathrm{RHS}
        = \left( \frac{g\left(\boldsymbol{\eta}\left( \boldsymbol{\xi}+\mathbf{D}_p\left( \boldsymbol{n}_1 + \boldsymbol{n}_2 \right) \right)\right)}{g\left( \boldsymbol{\eta}(\boldsymbol{\xi}) \right)} \right)^{-1} \exp \left( (\boldsymbol{n}_2 + \boldsymbol{n}_1)^{\top} \log \boldsymbol{\theta} \right)
        = \mathrm{LHS} 
    \end{equation*}

    \item \textbf{Inverse operator}
    \begin{equation*}
        \mathbf{B}^{-1}(\boldsymbol{n}) = \mathbf{B}(-\boldsymbol{n})
    \end{equation*}

    \item \textbf{Base posteriors}
    
    For $\forall p(\boldsymbol{\theta}\mid\boldsymbol{\xi}) \in \mathcal{T}$, there exists a group of base posteriors $\{p(\boldsymbol{\theta}|\boldsymbol{\xi}+\mathbf{D}_p \boldsymbol{1}_{(k)})\}_K$:
    \begin{equation*}
    p(\boldsymbol{\theta}|\boldsymbol{\xi}+\mathbf{D}_p \boldsymbol{1}_{(k)}) = \mathbf{B}(\boldsymbol{1}_{(k)}) p(\boldsymbol{\theta}|\boldsymbol{\xi}),\quad k=1,\dots,K
    \end{equation*}
    where $K=|\boldsymbol{\Omega}|$ and $\boldsymbol{1}_{(k)}$ is the $K$-dimensional binary vector where $k^{th}$ element equals 1 and 0 otherwise.
    We also use $p(\boldsymbol{\theta}|\boldsymbol{\xi}+\mathbf{D}_p\boldsymbol{1}_{(\boldsymbol{k})})$ to represent the group of derived distributions $\{p(\boldsymbol{\theta}|\boldsymbol{\xi}+\mathbf{D}_p \boldsymbol{1}_{(k)})\}_K$.
\end{itemize}
\begin{corollary}[Expectation operator]
    \begin{equation*}
    \begin{aligned}
        \mathbb{E}_{p(\boldsymbol{\theta}|\boldsymbol{\xi})} \left[ \prod_{k=1}^{K} {\theta_k}^{n_k} \right]
        &= \int_{\Delta^{K-1}} \exp\left(\boldsymbol{n}^{\top}\log\boldsymbol{\theta}\right) \; p(\boldsymbol{\theta}|\boldsymbol{\xi}) \,\mathrm{d} \boldsymbol{\theta}  \\[1ex]
        &= \int_{\Delta^{K-1}} \left[ \frac{g(\boldsymbol{\eta}(\boldsymbol{\xi} + \mathbf{D}_p \boldsymbol{n}))}{g(\boldsymbol{\eta}(\boldsymbol{\xi}))} \mathbf{B}(\boldsymbol{n}) \right] p(\boldsymbol{\theta}|\boldsymbol{\xi}) \,\mathrm{d} \boldsymbol{\theta} \\[1ex]
        &= \frac{g(\boldsymbol{\eta}(\boldsymbol{\xi} + \mathbf{D}_p \boldsymbol{n}))}{g(\boldsymbol{\eta}(\boldsymbol{\xi}))} \int_{\Delta^{K-1}} p(\boldsymbol{\theta}|\boldsymbol{\xi} + \mathbf{D}_p \boldsymbol{n}) \,\mathrm{d}\boldsymbol{\theta} \\[1ex]
        &= \frac{g(\boldsymbol{\eta}(\boldsymbol{\xi} + \mathbf{D}_p \boldsymbol{n}))}{g(\boldsymbol{\eta}(\boldsymbol{\xi}))}  \\[1ex]
        &= \prod_{s\in\boldsymbol{\Lambda}}\frac{B\left(\boldsymbol{\xi}_s+\mathbf{D}_{p}\boldsymbol{n}\right)}{B\left(\boldsymbol{\xi}_s\right)} \\[1ex]
        &= \prod_{s\in\boldsymbol{\Lambda}} \left(\frac{\prod_{t\mid s}\Gamma\left({\xi_{t\mid s}}^{\prime}\right)}{\Gamma\left(\sum_{t\mid s}{\xi_{t\mid s}}^{\prime}\right)}\right) \left(\frac{\Gamma\left(\sum_{t\mid s}\xi_{t\mid s}\right)}{\prod_{t\mid s}\Gamma\left(\xi_{t\mid s}\right)}\right)  \\[1ex]
        &= \prod_{s\in\boldsymbol{\Lambda}}\frac{\Gamma\left(\sum_{t\mid s}\xi_{t\mid s}\right)}{\Gamma\left(\sum_{t\mid s}\xi_{t\mid s} + \sum_{t\mid s}\sum_{\omega\in\boldsymbol{\Omega}} n_{\omega}\delta_{t\mid s}(\omega)\right)} \prod_{t\mid s} \frac{\Gamma\left(\xi_{t\mid s}+\sum_{\omega\in\boldsymbol{\Omega}}n_{\omega}\delta_{t\mid s}(\omega)\right)}{\Gamma\left(\xi_{t\mid s}\right)} \\[1ex]
    \end{aligned}
    \end{equation*}
    In particular,
    \begin{equation*}
    \begin{aligned}
        \mathbb{E}_{p(\boldsymbol{\theta}|\boldsymbol{\xi})} \left[\theta_k\right] 
        &= \frac{g(\boldsymbol{\eta}(\boldsymbol{\xi} + \mathbf{D}_p \boldsymbol{1}_{(k)}))}{g(\boldsymbol{\eta}(\boldsymbol{\xi}))}  \\[1ex]
        &= \prod_{s\in\boldsymbol{\Lambda}}\frac{\Gamma\left(\sum_{t\mid s}\xi_{t\mid s}\right)}{\Gamma\left(\sum_{t\mid s}\xi_{t\mid s} + \sum_{t\mid s}\sum_{\omega\in\boldsymbol{\Omega}} \boldsymbol{1}_{(k)}\delta_{t\mid s}(\omega)\right)} \prod_{t\mid s} \frac{\Gamma\left(\xi_{t\mid s}+\sum_{\omega\in\boldsymbol{\Omega}}\boldsymbol{1}_{(k)}\delta_{t\mid s}(\omega)\right)}{\Gamma\left(\xi_{t\mid s}\right)} \\[1ex]
        &= \prod_{s\in\boldsymbol{\Lambda}} \frac{\Gamma\left(\sum_{t\mid s}\xi_{t\mid s}\right)}{\Gamma\left(\sum_{t\mid s}\xi_{t\mid s} + \sum_{t\mid s}\delta_{t\mid s}(k)\right)} \prod_{t\mid s} \left(\frac{\Gamma\left(\xi_{t\mid s}+\delta_{t\mid s}(k)\right)}{\Gamma\left(\xi_{t\mid s}\right)}\right)  \\[1ex]
        &= \prod_{s\in\boldsymbol{\Lambda}} \left(\frac{1}{\sum_{t\mid s}\xi_{t\mid s}}\right)^{\sum_{t\mid s}\delta_{t\mid s}(k)} \prod_{t\mid s} {\xi_{t\mid s}}^{\delta_{t\mid s}(k)},
        \quad k=1,\dots,K;\;s \in \boldsymbol{\Lambda}
    \end{aligned}
    \end{equation*}
\end{corollary}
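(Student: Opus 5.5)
The plan has three parts: express the monomial through the Bayesian operator, integrate against the prior, and then specialize to $\boldsymbol{n}=\boldsymbol{1}_{(k)}$.

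\textbf{Rewriting the integrand.} By the definition of the Bayesian operator,
\begin{equation*}
\mathbf{B}(\boldsymbol{n}) = \left(\frac{g(\boldsymbol{\eta}(\boldsymbol{\xi}+\mathbf{D}_p\boldsymbol{n}))}{g(\boldsymbol{\eta}(\boldsymbol{\xi}))}\right)^{-1}\exp(\boldsymbol{n}^{\top}\log\boldsymbol{\theta}).
\end{equation*}
Solving for $\exp(\boldsymbol{n}^{\top}\log\boldsymbol{\theta})=\prod_k\theta_k^{n_k}$ writes the monomial as the normalizer ratio times $\mathbf{B}(\boldsymbol{n})$. The ratio is constant in $\boldsymbol{\theta}$, so it comes out of the integral.

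\textbf{Integrating.} The central theorem identifies $\mathbf{B}(\boldsymbol{n})p(\boldsymbol{\theta}\mid\boldsymbol{\xi})$ with the posterior density $p(\boldsymbol{\theta}\mid\boldsymbol{\xi}+\mathbf{D}_p\boldsymbol{n})$. This is itself verified from the conjugacy computation above: the exponent of the posterior equals the prior exponent plus $\sum_\omega n_\omega\log\theta_\omega$, via \eqref{eq:theta_by_node_mass}. A proper Dirichlet-Tree density integrates to one over $\Delta^{K-1}$, provided the updated parameters stay positive, which holds for $n_\omega>-\xi$ and in particular for nonnegative counts. The expectation therefore reduces to $g(\boldsymbol{\eta}(\boldsymbol{\xi}+\mathbf{D}_p\boldsymbol{n}))/g(\boldsymbol{\eta}(\boldsymbol{\xi}))$.

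\textbf{Expanding the normalizer.} Substituting $g=\prod_{s\in\boldsymbol{\Lambda}}B(\boldsymbol{\xi}_s)$ and the updated parameters $\xi'_{t\mid s}=\xi_{t\mid s}+\sum_\omega n_\omega\delta_{t\mid s}(\omega)$ into the multivariate beta function gives the displayed Gamma-ratio product node by node.

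\textbf{The special case $\theta_k$.} Take $\boldsymbol{n}=\boldsymbol{1}_{(k)}$. Then $\sum_\omega n_\omega\delta_{t\mid s}(\omega)=\delta_{t\mid s}(k)\in\{0,1\}$. For each $s$, at most one branch leads to $k$, so $\sum_{t\mid s}\delta_{t\mid s}(k)\in\{0,1\}$. Using $\Gamma(x+1)/\Gamma(x)=x$, each factor becomes $\xi_{t\mid s}^{\delta_{t\mid s}(k)}$ for the numerator branches and $(\xi_{*\mid s})^{-\sum_{t\mid s}\delta_{t\mid s}(k)}$ for the node total. When the exponent is $0$ the factor is trivially $1$, which justifies writing these as powers.

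\textbf{Main obstacle.} The only delicate point is that the exponent-$\{0,1\}$ bookkeeping uses uniqueness of the path from root to leaf $k$. This is immediate from the tree structure: $\delta_{t\mid s}(k)=1$ for exactly one child $t$ of each ancestor $s$ of $k$, and for no child of any non-ancestor.
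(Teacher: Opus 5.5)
Your proposal is correct and follows the same chain of equalities as the paper. You rewrite $\prod_k\theta_k^{n_k}$ as the normalizer ratio times $\mathbf{B}(\boldsymbol{n})$, use the central theorem so the updated Dirichlet-Tree integrates to one, expand $g=\prod_{s\in\boldsymbol{\Lambda}}B(\boldsymbol{\xi}_s)$, and specialize to $\boldsymbol{1}_{(k)}$ with $\Gamma(x+1)=x\Gamma(x)$; your path-uniqueness remark makes explicit the $\{0,1\}$ exponent bookkeeping the paper leaves implicit.

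One small precision concerns the real-valued extension. The positivity condition must hold branch-wise, $\xi_{t\mid s}+\sum_{\omega\in\boldsymbol{\Omega}}n_\omega\delta_{t\mid s}(\omega)>0$ for every branch, not as the leaf-wise ``$n_\omega>-\xi$''. This is automatic for the nonnegative counts in the statement.
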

The derivation of the corollary uses the Gamma function's property:
\begin{equation*}
    \Gamma(x+1) = x\Gamma(x) 
\end{equation*}
Intuitively, the expectation of each component $\theta_k$ of Dirichlet-Tree is equal to the product of parameter proportions corresponding to all branches that lead to the component.

\subsection{Derived Distributions of Dirichlet-Tree}

It is useful to define the group of derived distributions to assist the Bayesian inference in the following sections.
\begin{definition}[Derived distributions of Dirichlet-Tree]
    Let $p(\boldsymbol{\theta}|\boldsymbol{\xi}) \in \mathcal{T}$ be a Dirichlet-Tree.
    The corresponding group of derived distributions $\{ p_{t\mid s}(\boldsymbol{\theta}) \}_{|\boldsymbol{\Lambda}|}$ is defined as:
    \begin{equation*}
        p_{t\mid s}(\boldsymbol{\theta})
        = { \mathbb{E}_{p(\boldsymbol{\theta}|\boldsymbol{\xi})} \left[ u_{t\mid s}(\boldsymbol{\theta}) \right] }^{-1} u_{t\mid s}(\boldsymbol{\theta}) p(\boldsymbol{\theta}|\boldsymbol{\xi}),\quad s \in \boldsymbol{\Lambda}
    \end{equation*}
\end{definition}
For notational simplicity, and when no ambiguity arises, we replace the label $t\mid s$ for each branch with $d$, and $|\boldsymbol{\Lambda}|=D$.
We also denote $|\boldsymbol{\Omega}| = K$.
\begin{corollary}[Expectation matrix of derived distributions]
    By stacking the expectation of each derived distribution, the ($K \times D$)-dimensional expectation matrix is derived, and each column gives:
    \begin{equation}
    \begin{aligned}
        \mathbb{E}_{p_d(\boldsymbol{\theta})} \left[ \boldsymbol{\theta} \right]
        &= \left\{\int_{\Delta^{K-1}} \left( \frac{g(\boldsymbol{\eta}(\boldsymbol{\xi} + \mathbf{D}_p \boldsymbol{1}_{(k)}))}{g(\boldsymbol{\eta}(\boldsymbol{\xi}))} \mathbf{B}(\boldsymbol{1}_{(k)}) \right) {\mathbb{E}_{p(\boldsymbol{\theta}|\boldsymbol{\xi})} \left[ u_d(\boldsymbol{\theta}) \right]}^{-1} u_d(\boldsymbol{\theta}) p(\boldsymbol{\theta}|\boldsymbol{\xi}) \,\mathrm{d} \boldsymbol{\theta}\right\}_{k=1}^{K} \\[1ex]
        &= \left\{\frac{g(\boldsymbol{\eta}(\boldsymbol{\xi} + \mathbf{D}_p \boldsymbol{1}_{(k)}))}{g(\boldsymbol{\eta}(\boldsymbol{\xi}))} {\mathbb{E}_{p(\boldsymbol{\theta}|\boldsymbol{\xi})} \left[ u_d(\boldsymbol{\theta}) \right]}^{-1} \int_{\Delta^{K-1}} u_d(\boldsymbol{\theta}) p(\boldsymbol{\theta}|\boldsymbol{\xi} + \mathbf{D}_p \boldsymbol{1}_{(k)}) \,\mathrm{d} \boldsymbol{\theta}\right\}_{k=1}^{K} \\[1ex]
        &= \frac{\mathbb{E}_{p(\boldsymbol{\theta}|\boldsymbol{\xi})} \left[ \boldsymbol{\theta} \right] \odot_k \mathbb{E}_{p(\boldsymbol{\theta}|\boldsymbol{\xi} + \mathbf{D}_p \boldsymbol{1}_{(\boldsymbol{k})})} \left[ u_d(\boldsymbol{\theta}) \right]}{\mathbb{E}_{p(\boldsymbol{\theta}|\boldsymbol{\xi})} \left[ u_d(\boldsymbol{\theta}) \right]}, \quad d = 1,\dots,D
    \end{aligned}
    \label{eq:expectation_tilted_distributions}
    \end{equation}
    where $D=|\boldsymbol{\Lambda}|$ and $\odot_k$ is the Hadamard (element-wise) product with respect to the subscript $k$.
\end{corollary}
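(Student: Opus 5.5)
The plan is to compute the $k$-th component of $\mathbb{E}_{p_d}[\boldsymbol{\theta}]$ directly from the definition of the derived distribution $p_d$. We then rewrite the factor $\theta_k$ with the Bayesian operator, using the central theorem, so that $p(\boldsymbol{\theta}\mid\boldsymbol{\xi})$ is absorbed into the base posterior $p(\boldsymbol{\theta}\mid\boldsymbol{\xi}+\mathbf{D}_p\boldsymbol{1}_{(k)})$. Concretely, fix a branch $d$ and a leaf $k$. By definition,
\begin{equation*}
\mathbb{E}_{p_d}[\theta_k]=\mathbb{E}_{p(\boldsymbol{\theta}\mid\boldsymbol{\xi})}[u_d]^{-1}\int_{\Delta^{K-1}}\theta_k\,u_d(\boldsymbol{\theta})\,p(\boldsymbol{\theta}\mid\boldsymbol{\xi})\,\mathrm{d}\boldsymbol{\theta}.
\end{equation*}
The scalar $\mathbb{E}_{p(\boldsymbol{\theta}\mid\boldsymbol{\xi})}[u_d]$ is finite and given by Corollary~\ref{cor:mean_u}. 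It is strictly negative, since $\psi(\xi_{t\mid s})<\psi(\xi_{*\mid s})$ whenever $c(s)\ge 2$, so the normalization is well defined.

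The key step is the identity $\theta_k=\exp(\boldsymbol{1}_{(k)}^{\top}\log\boldsymbol{\theta})$. Combined with the explicit form of $\mathbf{B}(\boldsymbol{1}_{(k)})$, it gives
\begin{equation*}
\theta_k\,p(\boldsymbol{\theta}\mid\boldsymbol{\xi})=\frac{g(\boldsymbol{\eta}(\boldsymbol{\xi}+\mathbf{D}_p\boldsymbol{1}_{(k)}))}{g(\boldsymbol{\eta}(\boldsymbol{\xi}))}\,\mathbf{B}(\boldsymbol{1}_{(k)})\,p(\boldsymbol{\theta}\mid\boldsymbol{\xi}).
\end{equation*}
This is exactly the manipulation used in the proof of the expectation operator corollary. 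The central theorem then replaces $\mathbf{B}(\boldsymbol{1}_{(k)})p(\boldsymbol{\theta}\mid\boldsymbol{\xi})$ by the base posterior $p(\boldsymbol{\theta}\mid\boldsymbol{\xi}+\mathbf{D}_p\boldsymbol{1}_{(k)})$. Because $u_d$ does not depend on $k$, it stays inside the integral, and pulling out the constants gives
\begin{equation*}
\mathbb{E}_{p_d}[\theta_k]=\frac{g(\boldsymbol{\eta}(\boldsymbol{\xi}+\mathbf{D}_p\boldsymbol{1}_{(k)}))}{g(\boldsymbol{\eta}(\boldsymbol{\xi}))}\cdot\frac{\mathbb{E}_{p(\boldsymbol{\theta}\mid\boldsymbol{\xi}+\mathbf{D}_p\boldsymbol{1}_{(k)})}[u_d]}{\mathbb{E}_{p(\boldsymbol{\theta}\mid\boldsymbol{\xi})}[u_d]}.
\end{equation*}

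By the ``in particular'' part of the expectation operator corollary, the first ratio equals $\mathbb{E}_{p(\boldsymbol{\theta}\mid\boldsymbol{\xi})}[\theta_k]$. Stacking over $k=1,\dots,K$ turns the product of this factor with the $k$-dependent numerator into the Hadamard product $\odot_k$ with the vector $\mathbb{E}_{p(\boldsymbol{\theta}\mid\boldsymbol{\xi}+\mathbf{D}_p\boldsymbol{1}_{(\boldsymbol{k})})}[u_d]$, while the denominator is a common scalar. This gives \eqref{eq:expectation_tilted_distributions} column by column, for $d=1,\dots,D$. For computation, the numerator can be evaluated in closed form with Corollary~\ref{cor:mean_u} applied to the updated parameters $\boldsymbol{\xi}+\mathbf{D}_p\boldsymbol{1}_{(k)}$, which gives the digamma expressions.

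The only delicate point is justifying the use of the central theorem under an integral that also contains the unbounded function $u_d=\log(\Theta_t/\Theta_s)$. I would handle it by noting that the identity $\theta_k p(\boldsymbol{\theta}\mid\boldsymbol{\xi})=\frac{g'}{g}\,p(\boldsymbol{\theta}\mid\boldsymbol{\xi}+\mathbf{D}_p\boldsymbol{1}_{(k)})$, with $g'=g(\boldsymbol{\eta}(\boldsymbol{\xi}+\mathbf{D}_p\boldsymbol{1}_{(k)}))$, holds pointwise on the simplex. Integrability of $u_d$ under any Dirichlet-Tree follows from the exponential-family property that sufficient statistics have finite mean, which is Corollary~\ref{cor:mean_u}. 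So no exchange-of-limits issue arises; everything else is bookkeeping of constants.
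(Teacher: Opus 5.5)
Your proposal is correct and follows essentially the same route as the paper. You write $\theta_k = \frac{g(\boldsymbol{\eta}(\boldsymbol{\xi}+\mathbf{D}_p\boldsymbol{1}_{(k)}))}{g(\boldsymbol{\eta}(\boldsymbol{\xi}))}\mathbf{B}(\boldsymbol{1}_{(k)})$, absorb $\mathbf{B}(\boldsymbol{1}_{(k)})p(\boldsymbol{\theta}\mid\boldsymbol{\xi})$ into the base posterior via the central theorem, pull out the constant ratio, and identify it with $\mathbb{E}_{p(\boldsymbol{\theta}\mid\boldsymbol{\xi})}[\theta_k]$; your remarks that $\mathbb{E}_{p(\boldsymbol{\theta}\mid\boldsymbol{\xi})}[u_d]<0$ and that $u_d$ is integrable are sound details the paper leaves implicit.
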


\section{Latent Dirichlet-Tree Allocation}

In this section, we introduce Latent Dirichlet-Tree Allocation (LDTA), which generalizes the Latent Dirichlet Allocation model of \citet{blei:03} by replacing the Dirichlet prior with an arbitrary Dirichlet-Tree.
Similar to LDA, LDTA is a generative probabilistic graphical model.
For clarity, we adopt column-major order in the mathematical formulation of the model, whereas the implementation uses row-major order to adhere to standard programming conventions.

\subsection{The Generative Process}

Two sets with known cardinalities $V$ and $K$ are postulated before constructing our model: (i) the collection of $V$ words (the vocabulary, or visual words in the case of images):
\begin{equation*}
    \mathbb{W} = \{ w_1, \dots, w_v, \dots, w_V \}
\end{equation*}
 and (ii) the collection of $K$ topics:
\begin{equation*}
    \mathbb{Z} = \{ z_1, \dots, z_k, \dots, z_K \}
\end{equation*}
The observed data set is a sequence of $M$ documents (the corpus, or images): 
\begin{equation*}
    \mathcal{D} = ( {\boldsymbol{w}}_1, \dots, {\boldsymbol{w}}_m, \dots , {\boldsymbol{w}}_M )
\end{equation*}
where each document is a sequence of $N_m$ words:
\begin{equation*}
    {\boldsymbol{w}}_m = (w_{1m}, \dots , w_{nm}, \dots , w_{N_m m}), \quad w_{nm}\in\mathbb{W}
\end{equation*}
In LDTA, a latent topic is assumed with respect to each observed word:
\begin{equation*}
    {\boldsymbol{z}}_m = (z_{1m}, \dots , z_{nm}, \dots , z_{N_m m}), \quad z_{nm}\in\mathbb{Z}
\end{equation*}
Sometimes, we ignore the subscript $m$ to simply keep our notation uncluttered when we refer to any document in $\mathcal{D}$.
According to the ex-changeability assumption \citep{blei:03}, we only care about the occurring numbers of the words that appeared in a document, instead of the order of the word sequence.
Therefore, we indicate each word as a binary vector $\boldsymbol{1}_{nm}^{(v)}\in\{0, 1\}^V$ defined over  $\mathbb{W}$, where the $v^{th}$ element equals 1 if the $v^{th}$ word is chosen, and 0 otherwise.
Consequently, we obtain the word counting vector of a document by summing up all the word indicator vector:
\begin{equation*}
    \boldsymbol{n}_{m} = (n_{1m}, \dots, n_{vm}, \dots n_{Vm})^{\top} = \sum_{i=1}^{N_{m}}\boldsymbol{1}_{im}^{(v)}
\end{equation*}
Taking the latent topics into account, the binary indicator vector for each word is lifted to a binary matrix
$\boldsymbol{1}_{nm}^{(vk)} \in \{0,1\}^{V \times K}$,
defined over $\mathbb{W} \times \mathbb{Z}$, to represent word--topic co-occurrence.
Similarly, we obtain the word--topic counting matrix for a document by summing up all the word--topic indicator matrices:
\begin{equation*}
    \boldsymbol{n}_{m}
    =
    \{n_{vk}\}_{(V \times K)}
    =
          \left(
          \begin{array}{ccccc}
          n_{11} & \cdots & n_{1k} & \cdots & n_{1K}\\
          \vdots &        & \vdots &        & \vdots\\
          n_{v1} & \cdots & n_{vk} & \cdots & n_{vK}\\
          \vdots &        & \vdots &        & \vdots\\
          n_{V1} & \cdots & n_{Vk} & \cdots & n_{VK}
          \end{array}
          \right)
    = \sum_{i=1}^{N_{m}}\boldsymbol{1}_{im}^{(vk)}
\end{equation*}
Moreover, the document--word--topic counting tensor is constructed by stacking counting matrices of all documents in the data set (Figure~\ref{fig:mvk_tensor}).
\begin{figure}[t]
    \centering
    \includegraphics[width=0.35\linewidth]{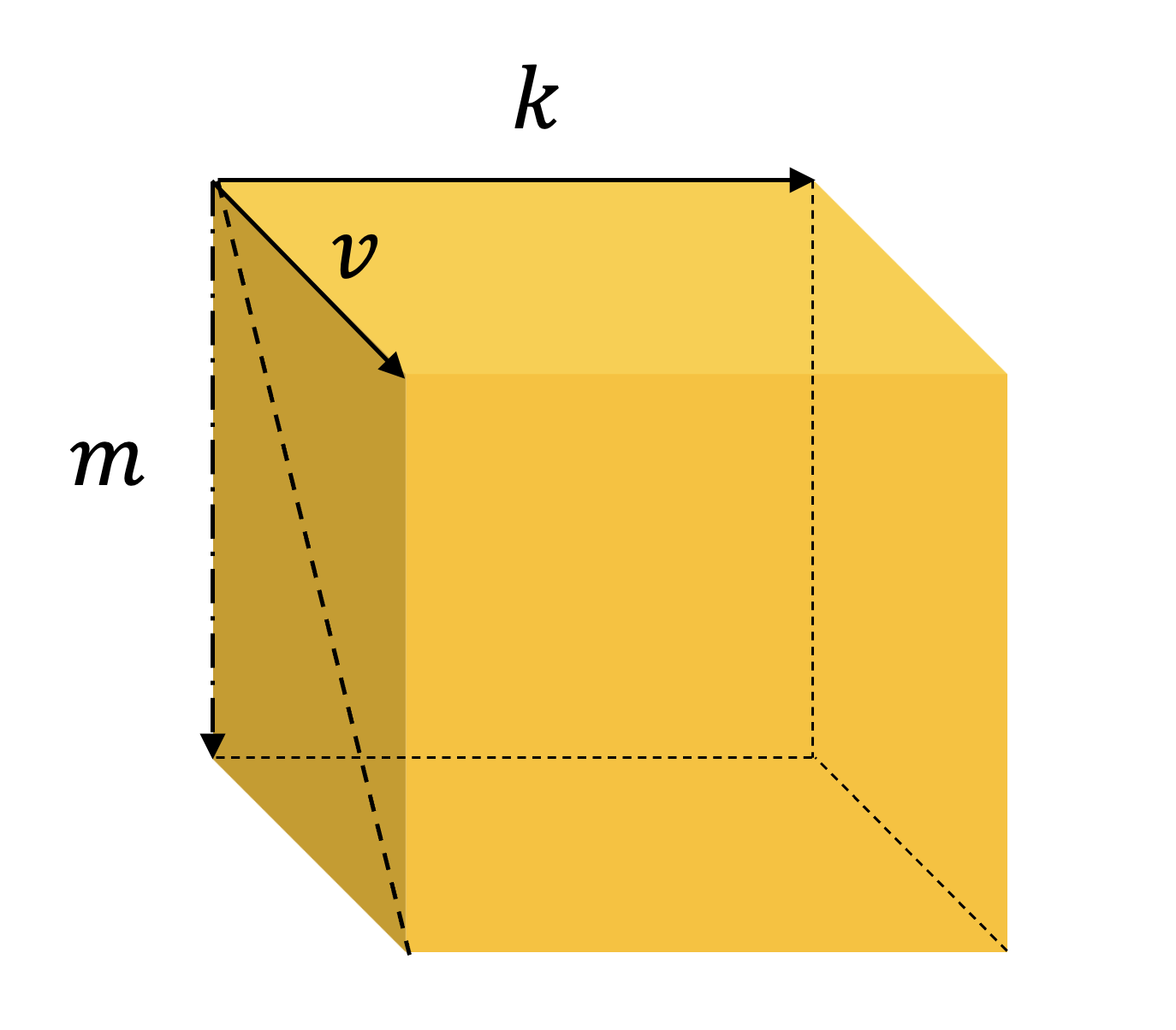}
    \caption{The document--topic--word counting tensor. }
    \label{fig:mvk_tensor}
\end{figure}
Similar to LDA, LDTA is probabilistic graphic model.
The generative process for each document is described as follows:
\begin{enumerate}
    \item Draw $N \sim \operatorname{Poisson}(\mu)$.
    \item Draw $\boldsymbol{\theta} \sim \mathrm{DT}(\boldsymbol{\xi})$.
    \item For each word $w_n$ for $n=1,\dots,N$:
    \begin{enumerate}
        \item Draw $z_n \sim \operatorname{Mult}(\boldsymbol{\theta})$.
        \item Draw $w_n \sim \operatorname{Mult}(\boldsymbol{\varphi}_{z_n})$.
    \end{enumerate}
\end{enumerate}
And the whole generation process produced our data set by repeating above procedures for $M$ times (Figure~\ref{fig:prob_graph}).
Among the parameters, $\boldsymbol{\xi}$ is the hyper-parameter for the Dirichlet-Tree prior;
$\boldsymbol{\theta}\in\Delta^{K-1}$ is the topic proportion with respect to $\mathbb{Z}$, and we denote the square diagonal form of each document's proportion as $M(\boldsymbol{\theta}_m)$;
$\boldsymbol{\varphi}$ is a word--topic conditional probability matrix defined as the conditional probability of generating $v^{th}$ word given the $k^{th}$ topic:
\begin{equation*}
    \boldsymbol{\varphi} = 
    \{\varphi_{vk}\}_{V \times K} = \left(
                                \begin{array}{ccccc}
                                \varphi_{11} & \cdots & \varphi_{1k} & \cdots & \varphi_{1K}    \\
                                \vdots       &        & \vdots       &        & \vdots          \\
                                \varphi_{v1} & \cdots & \varphi_{vk} & \cdots & \varphi_{vK}    \\
                                \vdots       &        & \vdots       &        & \vdots          \\
                                \varphi_{V1} & \cdots & \varphi_{Vk} & \cdots & \varphi_{VK}
                                \end{array}
                          \right)
\end{equation*}
and we assume $\boldsymbol{\varphi}$ to be non-generated.
By multiplying the word--topic conditional probability matrix and the square diagonal form of each document's topic proportion, we obtain the joint probability matrix of the co-occurrence of each word--topic couple in each document:
\begin{equation*}
    \Phi_m =    \left(
                \begin{array}{ccccc}
                \varphi_{11}\theta_1 & \cdots & \varphi_{1k}\theta_k & \cdots & \varphi_{1K}\theta_K    \\
                \vdots               &        & \vdots               &        & \vdots                  \\
                \varphi_{v1}\theta_1 & \cdots & \varphi_{vk}\theta_k & \cdots & \varphi_{vK}\theta_K    \\
                \vdots               &        & \vdots               &        & \vdots                  \\
                \varphi_{V1}\theta_1 & \cdots & \varphi_{Vk}\theta_k & \cdots & \varphi_{VK}\theta_K
                \end{array}
                \right)
            =   \boldsymbol{\varphi} M(\boldsymbol{\theta})
\end{equation*}
The relationship of the variables in LDTA expressed in tensors is shown in Figure~\ref{fig:tensor_rela_var}.

\begin{figure}[t]
    \centering
    \includegraphics[width=0.4\linewidth]{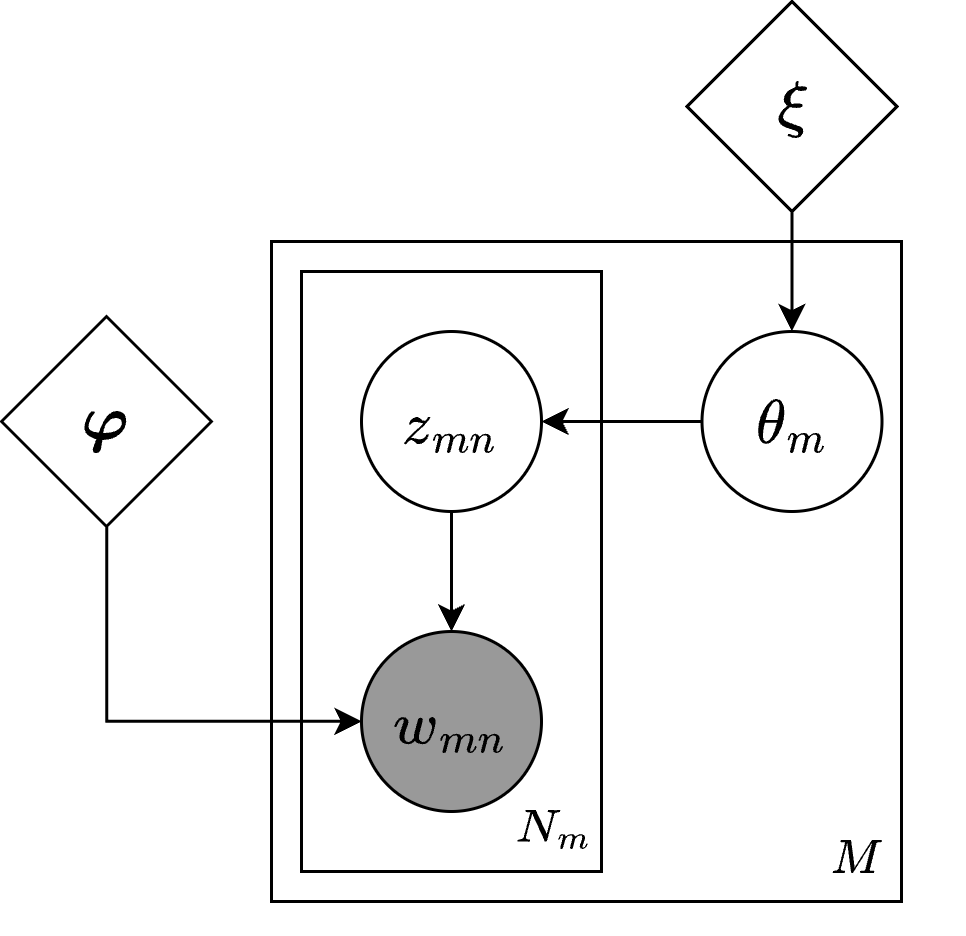}
    \caption{Graphical representation (plate notation) of LDTA model. The shaded circle represents the observed variable; the blank circles represent the latent variables and diamonds indicate hyper-parameters. The arrows imply dependent generative relationship and the rectangles with number at its right-lower corner represent repetitions.}
    \label{fig:prob_graph}
\end{figure}

\begin{figure}[t]
    \centering
    \includegraphics[width=0.8\linewidth]{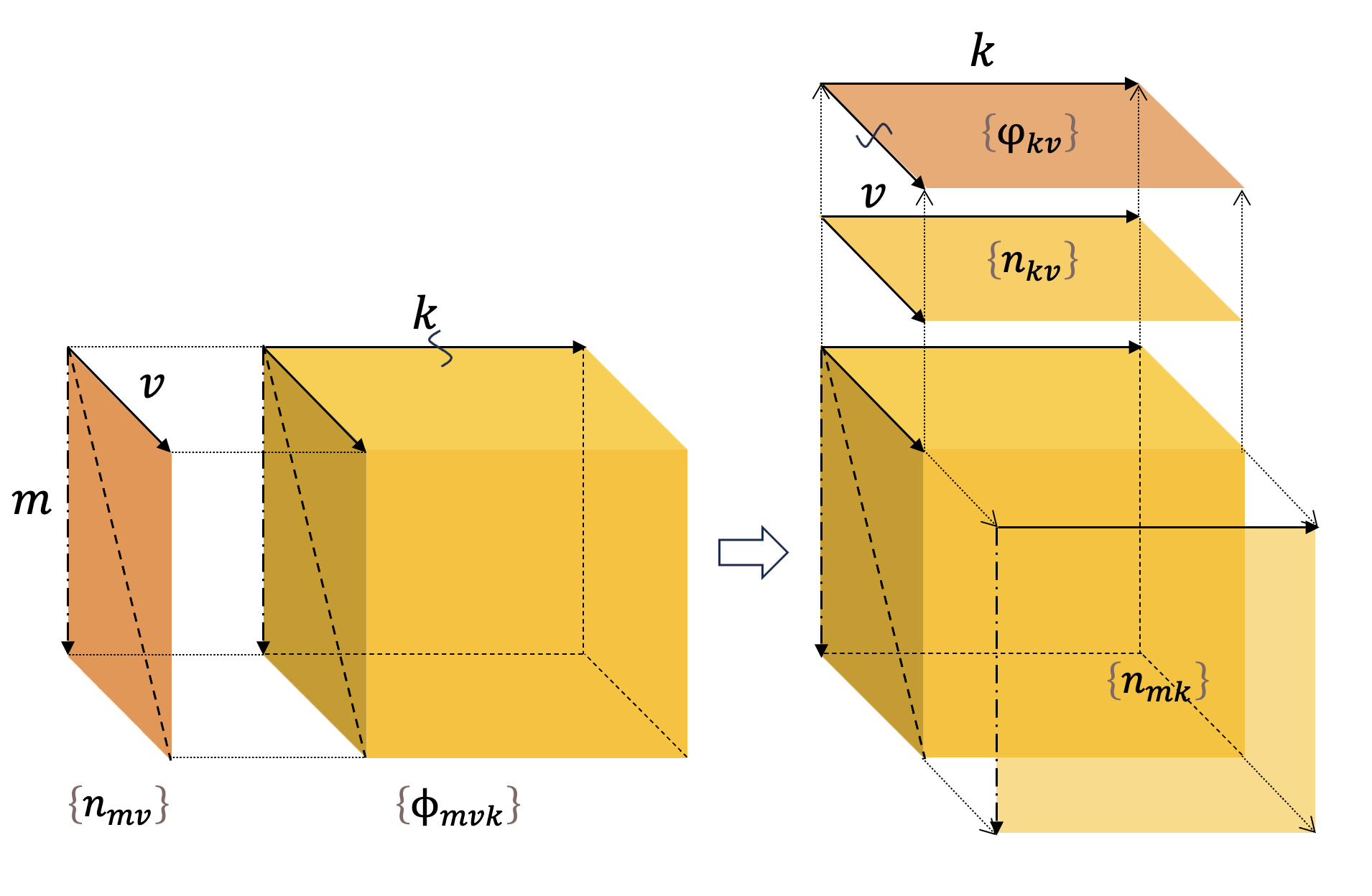}
    \caption{Tensor expression of relationship of the variables}
    \label{fig:tensor_rela_var}
\end{figure}

\subsection{The Joint and the Marginalized Distributions}

Given the hyper-parameters $\boldsymbol{\xi}$ and $\boldsymbol{\varphi}$, as well as the conditional independence assumption of the variables, the joint distribution of the latent parameters and the observed counting vector for a single document is given by:
\begin{equation*}
\begin{aligned}
    p(\boldsymbol{w}, \boldsymbol{z}, \boldsymbol{\theta}|\boldsymbol{\varphi}, \boldsymbol{\xi})
    & = p(\boldsymbol{w}|\boldsymbol{z}, \boldsymbol{\varphi}) p(\boldsymbol{z}|\boldsymbol{\theta}) p(\boldsymbol{\theta}|\boldsymbol{\xi})   \\[1ex]
    & = p(\boldsymbol{\theta}|\boldsymbol{\xi}) \prod_{n=1}^{N_m} p(z_n|\boldsymbol{\theta}) p(w_n|z_n, \boldsymbol{\varphi})  
\end{aligned}
\end{equation*}
where $\boldsymbol{w}$ and $\boldsymbol{z}$ are sequences of words and topics respectively.
Based on above discussion, the order of the word sequence is trivial and can simply be replaced by counting matrix $\{n_{vk}\}_{(V \times K)}$.
Hence, each term is respectively represented as:
\begin{align*}
    & p(\boldsymbol{\theta}|\boldsymbol{\xi}) = \mathrm{DT}(\boldsymbol{\theta}|\boldsymbol{\xi})  \\[1ex]
    & p(\boldsymbol{z}|\boldsymbol{\theta}) = \prod_{k=1}^{K} {\theta_{k}}^{n_k} = \prod_{k=1}^{K} \prod_{v=1}^{V} {\theta_{k}}^{n_{vk}} \\[1ex]
    & p(\boldsymbol{w}|\boldsymbol{z}, \boldsymbol{\varphi}) = \prod_{k=1}^{K} \prod_{v=1}^{V} {\varphi_{vk}}^{n_{vk}}
\end{align*}
Therefore, we arrive at the product form of prior multiplying likelihood:
\begin{equation*}
    p(\boldsymbol{w}, \boldsymbol{z}, \boldsymbol{\theta}|\boldsymbol{\varphi}, \boldsymbol{\xi}) 
    = \mathrm{DT}(\boldsymbol{\theta}|\boldsymbol{\xi}) \prod_{k=1}^{K} \prod_{v=1}^{V} \left( \varphi_{vk} \theta_{k} \right)^{n_{vk}}
\end{equation*}
By collapsing the topic sequence, we derive the joint distribution of observed $\boldsymbol{w}$ and latent $\boldsymbol{\theta}$:
\begin{equation*}
\begin{aligned}
        p(\boldsymbol{w}, \boldsymbol{\theta}|\boldsymbol{\varphi}, \boldsymbol{\xi}) 
        & = \sum_{\boldsymbol{z}} p(\boldsymbol{w}, \boldsymbol{z}, \boldsymbol{\theta}|\boldsymbol{\varphi}, \boldsymbol{\xi})   \\[1ex]
        & = \mathrm{DT}(\boldsymbol{\theta}|\boldsymbol{\xi}) \prod_{v=1}^{V} \sum_{\{n_{vk}\}_K} 
                \left(
                \begin{array}{c}
                    n_v \\
                    n_{v1}, \cdots, n_{vk}, \cdots, n_{vK} 
                \end{array}
                \right) 
            \prod_{k=1}^K \left( \varphi_{vk} \theta_k \right)^{n_{vk}}     \\[1ex]
        & = \mathrm{DT}(\boldsymbol{\theta}|\boldsymbol{\xi}) \prod_{v=1}^{V} \left(\sum_{k=1}^{K}\varphi_{vk}\theta_k\right)^{n_v}    \\
\end{aligned}
\end{equation*}
It is noteworthy to mention that a document's probability vector over word collection $\mathbb{W}$ denoted as $\{t_v(\boldsymbol{\theta}) = \sum_{k=1}^K \varphi_{vk} \theta_k \}_V$ is determined given the document-topic proportion vector $\boldsymbol{\theta}$ and topic-word conditional probability matrix $\boldsymbol{\varphi}$:
\begin{equation*}
\begin{aligned}
        \{ t_{v}(\boldsymbol{\theta}) \}_{v=1}^{V}
                        & =
                        \left(
                            \begin{array}{ccccc}
                                \varphi_{11} & \cdots & \varphi_{1k} & \cdots & \varphi_{1K}\\
                                \vdots       &        & \vdots       &        & \vdots      \\
                                \varphi_{v1} & \cdots & \varphi_{vk} & \cdots & \varphi_{vK}\\
                                \vdots       &        & \vdots       &        & \vdots      \\
                                \varphi_{V1} & \cdots & \varphi_{Vk} & \cdots & \varphi_{VK}
                            \end{array}
                         \right) 
                         \left( 
                            \begin{array}{c}
                                 \theta_1\\
                                 \vdots\\
                                 \theta_k\\
                                 \vdots\\
                                 \theta_K
                            \end{array}
                         \right)    \\[1ex]
                      & = \left(\sum_{k=1}^{K}\varphi_{1k}\theta_k, \ldots, \sum_{k=1}^{K}\varphi_{vk}\theta_k, \ldots, \sum_{k=1}^{K}\varphi_{Vk}\theta_k\right)^{\top}
\end{aligned}                     
\end{equation*}
By integrating out the latent variable $\boldsymbol{\theta}$, the evidence, or the marginal distribution of observed word counting vector is given:
\begin{equation*}
\begin{aligned}
        p(\boldsymbol{w}|\boldsymbol{\varphi}, \boldsymbol{\xi}) 
        & = \int_{\Delta^{K-1}} p(\boldsymbol{w}, \boldsymbol{\theta}|\boldsymbol{\varphi}, \boldsymbol{\xi}) \mathrm{d}\boldsymbol{\theta}  \\[1ex]
        & = \int_{\Delta^{K-1}} \mathrm{DT}(\boldsymbol{\theta}|\boldsymbol{\xi}) \prod_{v=1}^{V} \left( t_v\left(\boldsymbol{\theta}\right) \right)^{n_v} \mathrm{d}\boldsymbol{\theta}
\end{aligned}
\end{equation*}
Finally, we compute the total probability of corpus $\mathcal{D}$ containing $M$ documents:
\begin{eqnarray}    
    p(\mathcal{D}|\boldsymbol{\varphi}, \boldsymbol{\xi}) 
    &=& \prod_{m=1}^{M} p_{m}({\boldsymbol{w}}_{m}|\boldsymbol{\varphi}, \boldsymbol{\xi})\nonumber    \\
    &=& \prod_{m=1}^{M} \int_{\boldsymbol{\theta}_m} \mathrm{DT}(\boldsymbol{\theta}_m|\boldsymbol{\xi}) \prod_{v=1}^{V} {\left(t_{vm}(\boldsymbol{\theta}_m)\right)}^{n_{vm}}  \mathrm{d}\boldsymbol{\theta}_m \nonumber
\end{eqnarray}

\section{Mean-Field Variational Inference}

The mean-field variational inference employs a global mean-field distribution to perform approximation for each latent node in the probabilistic graph.
The approximated posterior is achieved by minimizing the KL-divergence:
\begin{equation*}
\begin{aligned}
    q(\mathbf{z}|\boldsymbol{\eta}^{*})
    &= \arg\;\min\limits_{q \in \mathcal{Q}}\;\mathrm{KL}(q(\mathbf{z}|\boldsymbol{\eta})\;||\;p(\mathbf{z}|\boldsymbol{w}))  \\[1ex]
    &= \arg\;\min\limits_{q \in \mathcal{Q}}\;\int_{\mathbf{z}} q(\mathbf{z}|\boldsymbol{\eta}) \log \frac{q(\mathbf{z}|\boldsymbol{\eta})}{p(\mathbf{z}|\boldsymbol{w})} \,\mathrm{d}\mathbf{z}
\end{aligned}
\end{equation*}
which is equivalent to maximizing Evidence Lower Bound (ELBO):
\begin{equation*}
        \mathrm{ELBO} = \int_{\mathbf{z}} q(\mathbf{z}|\boldsymbol{\eta})\log \frac{p(\boldsymbol{w}, \mathbf{z})}{q(\mathbf{z}|\boldsymbol{\eta})} \,\mathrm{d}\mathbf{z}
\end{equation*}
Since we will deal with both continuous variables (topic proportions) and discrete variables (topics) in our model, we introduce a new notation:
\begin{equation*}
\Bigl\langle f(x), g(x) \Bigr\rangle_x =
\begin{cases}
\sum_{i} f(x_i) g(x_i), & \text{if } x \text{ is discrete}, \\[6pt]
\int_{x} f(x)g(x) \,\mathrm{d}x,        & \text{if } x \text{ is continuous}
\end{cases}
\end{equation*}
The feasibility of variational mean-field as an approximating distribution lies on the independency assumption between its random variables, which leads to an important property:
\begin{equation*}
    \Bigl\langle y(x_n), \;\; q(x_1)q(x_2)\cdots q(x_n)\cdots q(x_N) \Bigr\rangle_{\boldsymbol{x}} = \Bigl\langle y(x_n), \;\; q(x_n) \Bigr\rangle_{x_n}
\end{equation*}
This enables us to expand the joint distribution over every individual latent or observed variable node in the generative probabilistic graph, and deal with each ELBO term with respect to the corresponding latent nodes separately from an external viewpoint.
By taking the probabilistic graph in Figure~\ref{fig:prob_graph} as demonstrating example, we illustrate the derivation of mean-field variational inference and provide a general algorithm regardless of the specific Dirichlet-Tree prior.
More inference algorithms for the generalized probabilistic graphs can be easily derived in a similar manner.

\subsection{The Evidence Lower Bound}

Suppose $p(\boldsymbol{\theta}|\boldsymbol{\xi})$ is a Dirichlet-Tree distribution with selection matrix $\mathbf{D}_p$ and $p(\boldsymbol{\theta}|\boldsymbol{\xi})$ is employed as the prior distribution in the generative plate notation in Figure~\ref{fig:prob_graph}.
We already prove that $p(\boldsymbol{\theta}|\boldsymbol{\xi})$ is both a member of exponential family and a conjugate prior to the multinomial:
\begin{equation*}
    p(\boldsymbol{\theta}|\boldsymbol{\xi})
    = \exp \{ \boldsymbol{\eta}(\boldsymbol{\xi})^{\mathrm{T}} \mathbf{u}(\boldsymbol{\theta}) - \log g(\boldsymbol{\eta}(\boldsymbol{\xi})) \}
\end{equation*}
Without the need to specify the concrete expression, the universal form of approximation algorithm and key steps in the optimization is demonstrated by only utilizing Dirichlet-Tree's properties of exponential family and conjugacy to Multinomial.
Our goal is to approximate the joint posterior distribution $p(\boldsymbol{z}, \boldsymbol{\theta}|\boldsymbol{w}, \boldsymbol{\varphi}, \boldsymbol{\xi})$ of all latent variables of interest given the observed sequences $\boldsymbol{w}$, and the global mean-field approximating distribution is defined as:
\begin{equation*}
        q(\boldsymbol{\theta}, \boldsymbol{z}|\boldsymbol{\zeta}, \boldsymbol{\phi})
        = \prod_{m=1}^{M} q(\boldsymbol{\theta}|\boldsymbol{\zeta}_m) \prod_{n=1}^{N_m} q(z_{nm}|\boldsymbol{\phi}_{nm})
\end{equation*}
It is obvious that the global approximation wipes out the original dependency between variables such as the dependency between $\boldsymbol{\theta}$ and $\boldsymbol{z}$.
Given the approximating distribution, the optimized parameters are obtained by minimizing $\mathrm{KL}(q(\boldsymbol{\theta}, \boldsymbol{z}|\boldsymbol{\zeta}, \boldsymbol{\phi})||p(\boldsymbol{\theta}, \boldsymbol{z}|\boldsymbol{w}, \boldsymbol{\varphi}, \boldsymbol{\xi}))$, which is equivalent to maximizing the Evidence Lower Bound (ELBO):

\begin{eqnarray*}
        L(\boldsymbol{\zeta}, \boldsymbol{\phi}; \boldsymbol{\xi}, \boldsymbol{\varphi})
        &=& \left\langle \log \frac{p(\boldsymbol{w}, \boldsymbol{z}, \boldsymbol{\theta}|\boldsymbol{\xi}, \boldsymbol{\varphi})}{q(\boldsymbol{z}, \boldsymbol{\theta}|\boldsymbol{\zeta}, \boldsymbol{\phi})}, \;\;q(\boldsymbol{z}, \boldsymbol{\theta}|\boldsymbol{\zeta}, \boldsymbol{\phi}) \right\rangle_{(\boldsymbol{z}, \boldsymbol{\theta})} \nonumber \\[1ex]
        &=& \sum_{m=1}^{M} \biggl\{ \Bigl\langle \log p(\boldsymbol{\theta}|\boldsymbol{\xi}) \Bigr\rangle_{q(\boldsymbol{\theta}|\boldsymbol{\zeta}_m)} + \Bigl\langle \log p(\boldsymbol{z}|\boldsymbol{\theta}) \Bigr\rangle_{q(\boldsymbol{\theta}|\boldsymbol{\zeta}_m) q(\boldsymbol{z}|\boldsymbol{\phi}_m)} \\[1ex]
        &\;& \;\;\;\;\;\;\;\;\;\;+ \Bigl\langle \log p(\boldsymbol{w}_m|\boldsymbol{z}, \boldsymbol{\varphi}) \Bigr\rangle_{q(\boldsymbol{z}|\boldsymbol{\phi}_m)} - \Bigl\langle \log q(\boldsymbol{\theta}|\boldsymbol{\zeta}_m) \Bigr\rangle_{q(\boldsymbol{\theta|\boldsymbol{\zeta}_m})}\nonumber \\[1ex]
        &\;& \;\;\;\;\;\;\;\;\;\;  - \Bigl\langle \log q(\boldsymbol{z}|\boldsymbol{\phi}_m) \Bigr\rangle_{q(\boldsymbol{z}|\boldsymbol{\phi}_m)} \biggr\} \nonumber \\[1ex]
        &=& \sum_{m=1}^{M} \biggl\{ \Bigl\langle \log p(\boldsymbol{\theta}_m|\boldsymbol{\xi}) \Bigr\rangle_{q(\boldsymbol{\theta}_m|\boldsymbol{\zeta}_m)} + \sum_{n=1}^{N_m} \; \Bigl\langle \log p(z_{nm}|\boldsymbol{\theta}_m) \Bigr\rangle_{q(\boldsymbol{\theta}_m|\boldsymbol{\zeta}_m) q(z_{nm}|\boldsymbol{\phi}_{nm})} \nonumber \\[1ex] 
        &\;& \;\;\;\;\;\;\;\;\;\; + \sum_{n=1}^{N_m} \; \Bigl\langle \log p(w_{nm}|z_{nm}, \boldsymbol{\varphi}) \Bigr\rangle_{q(z_{nm}|\boldsymbol{\phi}_{nm})} - \Bigl\langle \log q(\boldsymbol{\theta}_m|\boldsymbol{\zeta}_m) \Bigr\rangle_{q(\boldsymbol{\theta}_m|\boldsymbol{\zeta}_m)} \nonumber \\[1ex]
        &\;& \;\;\;\;\;\;\;\;\;\; - \sum_{n=1}^{N_m} \; \Bigl\langle \log q(z_{nm}|\boldsymbol{\phi}_{nm}) \Bigr\rangle_{q(z_{nm}|\boldsymbol{\phi}_{nm})} \biggr\} \nonumber
\end{eqnarray*}
Without loss of generality, we consider ELBO for any one of the documents and drop the subscript $m$. We have:
\begin{eqnarray*}
        L_m(\boldsymbol{\zeta}, \boldsymbol{\phi}; \boldsymbol{\xi}, \boldsymbol{\varphi})
        &=& \boldsymbol{\eta}(\boldsymbol{\xi})^{\mathrm{T}} \mathbb{E}_{q(\boldsymbol{\theta}|\boldsymbol{\zeta})} \left[ \mathbf{u}(\boldsymbol{\theta}) \right] - \log g(\boldsymbol{\eta}(\boldsymbol{\xi})) \nonumber \\
        &\;& + \sum_{n=1}^{N} {\boldsymbol{\phi}_n}^{\mathrm{T}} \mathbb{E}_{q(\boldsymbol{\theta}|\boldsymbol{\zeta})} \left[ \log \boldsymbol{\theta} \right] \nonumber \\
        &\;& + \sum_{n=1}^{N} {\boldsymbol{\phi}_n}^{\mathrm{T}} \log \boldsymbol{\varphi}_{v(n)} \nonumber \\
        &\;& - \boldsymbol{\eta}(\boldsymbol{\zeta})^{\mathrm{T}} \mathbb{E}_{q(\boldsymbol{\theta}|\boldsymbol{\zeta})} \left[ \mathbf{u}(\boldsymbol{\theta}) \right] + \log g(\boldsymbol{\eta}(\boldsymbol{\zeta}))\nonumber \\
        &\;& - \sum_{n=1}^{N} {\boldsymbol{\phi}_n}^{\mathrm{T}} \log \boldsymbol{\phi}_n \nonumber \\
        &=& \left[ \boldsymbol{\eta}(\boldsymbol{\xi}) - \boldsymbol{\eta}(\boldsymbol{\zeta}) + \mathbf{D}_q \left( \sum_{n=1}^{N} \boldsymbol{\phi}_n \right) \right]^{\mathrm{T}} \mathbb{E}_{q(\boldsymbol{\theta}|\boldsymbol{\zeta})} \left[ \mathbf{u}(\boldsymbol{\theta}) \right] \nonumber \\
        &\;& + \log g(\boldsymbol{\eta}(\boldsymbol{\zeta})) - \log g(\boldsymbol{\eta}(\boldsymbol{\xi})) \nonumber \\
        &\;& + \sum_{n=1}^{N} {\boldsymbol{\phi}_n}^{\mathrm{T}} \log \boldsymbol{\varphi}_{v(n)} - \sum_{n=1}^{N} {\boldsymbol{\phi}_n}^{\mathrm{T}} \log \boldsymbol{\phi}_n \nonumber \\
\end{eqnarray*}
where the subscript $v$ indicates the $v^{th}$ choice from the dictionary at the $n^{th}$ observed word, and $\mathbf{D}_q$ represents Dirichlet selection matrix which depends only on the structure of approximator $q(\boldsymbol{\theta}|\boldsymbol{\zeta})$ regardless of the value of $\boldsymbol{\zeta}$.
The tensor expression of ELBO is shown in Figure~\ref{fig:tensor_elbo}.
It is noteworthy to mention that the form of this specific expression of ELBO depends only on the specific generative probabilistic graph with the corresponding nodes replaced by the choice of the specific Dirichlet-Tree distribution.

\begin{figure}[t]
    \centering
    \includegraphics[width=0.9\linewidth]{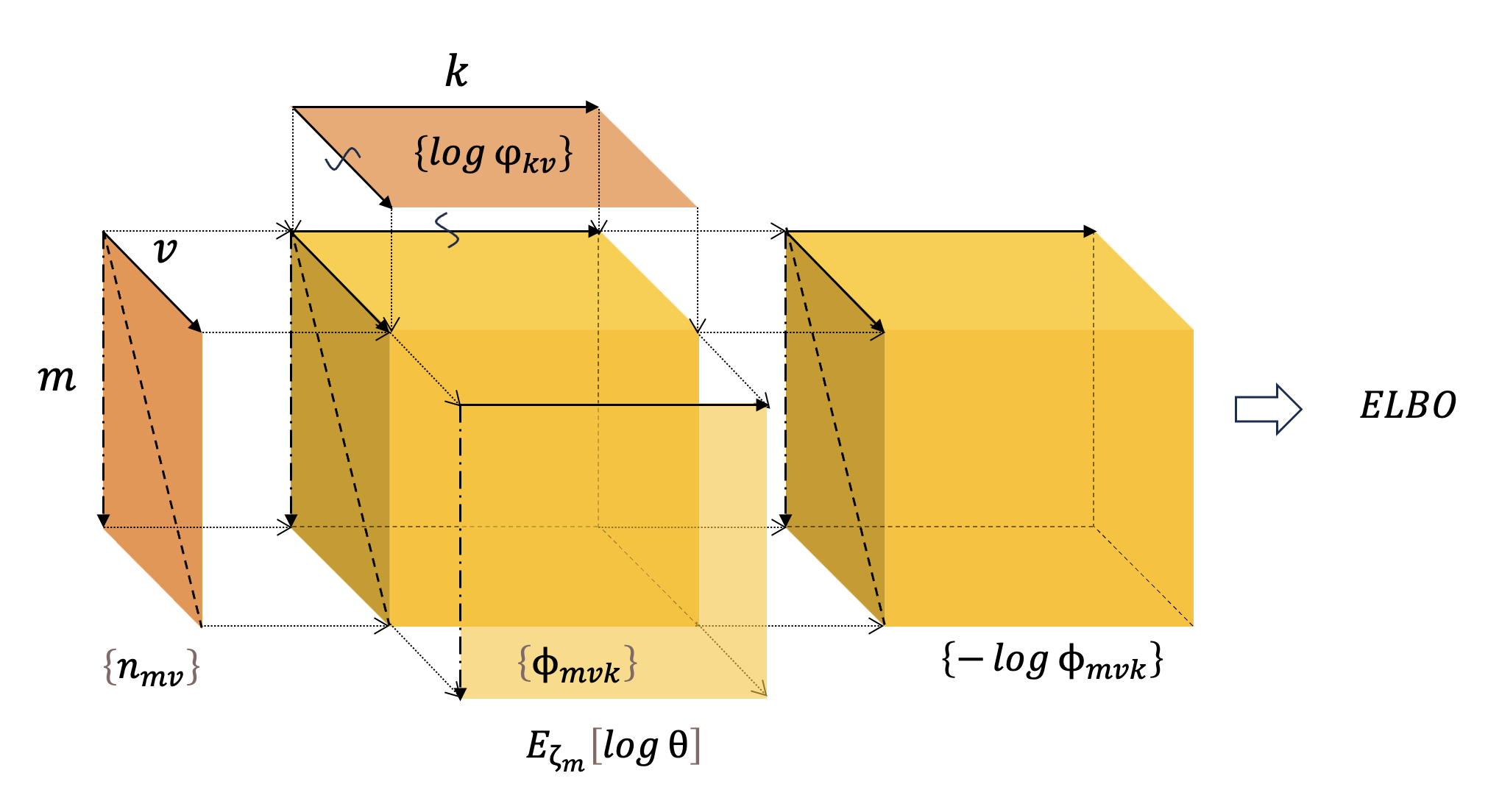}
    \caption{Tensor expression of ELBO}
    \label{fig:tensor_elbo}
\end{figure}

\begin{figure}[t]
    \centering
    \includegraphics[width=0.8\linewidth]{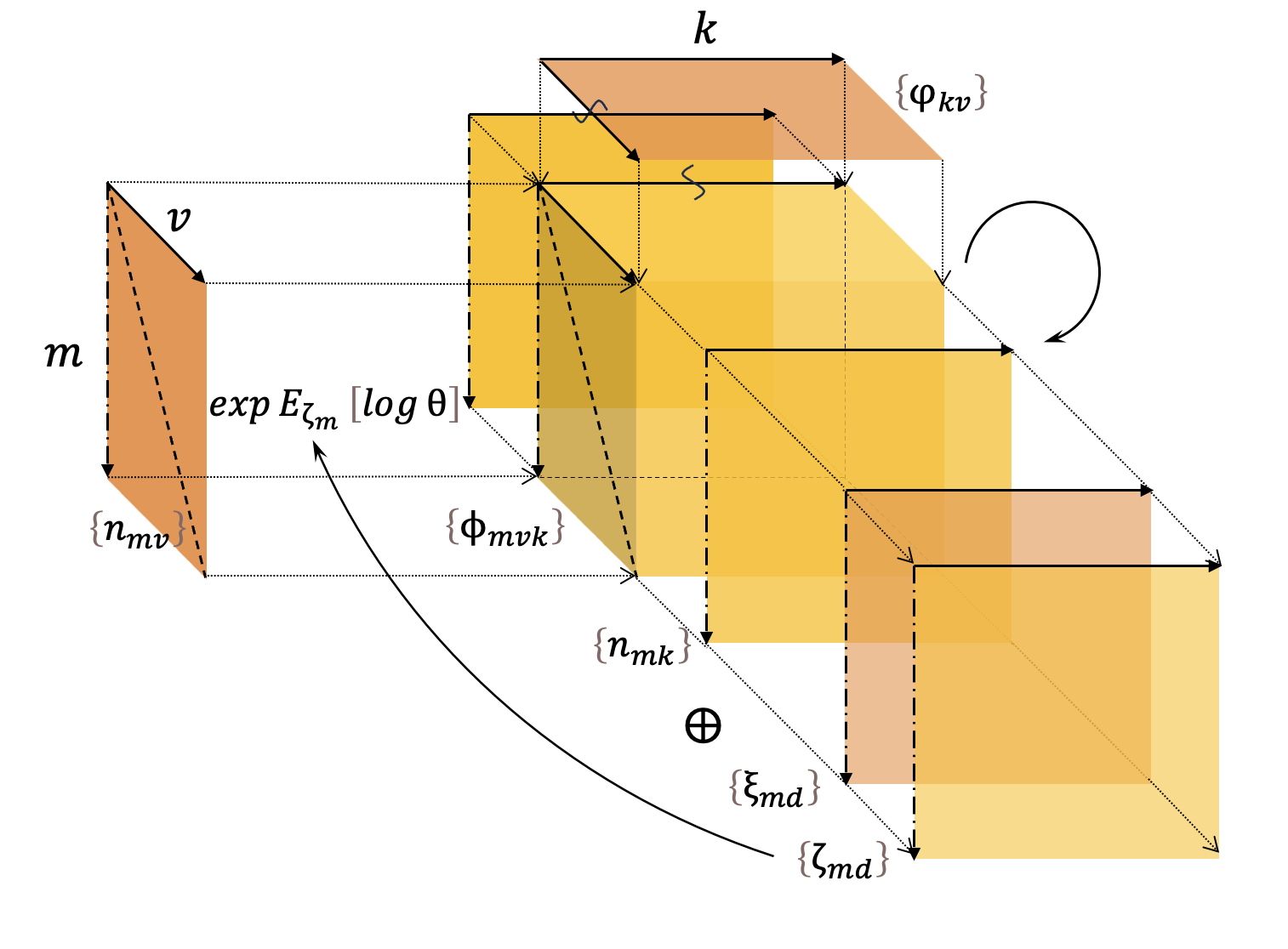}
    \caption{Expectation-Maximization iteration process of mean-field variational inference}
    \label{fig:vi_em}
\end{figure}

\subsection{E-Step}

\vspace{1em}
\noindent
\textbullet\; \textbf{Optimization in terms of $\boldsymbol{\zeta}$}
\vspace{0.5em}

\noindent
Parameter $\boldsymbol{\zeta}$ belongs to $q(\boldsymbol{\theta}|\boldsymbol{\zeta})$ over a document's topic proportion $\boldsymbol{\theta}$, one term of the global approximating mean-field. We differentiate the ELBO with respect to $\boldsymbol{\zeta}$:

\begin{equation*}
\begin{aligned}
        \nabla_{\boldsymbol{\zeta}} L(\boldsymbol{\zeta})
        &= \nabla_{\boldsymbol{\zeta}} \left\{ \left[ \boldsymbol{\eta}(\boldsymbol{\xi}) - \boldsymbol{\eta}(\boldsymbol{\zeta}) + \mathbf{D}_q \left( \sum_{n=1}^{N} \boldsymbol{\phi}_n \right) \right]^{\mathrm{T}} \mathbb{E}_{q(\boldsymbol{\theta}|\boldsymbol{\zeta})} \left[ \mathbf{u}(\boldsymbol{\theta}) \right] + \log g(\boldsymbol{\eta}(\boldsymbol{\zeta})) + C \right\} \\
        &= \left[ \boldsymbol{\xi} - \boldsymbol{\zeta} + \mathbf{D}_q\left( \sum_{n=1}^{N} \boldsymbol{\phi}_n \right) \right]^{\mathrm{T}} \nabla_{\boldsymbol{\zeta}} \mathbb{E}_{q(\boldsymbol{\theta}|\boldsymbol{\zeta})}\left[ \mathbf{u}(\boldsymbol{\theta}) \right] \\
        &= \left\{ \left[ \boldsymbol{\xi} - \boldsymbol{\zeta} + \mathbf{D}_q \left( \sum_{n=1}^{N} \boldsymbol{\phi}_n \right) \right]^{\mathrm{T}} \left( \frac{\partial}{\partial \zeta_d} \mathbb{E}_{q(\boldsymbol{\theta}|\boldsymbol{\zeta})} \left[ \mathbf{u}(\boldsymbol{\theta}) \right] \right) \right\}_{d=1}^{D}
\end{aligned}
\end{equation*}
By setting derivatives to $\boldsymbol{0}$, we have:
\begin{equation*}
        \boldsymbol{\zeta}^{*} = \boldsymbol{\xi} + \mathbf{D}_q\left( \sum_{n=1}^{N} \boldsymbol{\phi}_{n} \right) = \boldsymbol{\xi} \oplus_q \left( \sum_{n=1}^{N} \boldsymbol{\phi}_n \right)
\end{equation*}
Notice that $\boldsymbol{\phi}_n$ only depends on the observed words $w_v$ in this single document, the optimization step can be expressed in terms of matrix multiplication:
\begin{equation*}
        \boldsymbol{\zeta}^{*} = \boldsymbol{\xi} \oplus_q \left\{ \phi_{kv}n_{v} \right\}_{K}
\end{equation*}

\vspace{1em}
\noindent
\textbullet\; \textbf{Optimization in terms of $\boldsymbol{\phi}$}
\vspace{0.5em}

\noindent
Parameter $\boldsymbol{\phi}$ is matrix $\left\{ \phi_{kn} \right\}_{(K \times N)}$, the multinomial parameters of $\prod_{n=1}^{N} q(z_n|\boldsymbol{\phi}_n)$ given the observed word $w_n$ at each location $n$. The optimized $\boldsymbol{\phi}^{*}$ is obtained by maximizing the ELBO, which is a constrained optimization problem with respect to $\boldsymbol{\phi}$ where there are $N$ constraints:
\begin{equation*}
    \sum_{l=1}^{K} \phi_{ln} = 1, \;\;\;\; n = 1, 2, \cdots, N
\end{equation*}
Without loss of generality, we consider the Lagrangian function with respect to $\phi_{kn}$ of $\boldsymbol{\phi}_n$:
\begin{equation*}
        \mathcal{L} (\boldsymbol{\phi}_n)
        = {\boldsymbol{\phi}_n}^{\mathrm{T}} \mathbb{E}_{q(\boldsymbol{\theta}|\boldsymbol{\zeta})} \left[ \log \boldsymbol{\theta} \right] + {\boldsymbol{\phi}_n}^{\mathrm{T}} \log \boldsymbol{\varphi}_{v(n)} - {\boldsymbol{\phi}_n}^{\mathrm{T}} \log \boldsymbol{\phi}_n + \sum_{n=1}^{N} \lambda_n \left( \sum_{l=1}^{K} \phi_{ln} -1  \right) + C
\end{equation*}
\begin{equation*}
    \frac{\partial \mathcal{L}}{\partial \phi_{kn}} = \mathbb{E}_{q(\boldsymbol{\theta}|\boldsymbol{\zeta})} \left[ \log \theta_k \right] +\log \varphi_{vk} - \log \phi_{kn} - 1 + \lambda_n
\end{equation*}
By setting partial derivatives with respect to $\phi_{nk}$ zero, we have:
\begin{equation*}
    {\phi_{kn}}^{*} = \varphi_{vk} \exp \left\{ \mathbb{E}_{q(\boldsymbol{\theta}|\boldsymbol{\boldsymbol{\zeta}})} \left[\log \theta_k \right] + \lambda_n -1 \right\}
\end{equation*}
where $\exp \{ \lambda_n -1 \}$ is the normalizer.

Since $\left\{ \boldsymbol{\phi}_n \right\}_N$ depends only on which word $\left\{w_{vn}\right\}_N$ out of vocabulary $\left\{w_v\right\}_V$ is chosen, the matrix $\left\{ \phi_{kn} \right\}_{(K \times N)}$ is transformed into $\left\{ \phi_{kv} \right\}_{(K \times V)}$ for $\prod_{v=1}^{V} {q(z|\boldsymbol{\phi}_v)}^{n_v}$, where $n_v$ is the count of the $v^{th}$ word in the document.
Therefore, the optimization step can be expressed in terms of matrices:
\begin{equation*}
\begin{aligned}
        \{\phi_{kv}\}_{(K \times V)}^{*} 
        &= \left\{ {z_v}^{-1} \varphi_{vk} \exp \left\{ \mathbb{E}_{q(\boldsymbol{\theta}|\boldsymbol{\zeta})} \left[\log \theta_k\right] \right\} \right\}_{(K \times V)} \\
        & = \left\{ {z_v}^{-1} \varphi_{vk} \exp \left\{ \left(\mathbf{D}_q^{\mathrm{T}}\right) \mathbb{E}_{q(\boldsymbol{\theta}|\boldsymbol{\zeta})} \left[ \mathbf{u}(\boldsymbol{\theta}) \right] \right\}_k \right\}_{(K \times V)} \\
\end{aligned}
\end{equation*}
where $z_v$ is the normalizer.
The E-step of mean-field variational inference is concluded in Algorithm~\ref{alg:meanfieldvi}.

\begin{algorithm}
    \caption{Variational mean-field inference for a single document $\boldsymbol{w}_m$}
    \label{alg:meanfieldvi}
    \KwIn{Prior $\boldsymbol{\xi}$, $\{\varphi_{vk}\}_{V \times K}$; Observation count $\{n_{v}\}_{V}$}
    \KwOut{Optimized $\boldsymbol{\zeta}$, $\{\phi_{kv}\}_{K \times V}$}
    Initialize each element in $\{\phi_{kv}\}_{K \times V}$ as $1/K$\;
    Initialize $\boldsymbol{\zeta} = \boldsymbol{\xi} \oplus \{\phi_{kv} n_v\}_K$\;
    \While{not convergence}
    {
        \For{$w_v$ in $\mathbb{W}$ in parallel}
        {
            Compute $\{\phi_{kv}\}_{(K \times V)}^{new} = \left\{ \varphi_{vk} \exp \{ \mathbb{E}_{q(\boldsymbol{\theta}|\boldsymbol{\zeta})} \left[\log \theta_k\right] \} \right\}_{(K \times V)}$\;
            Normalize $\{ \phi_{kv} \}_{(K \times V)}^{new}$ such that $\sum_{k=1}^{K} \phi_{kv} = 1$\;
        }
        Update $\boldsymbol{\zeta}^{new} = \boldsymbol{\xi} \oplus \left\{ \phi_{kv}^{new} n_{v} \right\}_{K}$\;
    }
\end{algorithm}

\subsection{M-Step}

Given optimized $\{q(\boldsymbol{\theta}|\boldsymbol{\zeta}_{m}^{*})\}_{M}$ and $\{\phi_{kvm}^{*}\}_{(K \times V \times M)}$ after an inference step, we proceed to optimize the model's hyper-parameters $\boldsymbol{\xi}$ and $\{\varphi_{vk}\}_{V \times K}$ by maximizing ELBO on all documents $\mathcal{D}$:
\begin{equation*}
    L(\boldsymbol{\xi}, \boldsymbol{\varphi}) = \sum_{m=1}^{M}\left\{\boldsymbol{\eta}(\boldsymbol{\xi})^{\mathrm{T}} \mathbb{E}_{q(\boldsymbol{\theta}|\boldsymbol{\zeta}_m^{*})} \left[ \mathbf{u}(\boldsymbol{\theta}) \right] - \log g(\boldsymbol{\eta}(\boldsymbol{\xi})) + \sum_{n=1}^{N_m} {\boldsymbol{\phi}^{*}_{nm}}^{\mathrm{T}} \log \boldsymbol{\varphi}_{v(nm)} \right\} + C
\end{equation*}

\vspace{1em}
\noindent
\textbullet\; \textbf{Optimization of $\boldsymbol{\varphi}$}
\vspace{0.5em}

\noindent
ELBO with respect to $\boldsymbol{\varphi}$ is given as:
\begin{equation*}
    L(\boldsymbol{\varphi}) = \sum_{m=1}^{M} \sum_{n=1}^{N_m} {\boldsymbol{\phi}^{*}_{nm}}^{\mathrm{T}} \log \boldsymbol{\varphi}_{v(nm)} + C
\end{equation*}
Notice that $\boldsymbol{\varphi}$ has $K$ constraints:
\begin{equation*}
    \sum_{l=1}^{V} \varphi_{lk} = 1, \;\;\;\; k = 1, 2, \cdots, K
\end{equation*}
and the Lagrangian is given as:
\begin{equation*}
\begin{aligned}
        \mathcal{L}(\boldsymbol{\varphi})
        &= \sum_{m=1}^{M} \sum_{n=1}^{N} {\boldsymbol{\phi}^{*}_{nm}}^{\mathrm{T}} \log \boldsymbol{\varphi}_{v(nm)} + \sum_{k=1}^{K} \lambda_k\left(\sum_{v=1}^{V} \varphi_{vk} - 1\right) \\
        &= \sum_{v=1}^{V} \sum_{k=1}^{K} \left( \sum_{m=1}^{M} \phi^{*}_{kvm} n_{vm} \right) \log \varphi_{vk} + \sum_{k=1}^{K} \lambda_k \left( \sum_{v=1}^{V} \varphi_{vk} - 1 \right)
\end{aligned}
\end{equation*}
Without loss of generality, the partial derivatives of Lagrangian with respect to $\varphi_{vk}$ is written as:
\begin{equation*}
        \frac{\partial \mathcal{L}}{\partial \varphi_{vk}} = \left( \sum_{m=1}^{M} \phi^{*}_{kvm} n_{vm} \right) \frac{1}{\varphi_{vk}} + \lambda_k
\end{equation*}
By setting the partial derivatives to zero, we have:
\begin{equation*}
    {\varphi_{vk}}^{*} = - \frac{1}{\lambda_k} \left( \sum_{m=1}^{M} \phi^{*}_{kvm} n_{vm} \right)
\end{equation*}
Therefore, the optimized $\{\varphi_{vk}\}_{(V \times K)}^{*}$ is derived as:
\begin{equation*}
    \{\varphi_{vk}\}_{(V \times K)}^{*} = \left\{ {z_k}^{-1} \sum_{m=1}^{M} \phi_{kvm}^{*} n_{vm} \right\}_{(V \times K)}
\end{equation*}
where $\left\{z_k\right\}_K$ is the normalizer.

\vspace{1em}
\noindent
\textbullet\; \textbf{Optimization of $\boldsymbol{\xi}$}
\vspace{0.5em}

\noindent
ELBO with respect to $\boldsymbol{\xi}$ is given as:
\begin{equation*}
    L(\boldsymbol{\xi}) = {\boldsymbol{\eta}(\boldsymbol{\xi})}^{\mathrm{T}} \left( \sum_{m=1}^{M} \mathbb{E}_{q(\boldsymbol{\theta}|\boldsymbol{\zeta}_m^{*})} \left[ \mathbf{u}(\boldsymbol{\theta}) \right] \right) - M \log g(\boldsymbol{\eta}(\boldsymbol{\xi})) + C
\end{equation*}
The first-order derivative is written as:
\begin{equation*}
    \nabla_{\boldsymbol{\xi}} L(\boldsymbol{\xi}) = \left\{ \sum_{m=1}^{M} \mathbb{E}_{q(\boldsymbol{\theta}|\boldsymbol{\zeta}_m)} \left[ u_d(\boldsymbol{\theta}) \right] - M \mathbb{E}_{p(\boldsymbol{\theta}|\boldsymbol{\xi})} \left[ u_d(\boldsymbol{\theta}) \right] \right\}_{d=1}^{D}
\end{equation*}
By setting the derivative to zero, we can see that the optimized $\boldsymbol{\xi}$ is obtained by matching the moment of hyper-Dirichlet with parameter $\boldsymbol{\xi}$ to the average of those of the inferred posteriors, which is a similar process to the message passing in Expectation Propagation.
Thus, we proceed with Newton's method, and the Hessian is derived as:
\begin{equation*}
    \mathbf{H}_{\boldsymbol{\xi}} L(\boldsymbol{\xi}) = -M \left\{ \frac{\partial}{\partial \xi_i} \mathbb{E}_{p(\boldsymbol{\theta}|\boldsymbol{\xi})} \left[ u_j (\boldsymbol{\theta}) \right] \right\}_{(D \times D)}
\end{equation*}
The optimized $\boldsymbol{\xi}^{*}$ can be approximated iteratively by Newton's method:
\begin{equation*}
    \boldsymbol{\xi}^{(t+1)} := \boldsymbol{\xi}^{(t)} - {\mathbf{H}(\boldsymbol{\xi}^{(t)})}^{-1} \nabla(\boldsymbol{\xi}^{(t)})
\end{equation*}
The tensor expression of the whole Expectation-Maximization is shown in Figure~\ref{fig:vi_em}.

\section{Expectation Propagation}

In Bayesian inference, we compute the posterior distribution on latent variables given the observation.
In mean-field variational inference, the algorithm employs a global mean-field distribution to perform approximation for each latent node in the probabilistic graph.
In contrast, Expectation Propagation (EP) separates target distributions into terms, approximate each one and combine together to obtain a global approximation.

\subsection{Formal Alignment of Approximate Distribution to Target Posterior}

In EP for LDTA, we focus on the marginal posterior on $\boldsymbol{\theta}$ given the observed sequence of words incorporating the model's hyper-parameters.
For a single document we have the target marginal posterior:
\begin{equation*}
\begin{aligned}
        p(\boldsymbol{\theta}|\boldsymbol{w}, \boldsymbol{\varphi}, \boldsymbol{\xi})
        &= \frac{p(\boldsymbol{w}, \boldsymbol{\theta}|\boldsymbol{\varphi}, \boldsymbol{\xi})}{p(\boldsymbol{w}|\boldsymbol{\varphi}, \boldsymbol{\xi})} \\
        &\propto p(\boldsymbol{\theta}|\boldsymbol{\xi}) \prod_{v=1}^{V} \left( \sum_{k=1}^{K} \varphi_{vk} \theta_k \right)^{n_{v}} \\
        &= p(\boldsymbol{\theta}|\boldsymbol{\xi}) \exp \Bigl \langle n_v, \log t_v(\boldsymbol{\theta}; \boldsymbol{\varphi}) \Bigr \rangle_{v} \\
\end{aligned}
\end{equation*}
where
\begin{equation*}
    t_{v}(\boldsymbol{\theta};\boldsymbol{\varphi})
    = \sum_{k=1}^{K} \varphi_{vk} \theta_{k}
\end{equation*}
We can see that the hierarchical probabilistic graph leads to the transform of likelihood breaking the conjugacy, where the compound of $\boldsymbol{\theta}$ and $\boldsymbol{\varphi}$ makes it impossible to revise the parameters directly from prior, and eventually causes the whole posterior to be intractable.
Therefore, we need to employ approximating methods such as Variational Inference or Expectation Propagation with tractable distributions to estimate the intractable target posterior.
The mean-field variational inference is introduced in the last section, and we demonstrate the Expectation Propagation routine in the following.

\vspace{1em}
\noindent
\textbullet\; \textbf{Dual conjugate expressions}
\vspace{0.5em}

\noindent
In EP, we try to approximate the target distribution by an approximating distribution with the similar form:
\begin{equation*}
    q(\boldsymbol{\theta})
    \propto p(\boldsymbol{\theta}|\boldsymbol{\xi}) \exp \Bigl\langle n_v, \log \Tilde{t}_v (\boldsymbol{\theta})\Bigr\rangle_{v}
\end{equation*}
In contrast to variational mean-field where we simply optimize the approximating distribution's parameter by maximizing the ELBO, EP performs a more in-depth fine-grained calibration based on the structure of target distribution by constraining the functional space of the approximating distributions.
In addition to the requirements of tractablility such as pre-normalization and independence that variational mean-field complies, EP further expects the production form of the approximating distribution, in accordance with the prior-likelihood-points product form of the target distribution.
Moreover, due to the nature of likelihood, the observation $\boldsymbol{n_{(v)}}$ is incorporated into the approximator, which in effect imposes a constraint on the functional space of the approximating distribution.
Given the bijective correspondence of the terms between the approximator and the target, distributed term-to-term approximation is achieved by moment matching in a serial or parallel manner, while the simplicity of global normalization is maintained.
Based on the further requirements on approximator in EP, Dirichlet-Tree with the same selection matrix $\mathbf{D}$ as the prior's is chosen as the approximator of target posterior, thanks to its dual conjugate expressions:
\begin{equation*}
\begin{aligned}
        q(\boldsymbol{\theta}|\boldsymbol{\zeta}_{\boldsymbol{n}})
        &= q(\boldsymbol{\theta}|\boldsymbol{\xi}+ \mathbf{D}_{q} \boldsymbol{n_{(k)}}) \\[1ex]
        &= \mathbf{B}_{q} (\boldsymbol{n_{(k)}}) p(\boldsymbol{\theta}|\boldsymbol{\xi}) \\[1ex]
        &= \left\{ \left[ \frac{g(\boldsymbol{\eta}(\boldsymbol{\zeta}_{\boldsymbol{n}}))}{g(\boldsymbol{\eta}(\boldsymbol{\xi}))} \right]^{-1} \exp \Bigl\langle n_k, \log \theta_k \Bigr\rangle_{k} \right\} p(\boldsymbol{\theta}|\boldsymbol{\xi}) \\[1ex]
        &\propto p(\boldsymbol{\theta}|\boldsymbol{\xi}) \exp \Bigl\langle n_k, \log \theta_k \Bigr\rangle_{k}
\end{aligned}
\end{equation*}
where $\mathbf{D}_{q}$ is Dirichlet selection matrix and $\mathbf{B}_{q}$ the Bayesian operator.
We call the first expression whose parameters are directly revised the explicit form of approximator, and the second expression conforming the target the implicit form of approximator.
It is noteworthy to mention that the global normalizer is determined if latent $\{ n_k \}_K$ on the set of topic  is given.

\vspace{1em}
\noindent
\textbullet\; \textbf{Explicitization of pseudo-observation}
\vspace{0.5em}

\noindent
Even though it closely resembles the target distribution, the inner product expression for the likelihood terms is with respect to the topic dimension $k$ which relies on the latent variable $\boldsymbol{z}$, instead of the word dimension $v$ in those of the target distribution which is drawn directly from the observed $\boldsymbol{w}$.
To continue constructing our approximate distribution, we introduce the transition matrix $\{\phi_{kv}\}_{(K \times V)}$, which leads to the counting matrix of pseudo-observation $\{\phi_{kv} n_v\}_{(K \times V)}$, so that the vector $\{\phi_{kv} n_v\}_K$ approximates to $\{n_k\}_K$.
Observation transition matrix $\boldsymbol{\phi}$ can roughly be viewed as the pseudo-proportions of $k^{th}$ topic in the authentic count of $v^{th}$ word in a document, therefore the normalization is required.
And we call the reverse transform procedure $\{ \phi_{kv} n_v \}_{(K \times V)}$, which formulates an explicit counting matrix representing both the observed and the latent variables, the explicitization of pseudo-observation on latent variable---in our case the latent topics as direct results of the stochastic procedure subject to the multinomial whose parameters are dominated by the Dirichlet-Tree prior. 
We say ``pseudo" because the reverse transform does not strictly reflect the genuine integer counts as described in our model, but gives the decimal estimations instead. 
Therefore, we have:
\begin{equation*}
\begin{aligned}
    q(\boldsymbol{\theta}|\boldsymbol{\zeta}_{\boldsymbol{n}})
    &\propto p(\boldsymbol{\theta}|\boldsymbol{\xi}) \exp \biggl\langle \sum_{v} \phi_{kv} n_v, \log \theta_k \biggr\rangle_k \\[1ex]
    &= p(\boldsymbol{\theta}|\boldsymbol{\xi}) \exp \sum_v n_v \Bigl \langle \phi_{kv},\log \theta_k \Bigr \rangle_k \\[1ex]
    &= p(\boldsymbol{\theta}|\boldsymbol{\xi}) \exp \Bigl \langle n_v, \bigl \langle \phi_{kv}, \log \theta_k \bigr  \rangle_k \Bigr \rangle_v \\[1ex]
    &= p(\boldsymbol{\theta}|\boldsymbol{\xi}) \exp \biggl\langle n_v, \log \prod_k {\theta_k}^{\phi_{kv}} \biggr\rangle_v
\end{aligned}
\end{equation*}

It is easy to tell that the explicitization of pseudo-observation is the process of reversely transforming from the authentic observed variable to some approximated latent variables, which are usually the unseen results of random process controlled by the parameters of interest, so that the information from authentic observation can be passed to an object which helps build a tractable explicit approximate posterior on those parameters of interest.
As desired by the inference, the resulted approximate posterior remains a similar implicit product form in compliance with the target prior-likelihood-points posterior. 
We call the object the pseudo-observation.

\vspace{1em}
\noindent
\textbullet\; \textbf{Term-wise synchronization}
\vspace{0.5em}

\noindent
Now, we have a quite similar form to the target distribution with $t(\boldsymbol{\theta};\boldsymbol{\varphi})$ replaced by $\prod_k {\theta_k}^{\phi_{kv}}$. 
According to Jensen's inequality:
\begin{equation*}
        \prod_{k=1}^{K} {\theta_{k}}^{\phi_{kv}}
        = \exp \left\{ \sum_{k=1}^{K} \phi_{kv} \log \theta_k \right\}  
        \leq \sum_{k=1}^{K} \phi_{kv} \theta_k  
\end{equation*}
where $\sum_{k=1}^{K} \phi_{kv} = 1$.
However, we have:
\begin{equation*}
        0 \leq \sum_{k=1}^{K} \varphi_{vk} \leq K
\end{equation*}
It is likely that the corresponding approximate term will be smaller than the target term---the bigger the topic cardinality $K$ is, the more likely $\prod_{k} {\theta_{k}}^{\phi_{kv}}$ will be smaller than $t(\boldsymbol{\theta};\boldsymbol{\varphi})$.
To solve this inherent limitation, we add a coefficient for each approximate term to synchronize with the target term, which will be removed by global normalizer later.
As a result, we obtain our final form of approximating likelihood terms:
\begin{equation*}
    \Tilde{t}_{v}(\boldsymbol{\theta};\boldsymbol{\phi})
    = s_{v} \prod_{k=1}^{K} {\theta_{k}}^{\phi_{kv}}
\end{equation*}
where $s_{v}$ is the synchronization coefficient such that each $\Tilde{t}_v(\boldsymbol{\theta})$ approximates $t_v(\boldsymbol{\theta})$.
With each approximating term obtained, we derive our final approximating distribution expressions:
\begin{equation*}
\begin{aligned}
        q(\boldsymbol{\theta}|\boldsymbol{\zeta_n})
        &= q(\boldsymbol{\theta}|\boldsymbol{\xi} + \mathbf{D}_{q} \boldsymbol{\phi} \boldsymbol{n_{(v)}}) \\[1ex]
        &= \mathbf{B}_{q} (\boldsymbol{\phi} \boldsymbol{n_{(v)}}) p(\boldsymbol{\theta}|\boldsymbol{\xi}) \\[1ex]
        &= \left[ \frac{g(\boldsymbol{\eta}(\boldsymbol{\zeta_n}))}{g(\boldsymbol{\eta}(\boldsymbol{\xi}))} \exp \Bigl \langle n_v, \log s_v \Bigr \rangle_v \right]^{-1} p(\boldsymbol{\theta}|\boldsymbol{\xi}) \exp \biggl \langle n_v, \log s_v \prod_{k=1}^{K} {\theta_k}^{\phi_{kv}} \biggr \rangle_{v}  \\[1ex]
        &\propto p(\boldsymbol{\theta}|\boldsymbol{\xi}) \exp \Bigl \langle n_v, \log \Tilde{t}_v(\boldsymbol{\theta};\boldsymbol{\phi}) \Bigr \rangle_v
\end{aligned}
\end{equation*}
Now, it is clear that the transition matrix $\{\phi_{kv}\}_{K \times V}$ works as a transformer which converts the observation over vocabulary to pseudo-observation over latent topics to achieve a conjugate posterior on the latent variable of interest, and to build a series of cell likelihoods in compliance with the likelihood of observations conforming the target posterior as a whole without the loss of global normalization. 
Given the transition matrix $\{\phi_{kv}\}_{K \times V}$, observation $\boldsymbol{n_{(v)}}$, and hyper-parameter $\boldsymbol{\xi}$, the parameter $\boldsymbol{\zeta_{n}}$ for the explicit approximating distribution is given as follows:
\begin{equation*}
    \boldsymbol{\zeta_{n}} = \boldsymbol{\xi} \oplus \boldsymbol{\phi}\boldsymbol{n_{(v)}}
\end{equation*}
It is obvious to tell that the implicit form of approximate distribution is now in alignment with the target posterior which holds a prior-likelihood-points product form. 
The process of achieving this bijectively corresponding alignment from approximate distribution to the target posterior is called the alignment of approximate distribution. 
Figure \ref{fig:ep} illustrates the whole picture over the relationships among the target posterior, the explicit approximate distribution, and implicit approximate distribution.

\begin{figure}[t]
    \centering
    \includegraphics[width=0.8\linewidth]{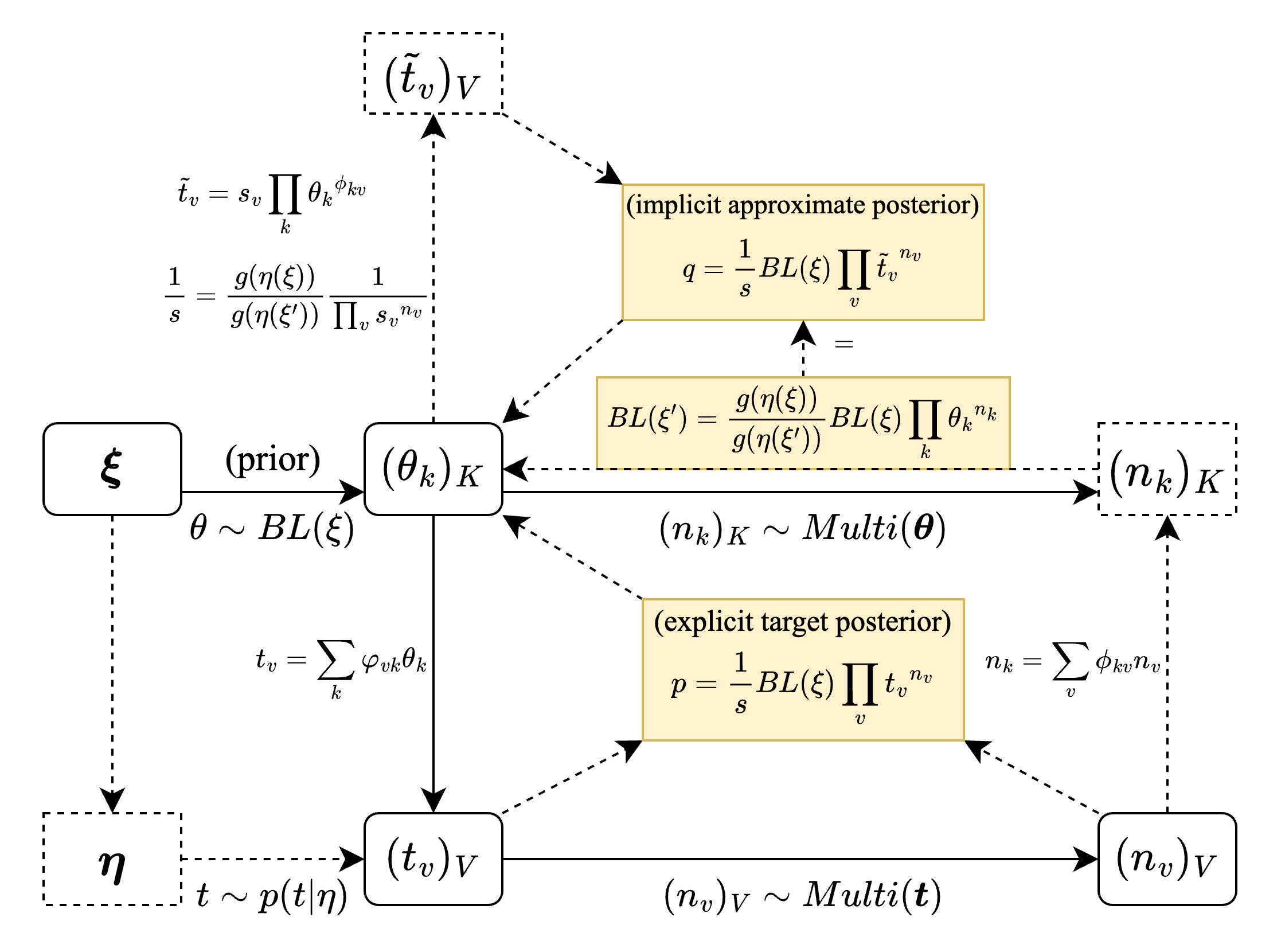}
    \caption{Construction of approximate posterior distribution and its relationship with other parameters. The $(n_v)_V$ and $(n_k)_K$ represents observed count vector of words, and latent count vector of topics, respectively. In explicit posterior distribution, the normalizer, or the evidence $s(\xi, \boldsymbol{\varphi})$ is intractable, which is the reason why we have to use approximate method. Notice $\prod_{k} {\theta_{k}}^{\sum_{v} \phi_{kv} n_{v}} = \prod_{v} \left({\prod_{k} {\theta_{k}}^{\phi_{kv}}}\right)^{n_{v}}$. $\xi^\prime$ is the revision of prior parameter $\xi$ which incorporates the latent count of topics $(n_k)_K$ based on the property of conjugacy.}
    \label{fig:ep}
\end{figure}

\subsection{Cavity and Tilted Distributions}

Given the explicit and implicit expression of the approximating distribution, the approximator's statistical manifold is fixed and the task now is to determine an optimized group of the paramter $(\boldsymbol{\phi}^{*}, \boldsymbol{s_{(v)}}^{*})$ as the final approximation to target posterior. 
With the term-to-term bijective correspondence between the approximating distribution and the target posterior, the problem of global approximation is transformed into the approximation of each term $\Tilde{t}_v(\boldsymbol{\theta};\boldsymbol{\phi})$ to $t_v(\boldsymbol{\theta}; \boldsymbol{\varphi})$ individually. 
A series of cavity and tilted distributions, $\{q(\boldsymbol{\theta}|\boldsymbol{\zeta}^{\backslash v})\}_V$ and $\{p_{v}^{*}(\boldsymbol{\theta})\}_V$ with respect to each term $t_v(\boldsymbol{\theta};\boldsymbol{\varphi})$ or $\Tilde{t}_v(\boldsymbol{\theta}; \boldsymbol{\phi})$ are built to perform the proceeding approximations in a distributed manner.

Theoretically, for an ideal approximator $q(\boldsymbol{\theta}|\boldsymbol{\zeta}^{*})$ where $\{\Tilde{t}_v (\boldsymbol{\theta};\boldsymbol{\phi}^{*}) = t_v(\boldsymbol{\theta};\boldsymbol{\varphi})\}_V$, it would cause no effect if any $\Tilde{t}_v(\boldsymbol{\theta};\boldsymbol{\phi}^{*})$ is devided out and replaced by $t_v (\boldsymbol{\theta};\boldsymbol{\phi})$. 
However, for any initialized value $\boldsymbol{\phi}^{(0)}$, the statement will not hold in general, meaning that the resulted new distributions are neither normalized nor proportional to the original approximator. 
The key of EP is that, due to the nature of the approximating distribution we choose, a group of normalized distributions $\{p_{v}^{*}(\boldsymbol{\theta})\}_V$ with respect to every individual term, which we call the titled distributions, can be formulated by unloading the approximate term and loading the target term, so that one local individual term is separated and focused on each time and further optimizations can be performed with respect to each term iteratively. 
Suppose we have the approximator $q(\boldsymbol{\theta}|\boldsymbol{\zeta}^{(t)})$ in an iteration. 
Normalized as they are, the corresponding $\{p_v^{*\;(t)} (\boldsymbol{\theta})\}_{V}$ are not proportional to $q(\boldsymbol{\theta}|\boldsymbol{\zeta}^{(t)})$ because there is still a distance between $q(\boldsymbol{\theta}|\boldsymbol{\zeta}^{(t)})$ and target $p(\boldsymbol{\theta}|\boldsymbol{n_{(v)}}, \boldsymbol{\varphi}, \boldsymbol{\xi})$. 
Besides, the tilted distributions containing information from both $t_v(\boldsymbol{\theta};\boldsymbol{\varphi})$ and $q(\boldsymbol{\theta}|\boldsymbol{\zeta}^{(t)})$ locate between the target and current approximator. 
Starting from the ``imperfect" initial approximator $q(\boldsymbol{\theta}|\boldsymbol{\zeta}^{(0)})$, the approximator is drawn closer to its titled distributions in each iteration, and therefore is drawn closer to the target posterior, until the iteration converges when $q(\boldsymbol{\theta}|\boldsymbol{\zeta}^{*})$, $\{p_v^{*}(\boldsymbol{\theta})\}_V$, and $p(\boldsymbol{\theta}|\boldsymbol{n_{(v)}}, \boldsymbol{\varphi}, \boldsymbol{\xi})$ almost overlap (Figure \ref{fig:ep_manifold}).

\begin{figure}[t]
    \centering
    \includegraphics[width=0.9\linewidth]{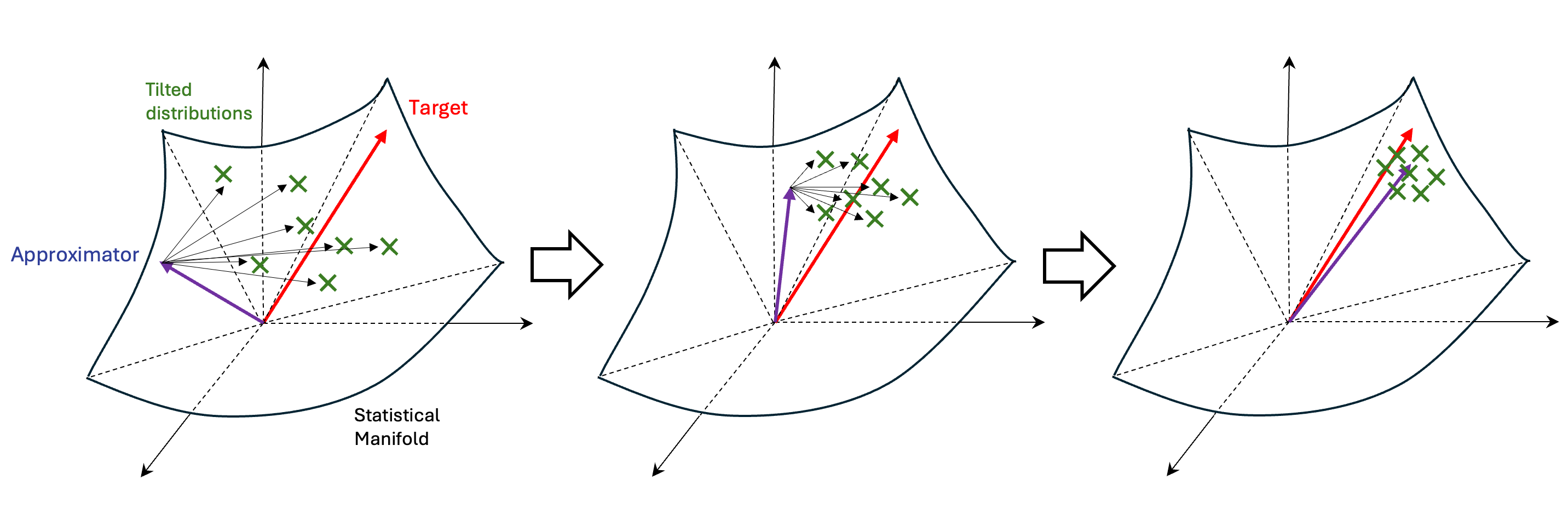}
    \caption{EP inference in each iteration. The statistical manifold is conceptually illustrated with three dimensions. The target, the approximator and the tilted distributions are all located on the statistical Riemann manifold thanks to the tractability of the approximating distribution we choose. The approximator is drawn to the target by the group of tilted distributions in each iteration.}
    \label{fig:ep_manifold}
\end{figure}

The update of the approximator is further elaborated as follows. 
In one iteration, suppose we have the transition matrix $\boldsymbol{\phi}^{(t)}$, and the cavity distributions $\{q(\boldsymbol{\theta}|\boldsymbol{\zeta}^{\backslash v})\}_{V}$ playing the role as a group of background distributions which are formulated as:
\begin{equation*}
    q(\boldsymbol{\theta}|\boldsymbol{\zeta}^{\backslash v})
    = \mathbf{B}^{-1}(\boldsymbol{\phi}_{v}) \mathbf{B}(\boldsymbol{\phi}\boldsymbol{n_{(v)}}) p(\boldsymbol{\theta}|\boldsymbol{\xi})
\end{equation*}
Notice that the cavity distributions are naturally normalized due to tractability of approximating distribution.
Subsequently, the normalized tilted distributions are derived as:
\begin{equation*}
    p_{v}^{*}(\boldsymbol{\theta}) = {z_v}^{-1} \left( \sum_{k=1}^{K} \varphi_{vk} \theta_k \right) q(\boldsymbol{\theta}|\boldsymbol{\zeta}^{\backslash v})
\end{equation*}
where the normalizer $\{z_v\}_V$ is trivial to compute:
\begin{equation*}
    z_v 
    = \int_{\boldsymbol{\theta}} \left( \sum_{k=1}^{K} \varphi_{vk} \theta_k \right) q(\boldsymbol{\theta}|\boldsymbol{\zeta}^{\backslash v}) \mathrm{d}\boldsymbol{\theta} 
    = {\boldsymbol{\varphi}_v}^{\mathrm{T}} \mathbb{E}_{q(\boldsymbol{\theta}|\boldsymbol{\zeta}^{\backslash v})} \left[ \boldsymbol{\theta} \right]
\end{equation*}
The messages from the target term contained in the titled distributions are subsequently passed to a group of new separate approximators, which are of same selection matrix $\mathbf{D}_q$ as the current approximator, by minimizing the KL-divergence:
\begin{equation}
        \boldsymbol{\zeta}_{v}^{sep} := \arg \min_{\boldsymbol{\zeta}_{v}^{sep}} \mathrm{KL}(p_v^{*}(\boldsymbol{\theta})||q(\boldsymbol{\theta}|\boldsymbol{\zeta}_{v}^{sep}))
\label{eq:message_passing}
\end{equation}
This process is equivalent to match the moment of two distributions and will be introduced in the next section. 
After obtaining the optimized $\{\boldsymbol{\zeta}_{v}^{sep}\}_V$, the parameter $\boldsymbol{\phi}$ is updated such that:
\begin{equation*}
        q(\boldsymbol{\theta}|\boldsymbol{\zeta}_{v}^{sep})
        = \mathbf{B}(\boldsymbol{\phi}_{v}^{(t+1)}) q(\boldsymbol{\theta}|\boldsymbol{\zeta}^{\backslash v}) 
        = q(\boldsymbol{\theta}|\boldsymbol{\zeta}^{\backslash v} + \mathbf{D}_q \boldsymbol{\phi}_{v}^{(t+1)}) 
\end{equation*}
and
\begin{equation*}
    \boldsymbol{\phi}_{v}^{(t+1)} := \mathbf{D}_q^{-1}\left(\boldsymbol{\zeta}_{v}^{sep} - \boldsymbol{\zeta}^{\backslash v}\right)
\end{equation*}
where $\mathbf{D}_q^{-1}$ is the inverse selection operator.
In practice, we only focus on the leaf nodes of $\mathbf{D}_q$ to obtain the updated $\boldsymbol{\phi}^{(t+1)}$.

The optimization of synchronization coefficient $\{s_v\}_V$ is the next step. 
Suppose we have $\boldsymbol{\phi}^{(t)}$ in an iteration, and so we have the approximate distribution (regardless of explicit or implicit form): $q(\boldsymbol{\theta}|\boldsymbol{\zeta}^{(t)}) = \mathbf{B}(\boldsymbol{\phi}^{(t)} \boldsymbol{n_{(v)}}) p(\boldsymbol{\theta}|\boldsymbol{\xi})$. 
Ideally, if $q(\boldsymbol{\theta}|\boldsymbol{\zeta}^{(t)})$ approximates target posterior enough, the following equation of the corresponding tilted distributions should hold:

\begin{equation*}
\begin{aligned}
    q_{v}^{*(t)}(\boldsymbol{\theta})
    &\equiv q(\boldsymbol{\theta}|\boldsymbol{\zeta}^{(t)}) \left( s_v \prod_{k=1}^{K} \theta_k^{\phi_{kv}^{(t)}} \right)^{-1} \left( \sum_{k=1}^{K} \varphi_{vk} \theta_k \right) \\[1ex]
    &= \mathbf{B}^{-1}(\boldsymbol{\phi}_v) \mathbf{B}(\boldsymbol{\phi} \boldsymbol{n_{(v)}}) p(\boldsymbol{\theta}|\boldsymbol{\xi}) \left( \sum_{k=1}^{K} \varphi_{vk} \theta_k \right) \frac{g(\boldsymbol{\eta}(\boldsymbol{\zeta}^{\backslash v}))}{g(\boldsymbol{\eta}(\boldsymbol{\zeta}))} {s_v}^{-1} \\[1ex]
    &\approx \mathbf{B}^{-1}(\boldsymbol{\phi}_v) \mathbf{B}(\boldsymbol{\phi} \boldsymbol{n_{(v)}}) p(\boldsymbol{\theta}|\boldsymbol{\xi}) \left( \sum_{k=1}^{K} \varphi_{vk} \theta_k \right) {z_v}^{-1} 
\end{aligned}
\end{equation*}
where $v=1,2,\cdots, V$. 
Consequently, we obtain the update of $\{s_v\}_V$:
\begin{equation*}
    {s_v}^{(t)} := \frac{g(\boldsymbol{\eta}(\boldsymbol{\zeta}^{\backslash v \; (t)}))}{g(\boldsymbol{\eta}(\boldsymbol{\zeta}^{(t)}))} {z_v}^{(t)}
\end{equation*}

According to the above discussion, the parameter update as preparation for message passing in each iteration is concluded as Algorithm \ref{alg:essential_parameters}. 
The whole inference procedure for a single document is concluded as Algorithm \ref{alg:ep_inference}. 
The relationship between approximator, target, cavity and tilted distributions is illustrated in Figure \ref{fig:rela_cavity_tilted}.

\begin{figure}[t]
    \centering
    \includegraphics[width=0.8\linewidth]{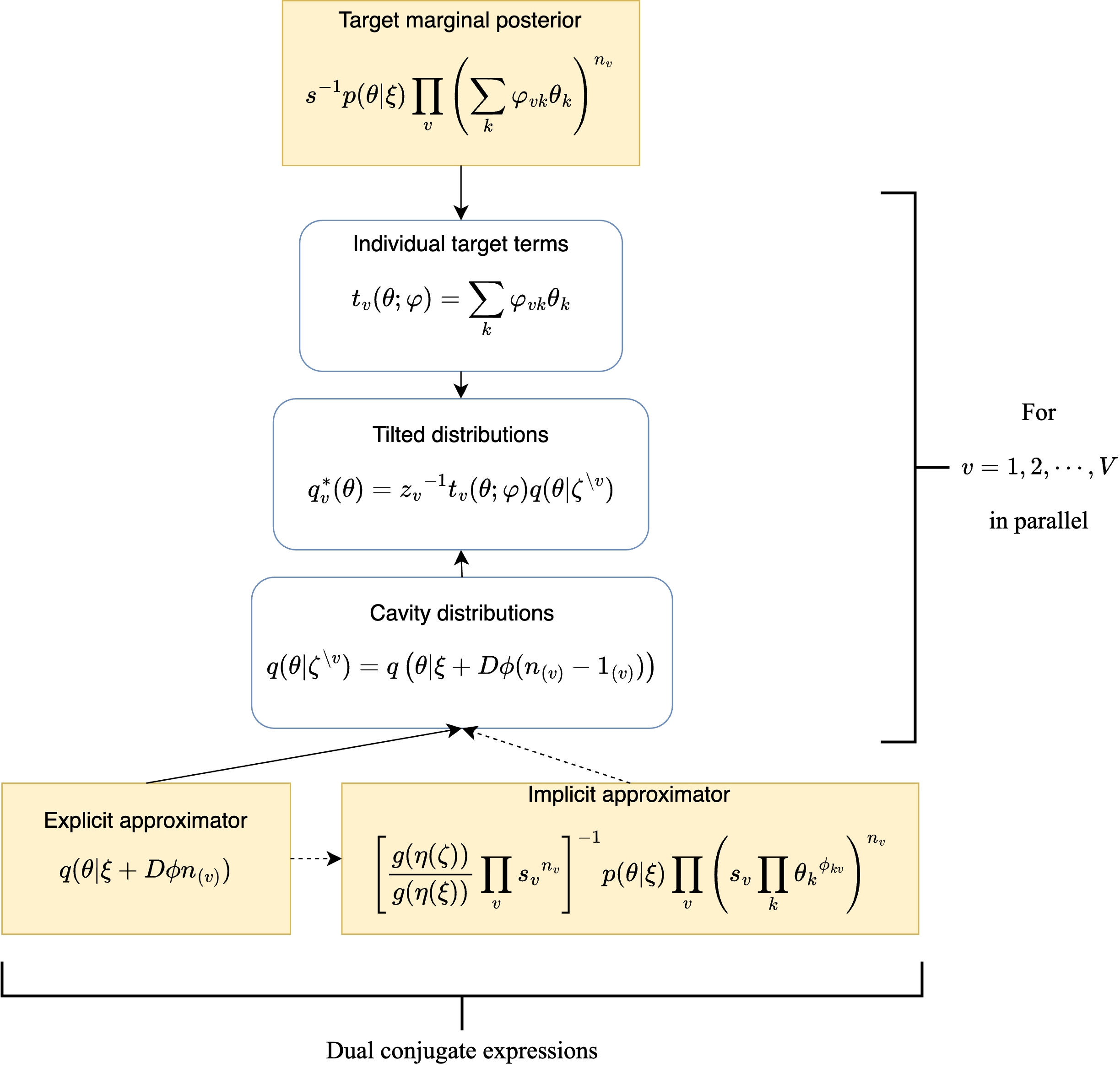}
    \caption{Relationships of Cavity and Tilted distributions}
    \label{fig:rela_cavity_tilted}
\end{figure}

\subsection{Parallel Message Passing}

This section elaborates in detail the process of message passing, which is implemented by moment matching as indicated in \eqref{eq:message_passing}.
The purpose of message passing is to pass the information of target terms $\{t_v(\boldsymbol{\theta};\boldsymbol{\varphi})\}_V$ from the tilted distributions $\{p_{v}^{*}(\boldsymbol{\theta})\}_V$ to a group of separate approximators $\{q(\boldsymbol{\theta}|\boldsymbol{\zeta}_{v}^{sep})\}_V$ respectively, so that the updated $\boldsymbol{\phi}^{(t+1)}$ can subsequently be obtained by $\{q(\boldsymbol{\theta}|\boldsymbol{\zeta}_{v}^{sep})\}_V$ peeling off the backgroud $\{q(\boldsymbol{\theta}|\boldsymbol{\zeta}^{\backslash v})\}_V$ to support the newer approximation $\{\Tilde{t}_v(\boldsymbol{\theta};\boldsymbol{\phi}^{(t+1)})\}_V$ to $\{t_v(\boldsymbol{\theta};\boldsymbol{\varphi})\}_V$. 
The moment matching is achieved in parallel with respect to index $v$.

According to Theorem~\ref{thm:moment_matching} in Appendix~\ref{app:properties_expo}, the optimized separate approximators $\{q(\boldsymbol{\theta}|\boldsymbol{\zeta}_{v}^{sep})\}_V$ defined in \eqref{eq:message_passing} are the ones whose expectations on sufficient statistics equal to those expectations of the titled distributions. 
In other words, the optimization process of $\{q(\boldsymbol{\theta}|\boldsymbol{\zeta}_{v}^{sep})\}_V$ is equivalent to matching the moment of $\{q(\boldsymbol{\theta}|\boldsymbol{\zeta}_{v}^{sep})\}_V$ to $\{p_v^{*}(\boldsymbol{\theta})\}_V$. 
This parallel optimization process can be expressed as follows. 
On the one hand, thanks to the property of exponential family, the expectations on sufficient statistics of separate approximators are easily derived, which span a $(D \times V)$ matrix $\mathbf{Q}(\boldsymbol{\zeta}^{sep})$ with unknowns $\boldsymbol{\zeta}^{sep}$:
\begin{equation*}
        \mathbf{Q} =\left(
                        \begin{array}{ccccc}
                            \mathbb{E}_{q_1^{sep}} \left[ u_1 (\boldsymbol{\theta}) \right] & \cdots & \mathbb{E}_{q_v^{sep}} \left[ u_1 (\boldsymbol{\theta}) \right] & \cdots & \mathbb{E}_{q_V^{sep}} \left[ u_1 (\boldsymbol{\theta}) \right]\\
                            \vdots &        & \vdots &        & \vdots\\
                            \mathbb{E}_{q_1^{sep}} \left[ u_d (\boldsymbol{\theta}) \right] & \cdots & \mathbb{E}_{q_v^{sep}} \left[ u_d (\boldsymbol{\theta}) \right] & \cdots & \mathbb{E}_{q_V^{sep}} \left[ u_d (\boldsymbol{\theta}) \right]\\
                            \vdots &        & \vdots &        & \vdots\\
                            \mathbb{E}_{q_1^{sep}} \left[ u_{D} (\boldsymbol{\theta}) \right] & \cdots & \mathbb{E}_{q_v^{sep}} \left[ u_{D} (\boldsymbol{\theta}) \right] & \cdots & \mathbb{E}_{q_V^{sep}} \left[ u_{D} (\boldsymbol{\theta}) \right]
                        \end{array}
                    \right)
\end{equation*}
On the other hand, the $(D \times V)$-dimensional  expectation matrix $\mathbf{P}(\boldsymbol{\phi}^{(t)})$ on the same group of sufficient statistics with respect to the tilted distributions $\{p_{v}^{*}(\boldsymbol{\theta})\}_V$ is formulated as:
\begin{equation*}
        \mathbf{P} =\left(
                        \begin{array}{ccccc}
                            \mathbb{E}_{p_1^*} \left[ u_1 (\boldsymbol{\theta}) \right] & \cdots & \mathbb{E}_{p_v^*} \left[ u_1 (\boldsymbol{\theta}) \right] & \cdots & \mathbb{E}_{p_V^*} \left[ u_1 (\boldsymbol{\theta}) \right]\\
                            \vdots &        & \vdots &        & \vdots\\
                            \mathbb{E}_{p_1^*} \left[ u_d (\boldsymbol{\theta}) \right] & \cdots & \mathbb{E}_{p_v^*} \left[ u_d (\boldsymbol{\theta}) \right] & \cdots & \mathbb{E}_{p_V^*} \left[ u_d (\boldsymbol{\theta}) \right]\\
                            \vdots &        & \vdots &        & \vdots\\
                            \mathbb{E}_{p_1^*} \left[ u_{D} (\boldsymbol{\theta}) \right] & \cdots & \mathbb{E}_{p_v^*} \left[ u_{D} (\boldsymbol{\theta}) \right] & \cdots & \mathbb{E}_{p_V^*} \left[ u_{D} (\boldsymbol{\theta}) \right]
                        \end{array}
                    \right)
\end{equation*}
Therefore, the moment matching is more specifically expressed by solving a system of $(D \times V)$ non-linear equations:
\begin{equation}              \mathbf{Q}\left(\boldsymbol{\zeta}^{sep}\left(\boldsymbol{\phi}^{(t+1)}\right)\right) := \mathbf{P}\left(\boldsymbol{\phi}^{(t)}\right)
\label{eq:system_equations_message_passing}
\end{equation}
This system of equations can be more explicitly reduced to several sub-groups of equations with respect to the internal nodes of the tree structure in the following form:
\begin{equation*}
    \psi\left(\zeta_{t\mid s}\right) - \psi\left(\sum_{t\mid s} \zeta_{t\mid s}\right) = C_{t\mid s},\quad s\in \boldsymbol{\Lambda}
\end{equation*}
where $\{\zeta_{t\mid s}\}$ is the sub-group of child nodes of node $s$ and $C_{t\mid s}$ the constant from $\mathbf{P}$. 
Each individual sub-group of equations can be solved numerically by fix-point iteration or Newton's method. 
Notice that $\mathbf{P}$ is determined given $\boldsymbol{\phi}^{(t)}$ of current iteration:
\begin{equation}
\begin{aligned}
        \mathbb{E}_{p_v^*} \left[ u_d (\boldsymbol{\theta}) \right]
        &= \int_{\boldsymbol{\theta}} \frac{u_d(\boldsymbol{\theta})}{z_v} \left( \sum_{k=1}^{K} \varphi_{vk} \theta_k\right) q(\boldsymbol{\theta}|\boldsymbol{\zeta}^{\backslash v}) \mathrm{d}\boldsymbol{\theta} \\[1ex]
        & = \frac{\mathbb{E}_{q(\boldsymbol{\theta}|\boldsymbol{\zeta}^{\backslash v})}[u_d (\boldsymbol{\theta})]}{z_v} \int_{\boldsymbol{\theta}} \left( \sum_{k=1}^{K} \varphi_{vk} \theta_k\right) \frac{1}{\mathbb{E}_{q(\boldsymbol{\theta}|\boldsymbol{\zeta}^{\backslash v})}[u_d (\boldsymbol{\theta})]} u_{d}(\boldsymbol{\theta}) q(\boldsymbol{\theta}|\boldsymbol{\zeta}^{\backslash v}) \mathrm{d}\boldsymbol{\theta}  \\[1ex]
        & = \mathbb{E}_{q(\boldsymbol{\theta}|\boldsymbol{\zeta}^{\backslash v})} [u_d (\boldsymbol{\theta})] \frac{ {\boldsymbol{\varphi}_{v}}^{\mathrm{T}} \mathbb{E}_{q_d(\boldsymbol{\theta}|\boldsymbol{\zeta}^{\backslash v})} [\boldsymbol{\theta}]}{ {\boldsymbol{\varphi}_{v}}^{\mathrm{T}} \mathbb{E}_{q(\boldsymbol{\theta}|\boldsymbol{\zeta}^{\backslash v})} [\boldsymbol{\theta}]} \\[1ex]
        & = \frac{{\boldsymbol{\varphi}_v}^{\mathrm{T}} \left( \mathbb{E}_{q(\boldsymbol{\theta}|\boldsymbol{\zeta}^{\backslash v})} \left[ \boldsymbol{\theta} \right] \odot_k \mathbb{E}_{q(\boldsymbol{\theta}|\boldsymbol{\zeta}^{\backslash v} + \mathbf{D}_q \boldsymbol{1}_{(k)})} \left[ u_d(\boldsymbol{\theta}) \right] \right)}{{\boldsymbol{\varphi}_v}^{\mathrm{T}} \mathbb{E}_{q(\boldsymbol{\theta}|\boldsymbol{\zeta}^{\backslash v})} \left[ \boldsymbol{\theta} \right]} \\[1ex]
        &= \frac{\left( \boldsymbol{\varphi}_v \odot_k \mathbb{E}_{q(\boldsymbol{\theta}|\boldsymbol{\zeta}^{\backslash v})} \left[ \boldsymbol{\theta} \right] \right)^{\mathrm{T}} \mathbb{E}_{q(\boldsymbol{\theta}|\boldsymbol{\zeta}^{\backslash v} + \mathbf{D}_q \boldsymbol{1}_{(k)})} \left[ u_d(\boldsymbol{\theta}) \right]}{\left( \boldsymbol{\varphi}_v \odot_k \mathbb{E}_{q(\boldsymbol{\theta}|\boldsymbol{\zeta}^{\backslash v})} \left[ \boldsymbol{\theta} \right] \right)^{\mathrm{T}} \boldsymbol{1}}
\end{aligned}
\label{eq:tilted_distributions_expectation_sufficient_statistics}
\end{equation}
The \eqref{eq:tilted_distributions_expectation_sufficient_statistics} uses the result from \eqref{eq:expectation_tilted_distributions}, where $\{q(\boldsymbol{\theta}|\boldsymbol{\zeta}^{\backslash v}+\mathbf{D}_q \boldsymbol{1}_{(k)})\}_{(K \times V)}$ is the group of base posteriors of $\{q(\boldsymbol{\theta}|\boldsymbol{\zeta}^{\backslash v})\}_V$.
We can see that the right-hand side of the non-linear equations is reduced to quite a concise and elegant closed-form solution, meaning that the moment of a tilted distribution equals to the weighted average of the counterparts of its corresponding cavity distribution's base posteriors, where the weights are given by the Hadamard product of $\boldsymbol{\varphi}$ and $\{\mathbb{E}_{q(\boldsymbol{\theta}|\boldsymbol{\zeta}^{\backslash v})} \left[ \boldsymbol{\theta} \right]\}_{(V \times K)}$.

\begin{algorithm}
    \caption{Essential parameters update}
    \label{alg:essential_parameters}
    \KwIn{Prior $\boldsymbol{\xi}$, $\{\varphi_{vk}\}_{V \times K}$; Observation $\{n_{v}\}_{V}$; $\{\phi_{kv}\}_{K \times V}$ of current iteration}
    \KwOut{$\boldsymbol{\zeta}$; $\{\boldsymbol{\zeta}^{\backslash v}\}_V$, $\{z_v\}_V$ and $\{s_v\}_V$; evidence $p(\boldsymbol{w}_m)$ of current iteration}
        Compute $\boldsymbol{\zeta} = \boldsymbol{\xi} \oplus \boldsymbol{\phi}_m \boldsymbol{n}_m$\;
        \For{$w_v$ in $\mathbb{W}$ in parallel}
        {
            Compute $\boldsymbol{\zeta}^{\backslash v} = \boldsymbol{\zeta} \ominus \boldsymbol{\phi}_v$\;
            Compute $z_v = {\boldsymbol{\varphi}_v}^{\mathrm{T}} \mathbb{E}_{q(\boldsymbol{\theta}|\boldsymbol{\zeta}^{\backslash v})}[\boldsymbol{\theta}]$\;
            Compute $s_v = z_v\; g(\boldsymbol{\eta}(\boldsymbol{\zeta}^{\backslash v})) / g(\boldsymbol{\eta}(\boldsymbol{\zeta}))$\;
        }
        Compute $p(\boldsymbol{w}_m) = \exp \langle n_v, \log s_v\rangle_v \; g(\boldsymbol{\eta}(\boldsymbol{\zeta})) / g(\boldsymbol{\eta}(\boldsymbol{\xi}))$\;
\end{algorithm}

\begin{algorithm}
    \caption{EP inference for a single document $\boldsymbol{w}_m$}
    \label{alg:ep_inference}
    \KwIn{Prior $\boldsymbol{\xi}$, $\{\varphi_{vk}\}_{V \times K}$; Observation count $\{n_{v}\}_{V}$}
    \KwOut{Optimized $\{{\phi_{kv}}^{*}\}_{(K \times V)}$, $\{{s_{v}}^{*}\}_{V}$; Estimated evidence $p(\boldsymbol{w}_m)$}
    
    Initialize each element in $\{\phi_{kv}\}_{K \times V}$ as $1/K$\;
    Initialize essential parameters as Algorithm \ref{alg:essential_parameters}\;
    \While{not convergence}
    {
        \For{$w_v$ in $\mathbb{W}$ in parallel}
        {
            Compute $\boldsymbol{\zeta}_v^{sep}$ by moment matching as Algorithm \ref{alg:moment_matching}\;
            Compute $\boldsymbol{\phi}_v^{(t+1)} = \boldsymbol{\zeta}_v^{sep} \ominus \boldsymbol{\zeta}^{\backslash v}$\;
            Normalize $\boldsymbol{\phi}_v^{(t+1)}$ such that $\sum_k \phi_{kv} = 1$\;
        }
        Update essential parameters as Algorithm \ref{alg:essential_parameters}\;
    }
\end{algorithm}

\begin{algorithm}
    \caption{Moment matching}
    \label{alg:moment_matching}
    \KwIn{Essential parameters obtained in Algorithm \ref{alg:essential_parameters}}
    \KwOut{Optimized separate approximators $\{\boldsymbol{\zeta}_{v}^{sep}\}_{V}$}
        \For{$w_v$ in $\mathbb{W}$ in parallel}
        {
            Compute $\mathbb{E}_{q(\boldsymbol{\theta}|\boldsymbol{\zeta}^{\backslash v})} \left[ \boldsymbol{\theta} \right]$\;
            \For{$d=1$ to $D$ in parallel}
            {
                \For{$k=1$ to $K$ in parallel}
                {
                    Compute $\mathbb{E}_{q(\boldsymbol{\theta}|\boldsymbol{\zeta}^{\backslash v} + \mathbf{D}_q \boldsymbol{1}_{(k)})} \left[ u_d(\boldsymbol{\boldsymbol{\theta}}) \right]$\;
                }
                Compute $\mathbb{E}_{p_{v}^{*}} \left[ u_d (\boldsymbol{\theta}) \right]$ according to \eqref{eq:tilted_distributions_expectation_sufficient_statistics}\;
                Compute $\{\boldsymbol{\zeta}_{v}^{sep}\}_V$ by Newton's method according to \eqref{eq:system_equations_message_passing}\;
            }
        }
\end{algorithm}

\subsection{Parameter Estimation}

\begin{figure}[t]
    \centering
    \includegraphics[width=1.0\linewidth]{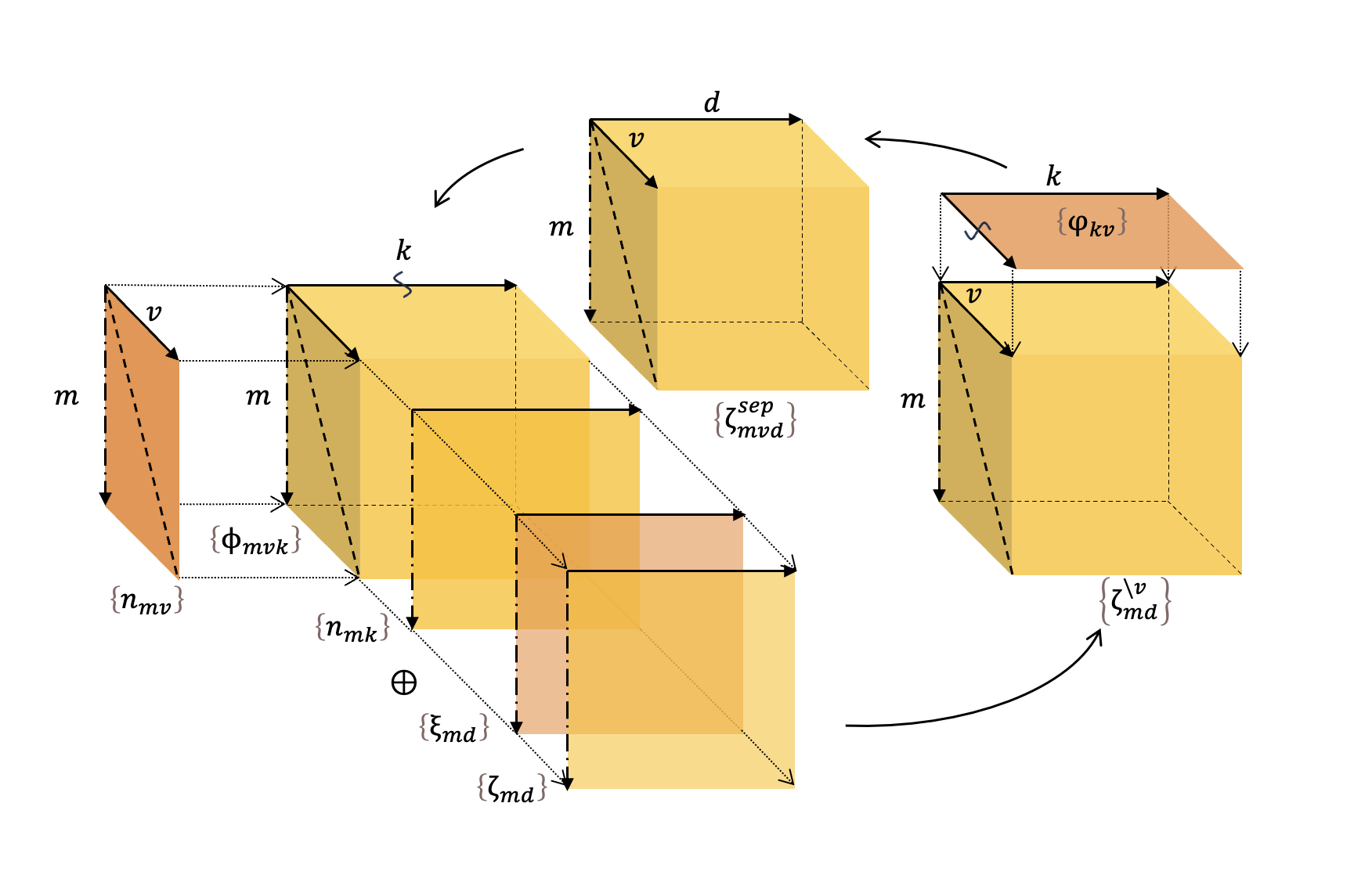}
    \caption{Tensor expression of Expectation Propagation}
    \label{fig:ep_tensor}
\end{figure}

We embedded the EP as the E-step into Expectation-Maximization framework.
The optimized model parameters $\boldsymbol{\xi}$ and $\{\varphi_{vk}\}_{V \times K}$ measured by Kullback–Leibler divergence are estimated by maximizing the Evidence Lower Bound (ELBO) with respect to $\boldsymbol{\xi}$ and $\{\varphi_{vk}\}_{V \times K}$:
\begin{equation}
\label{eq:ep_m}
\begin{aligned}
        L(\boldsymbol{\xi}, \boldsymbol{\varphi}) 
        &= \sum_{m=1}^{M} \mathbb{E}_{q(\boldsymbol{\theta}|\boldsymbol{\zeta}_m)} \left[\log p(\boldsymbol{\theta}|\boldsymbol{\xi})\right] + \sum_{m=1}^{M} \mathbb{E}_{q(\boldsymbol{\theta}|\boldsymbol{\zeta}_m)} \left[ \log p(\boldsymbol{w}_m|\boldsymbol{\varphi}, \boldsymbol{\theta}) \right] + C \\[1ex]
        &= \sum_{m=1}^{M} \left\{ \boldsymbol{\eta}(\boldsymbol{\xi})^{\mathrm{T}}\mathbb{E}_{q(\boldsymbol{\theta}|\boldsymbol{\zeta}_m)} \left[ \mathbf{u}(\boldsymbol{\theta}) \right] - \log g(\boldsymbol{\eta}(\boldsymbol{\xi})) + \mathbb{E}_{q(\boldsymbol{\theta}|\boldsymbol{\zeta}_m)} \left[ \sum_{v=1}^{V} n_{vm} \log \left( \sum_{k=1}^{K} \varphi_{vk}\theta_{k} \right) \right] \right\} + C
\end{aligned}
\end{equation}
The optimization is performed over the collection of all documents $\mathcal{D}$. 
The whole learning algorithm is depicted in Algorithm \ref{alg:total}, and the tensor expression is shown in Figure~\ref{fig:ep_tensor}.

\begin{algorithm}
    \caption{EP-embedded EM algorithm}\label{alg:total}
    \KwIn{Document collection $\mathcal{D}$}
    \KwOut{Approximated distribution parameters $\{\phi_{kvm}\}_{K \times V \times M}$, $\{s_{vm}\}_{V \times M}$, estimated evidence $\{\hat{p}(\boldsymbol{w}_m)\}_{M}$; Model parameters $\boldsymbol{\varphi}$, $\boldsymbol{\xi}$} 
    
    Initialize $\boldsymbol{\varphi}$, $\boldsymbol{\xi}$\;
    \While{not convergence}
    {
        \tcp{EP-Inference}
        \For{$\boldsymbol{w}_m$ in $\mathcal{D}$}
        {
            Given model parameters, approximate the posterior by EP as Algorithm \ref{alg:ep_inference}\;
        }
        \tcp{Maximization}
        Given the approximated posteriors, optimize model parameters according to \eqref{eq:ep_m}
    }
\end{algorithm}

\section{Experiments}

In this section, we test and evaluate LDTA with three different kinds of Dirichlet-Tree priors: the Dirichlet, the Beta-Liouville and the generalized Dirichlet.
The details of these three distributions are listed in Appendix~\ref{app:examples_DT}.
Both two inference methods are examined in three problem domains: document modeling, document and image classification, and a bioinformatics task. 
Our experiments are performed on several publicly available and well-known datasets: NIPS, Reuters-21578, 20 Newsgroups, 15 Scene Categories and Perpheral Blood Mononuclear Cells (PBMC).

\subsection{Document modeling}

\paragraph{Dataset}
In document modeling, we focus on NIPS dataset, which contains a corpus of 1740 research articles accepted by Neural Information Processing Systems conference from 1987-2016.

\paragraph{Preprocessing}
As a necessary preprocessing procedure for raw documents, we first tokenize the documents and remove numbers, punctuations, and stop words such as those with less than two letters.
Next, all words are reduced to their root form by a standard lemmatizer from NLTK library.
To capture more specific and identifiable semantic objects, bigrams are additionally recognized and added to the token list. 
Finally, we remove the words that are too rare or too common and generate the vocabulary.

\paragraph{Models and experiment description}
After preprocessing and tokenizing the raw documents, we train a series of topic models with different model settings: (i) three typical Dirichlet-Tree priors: original Dirichlet, Beta-Lioville and Generalized Dirichlet; (ii) two vectorized inference methods: Mean-Field Variational Inference and Expectation Propagation; and (iii) a range of topic quantities from 10, 20, to 80.

Therefore, in total, we train and examine $3\times2\times8 = 48$ different topic models. 
We evaluate the obtained models by metrics including (i) convergence of log-likelihood, (ii) predictive perplexity, (iii) topic coherence and (iv) topic diversity.

\subsubsection{Convergence}

The ELBO values are computed and recorded in each training iteration for all models. 
Equally for all model settings, the training process is judged as converged when the ELBO's change rate is less than 1e-4.
Fig. \ref{fig:elbo w.r.t. iterations} shows the ELBO values with respect to each iteration in different models.
We can see that the training process of all models converges successfully after several iterations.
The models with variationl inference generally achieves a little advantage in final ELBO values, while the models with expectation propagation converges in less iterations.

\subsubsection{Predictive perplexity}

Perplexity measures the quality of a topic model by evaluating its predictive capability on hold-out test set.
We split the dataset into 90\% training set and 10\% test set to check the models' predictive perplexity. 
We first train a model on the training set, and then perform inference on the test set.
In practice, the perplexity in our experiment is computed as follows:
\begin{equation*}
    \mathrm{Perplexity}(D_{test}) = \exp\left\{ - \frac{\sum_{m=1}^{M}\sum_{v=1}^{V} n_{mv}\log\sum_{k=1}^{K}\varphi_{vk} \mathbb{E}_{q_{m}(\boldsymbol{\zeta})}\left[ \theta_k \right]}{\sum_{m=1}^{M}\sum_{v=1}^{V} n_{mv}} \right\}
\end{equation*}
Fig. \ref{fig:perplexity} plots the perplexity of different model settings.

\subsubsection{Topic Coherence and Diversity}

Topic coherence and topic diversity are two complementary metrics to measure the quality of learned topics. 

Topic coherence targets at each individual topic and evaluate the semantic similarity within the top words assigned to a topic.
Popular topic coherence measures include UCI Coherence, UMass Coherence, and CV Coherence. 
In our experiments, we use a mixing topic coherence measure provided by Gensim library.
Top ten words are chosen for each topic, and one model's coherence value is computed by averaging the coherence values of each topic. 

On the contrary, topic diversity measures how ``orthogonal", or dissimilar the generated topics are against each other.
A good model should have diverse topics to convey abundant information dimensions.
In practice, our topic diversity is computed as follows:
\begin{equation*}
    \mathrm{Diversity}(\boldsymbol{\varphi}) = \frac{U(N)}{K N}
\end{equation*}
where $K$ is the number of topics, $N$ is number of top words chosen for each topic, and $U$ is number of unique words within these top words.
We choose $N=10$ for our experiments.
Fig. \ref{fig:coherence} and Fig. \ref{fig:diversity} shows the results of topic coherence and topic diversity, repectively.
We observe that Variational Inference generally has better topic coherence, while expectation propagation has better topic diversity.

\begin{figure}[htbp]
    \centering

    \begin{subfigure}[b]{0.48\textwidth}
        \centering
        \includegraphics[width=\linewidth]{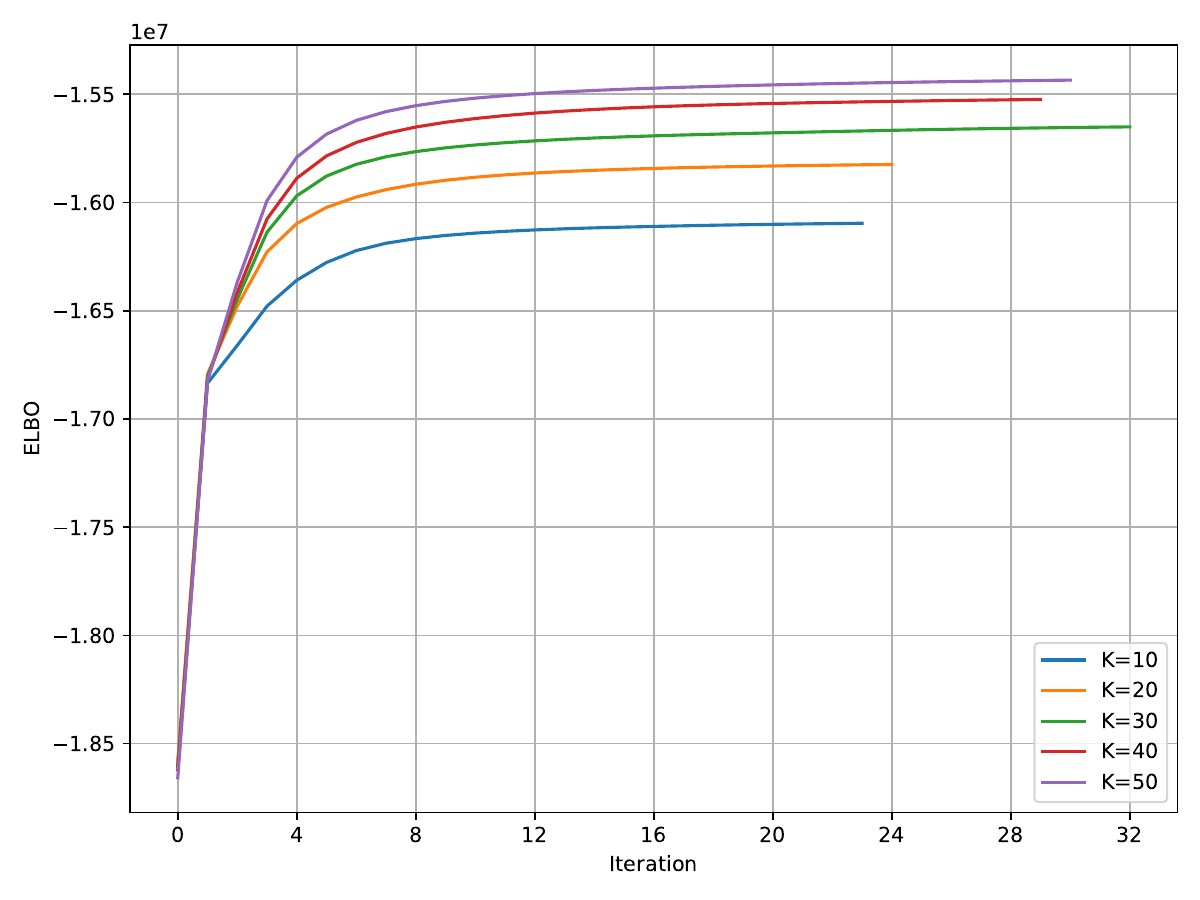}
        \caption{Dir-VI}
    \end{subfigure}
    \hspace{0.02\textwidth}
    \begin{subfigure}[b]{0.48\textwidth}
        \centering
        \includegraphics[width=\linewidth]{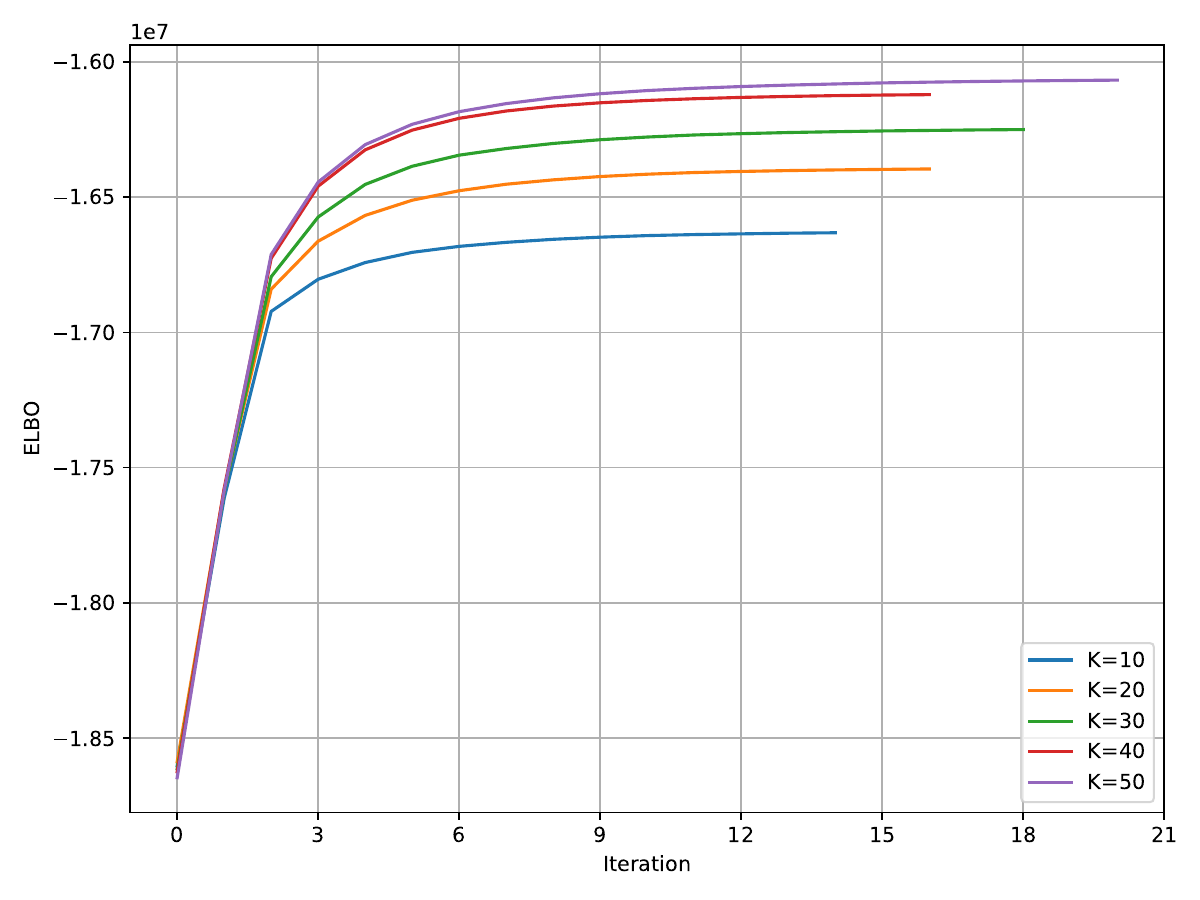}
        \caption{Dir-EP}
    \end{subfigure}

    \vspace{1em}

    \begin{subfigure}[b]{0.48\textwidth}
        \centering
        \includegraphics[width=\linewidth]{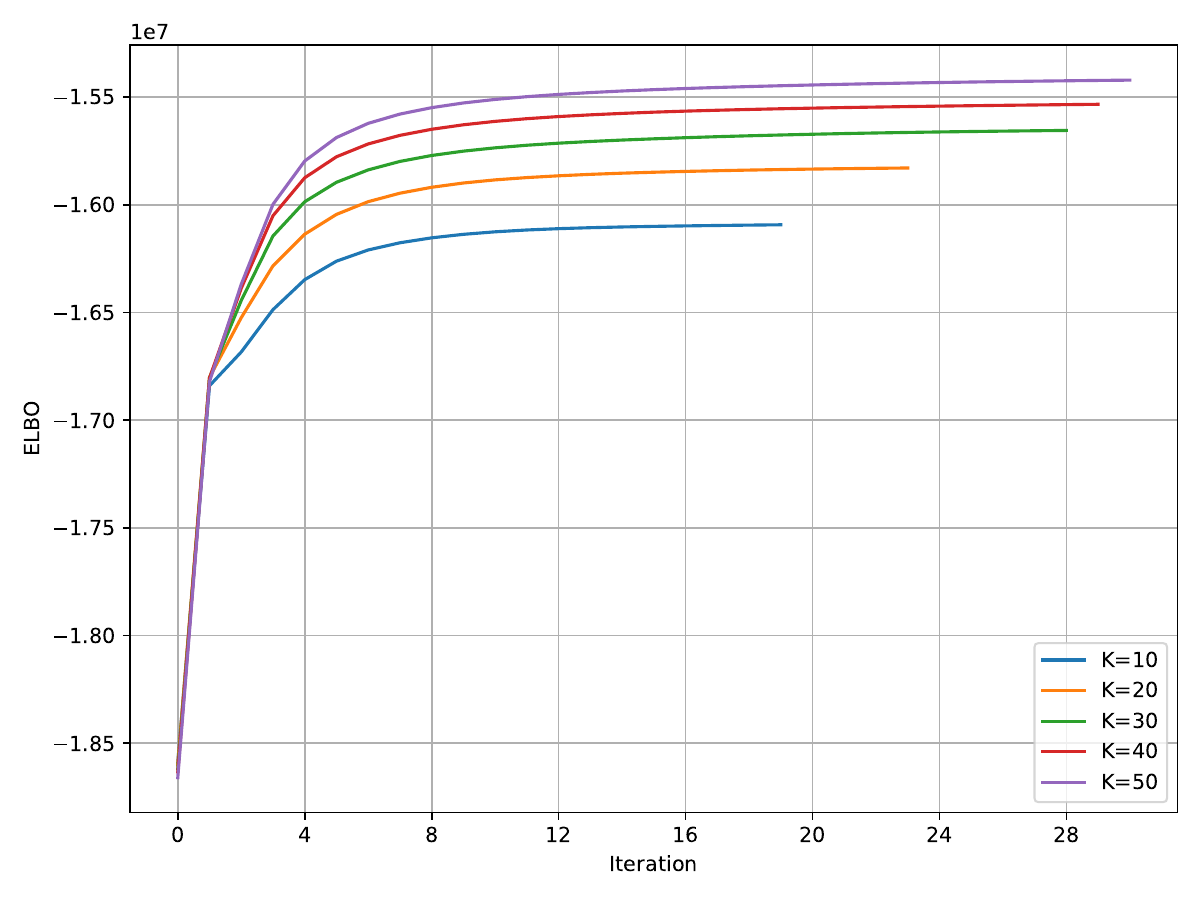}
        \caption{BL-VI}
    \end{subfigure}
    \hspace{0.02\textwidth}
    \begin{subfigure}[b]{0.48\textwidth}
        \centering
        \includegraphics[width=\linewidth]{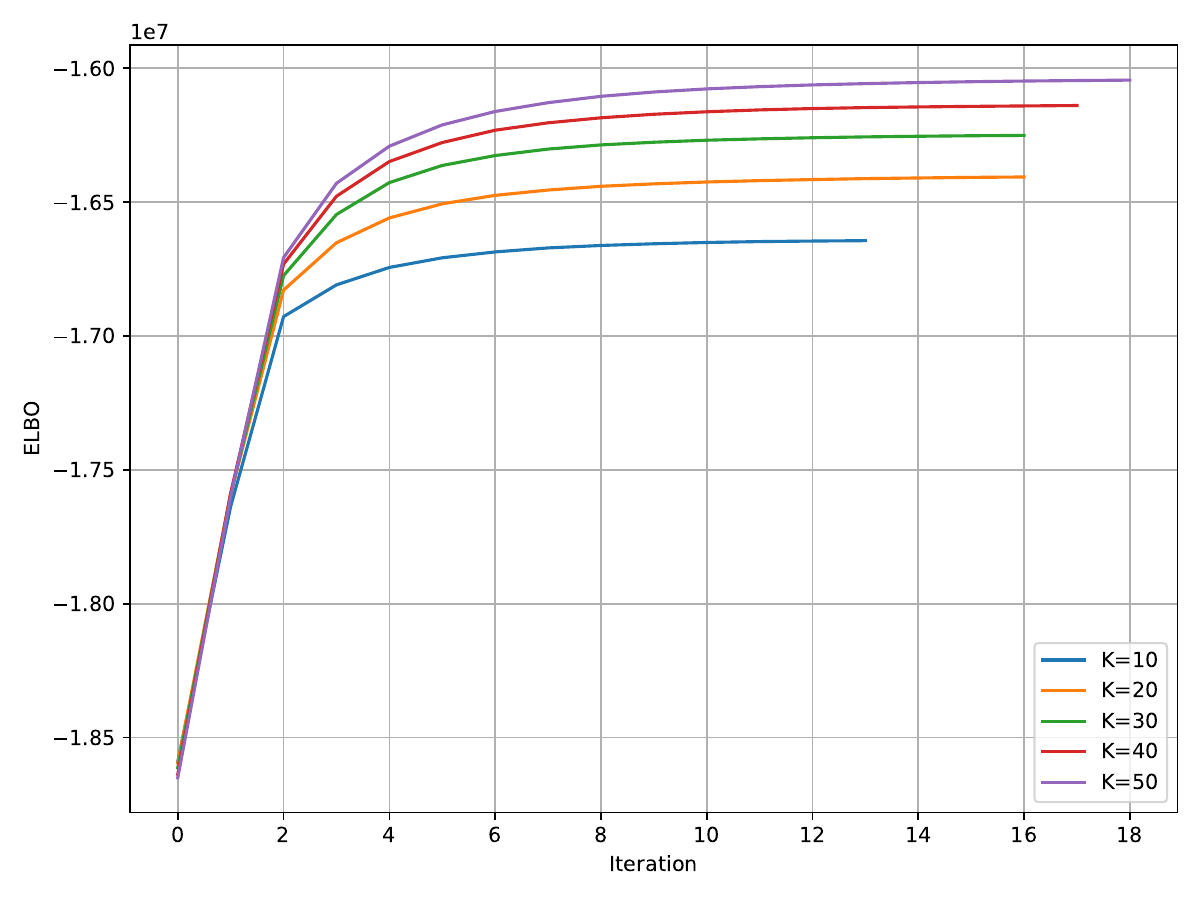}
        \caption{BL-EP}
    \end{subfigure}

    \vspace{1em}

    \begin{subfigure}[b]{0.48\textwidth}
        \centering
        \includegraphics[width=\linewidth]{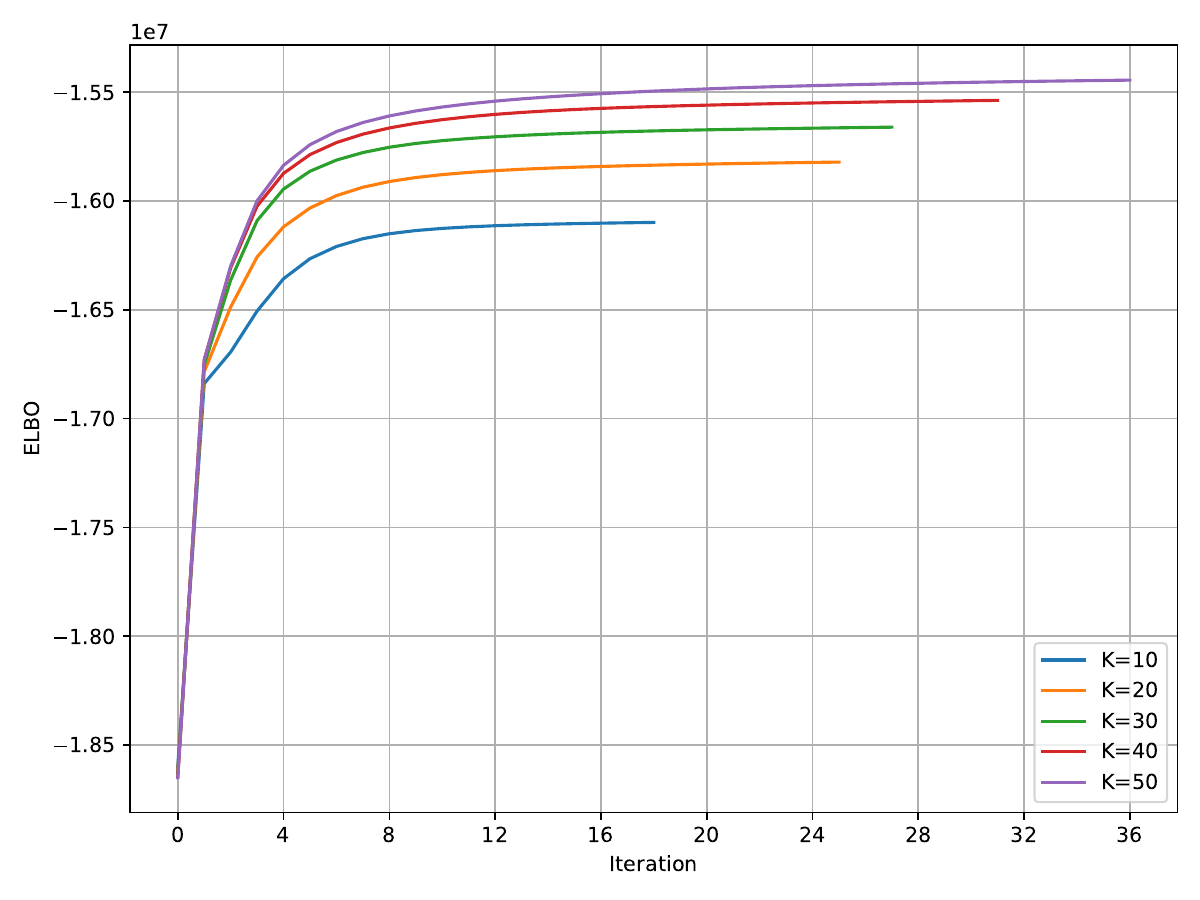}
        \caption{GDir-VI}
    \end{subfigure}
    \hspace{0.02\textwidth}
    \begin{subfigure}[b]{0.48\textwidth}
        \centering
        \includegraphics[width=\linewidth]{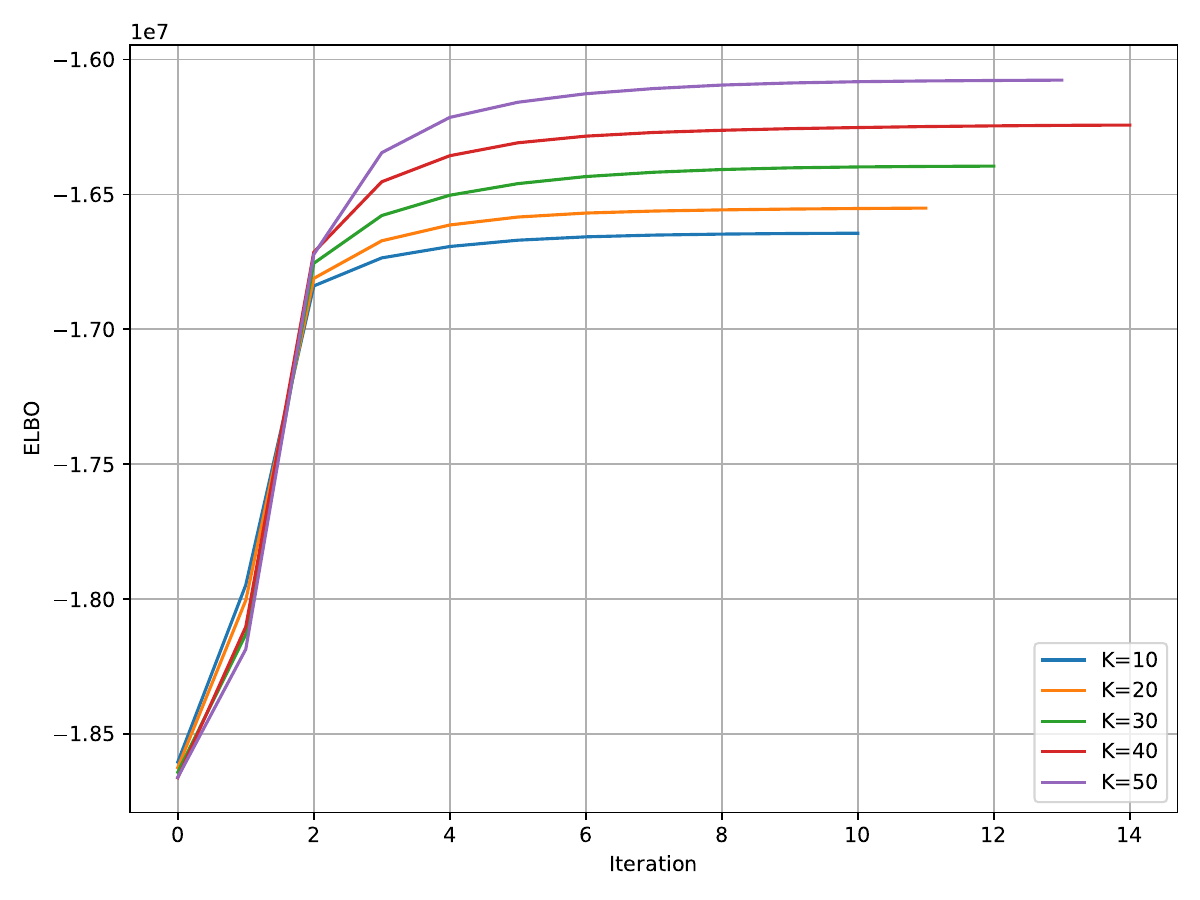}
        \caption{GDir-EP}
    \end{subfigure}

    \caption{ELBO convergence with respect to iterations; iterations stop when ELBO change less than 0.0001}
    \label{fig:elbo w.r.t. iterations}
\end{figure}

\begin{figure}[p] 
\centering

\begin{minipage}{0.48\textwidth}
    \centering
    \includegraphics[width=\linewidth]{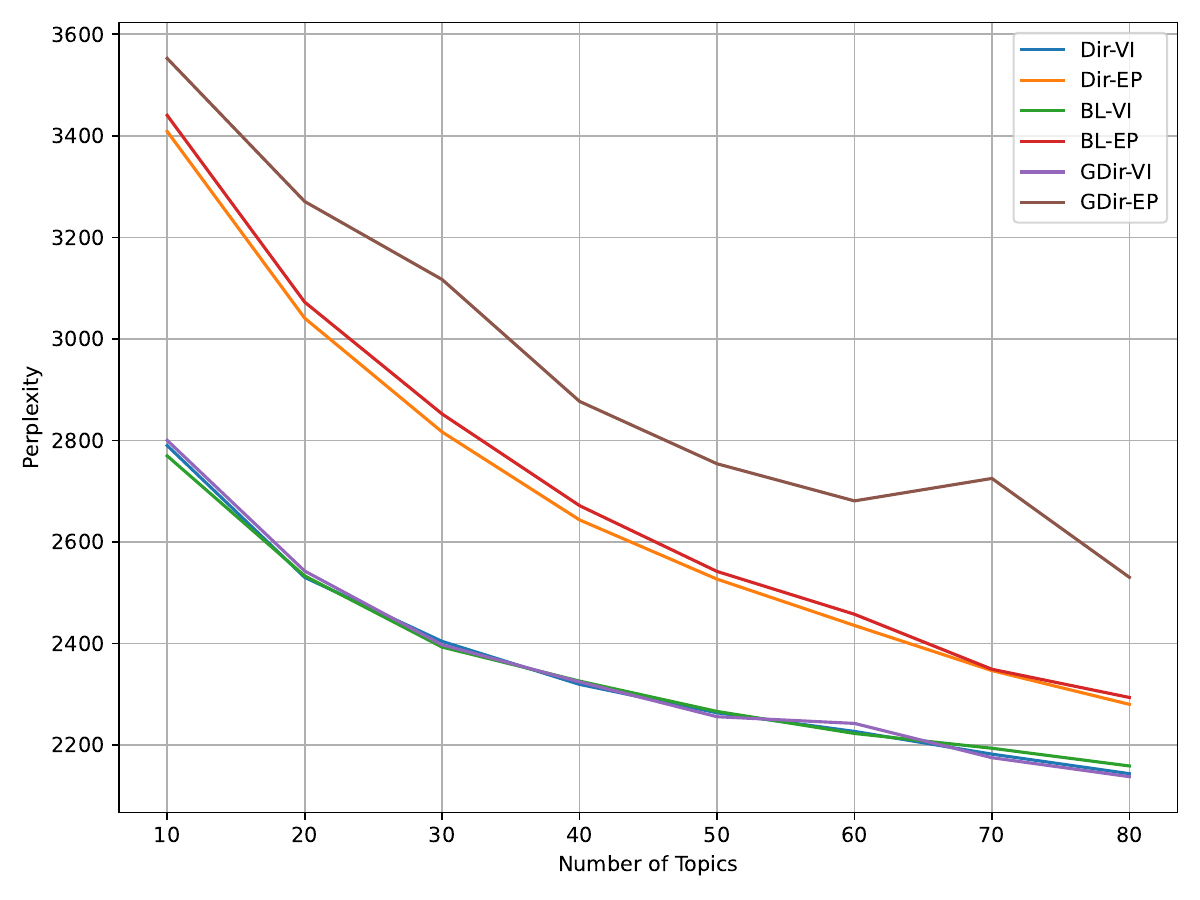}
    \caption{Perplexity}
    \label{fig:perplexity}
\end{minipage}
\hfill
\begin{minipage}{0.48\textwidth}
    \centering
    \includegraphics[width=\linewidth]{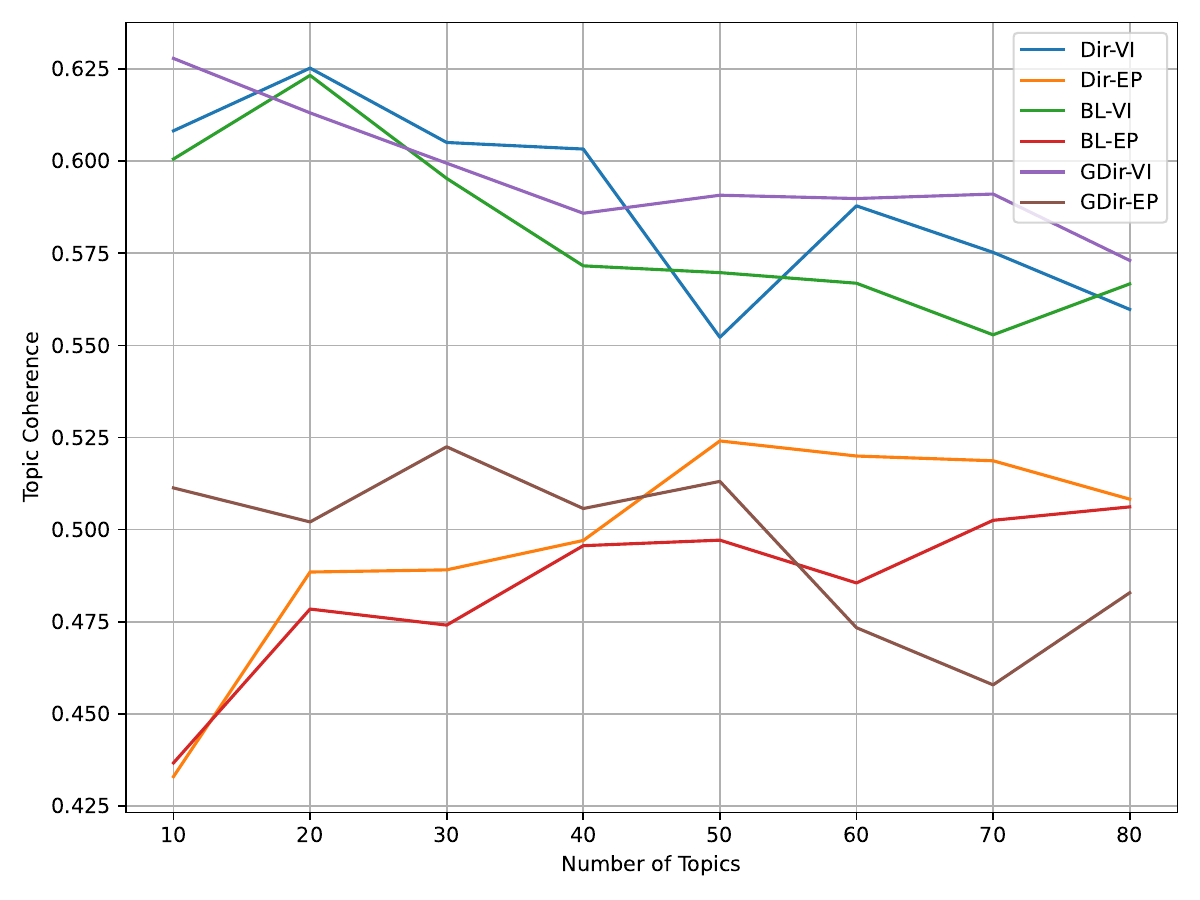}
    \caption{Topic Coherence}
    \label{fig:coherence}
\end{minipage}

\vspace{1.5em}

\begin{minipage}{0.48\textwidth}
    \centering
    \includegraphics[width=\linewidth]{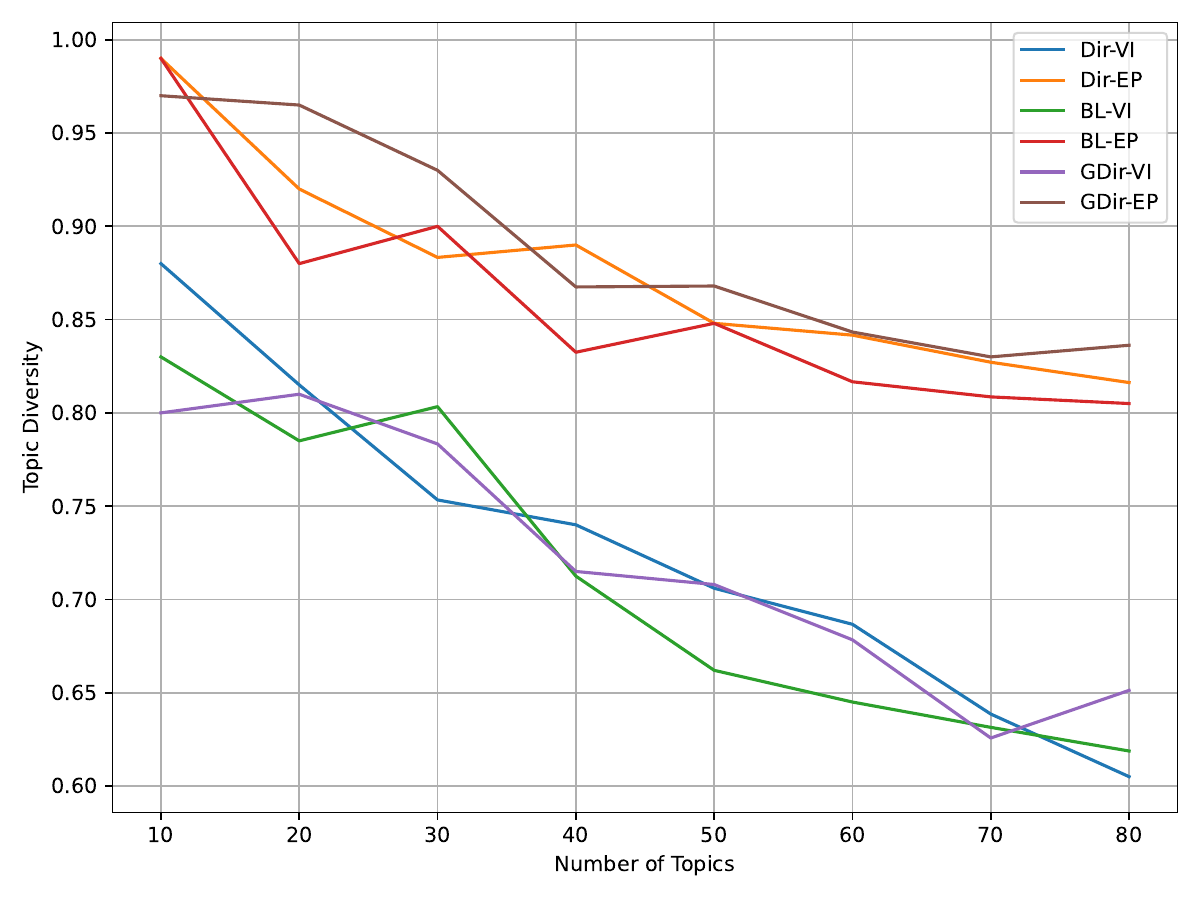}
    \caption{Topic Diversity}
    \label{fig:diversity}
\end{minipage}
\hfill
\begin{minipage}{0.48\textwidth}
    \centering
    \includegraphics[width=\linewidth]{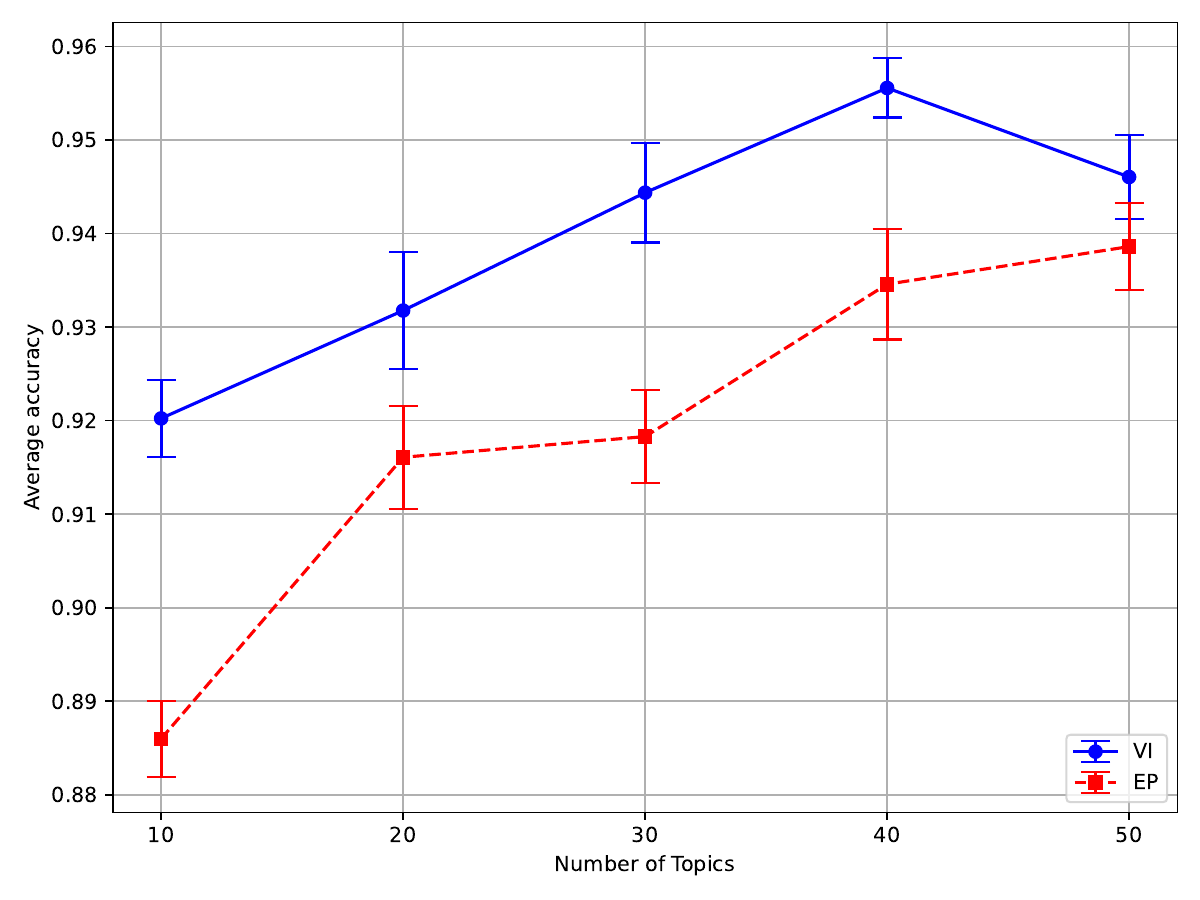}
    \caption{Average classification accuracy of Dirichlet prior}
    \label{fig:Dir_acc}
\end{minipage}

\vspace{1.5em}

\begin{minipage}{0.48\textwidth}
    \centering
    \includegraphics[width=\linewidth]{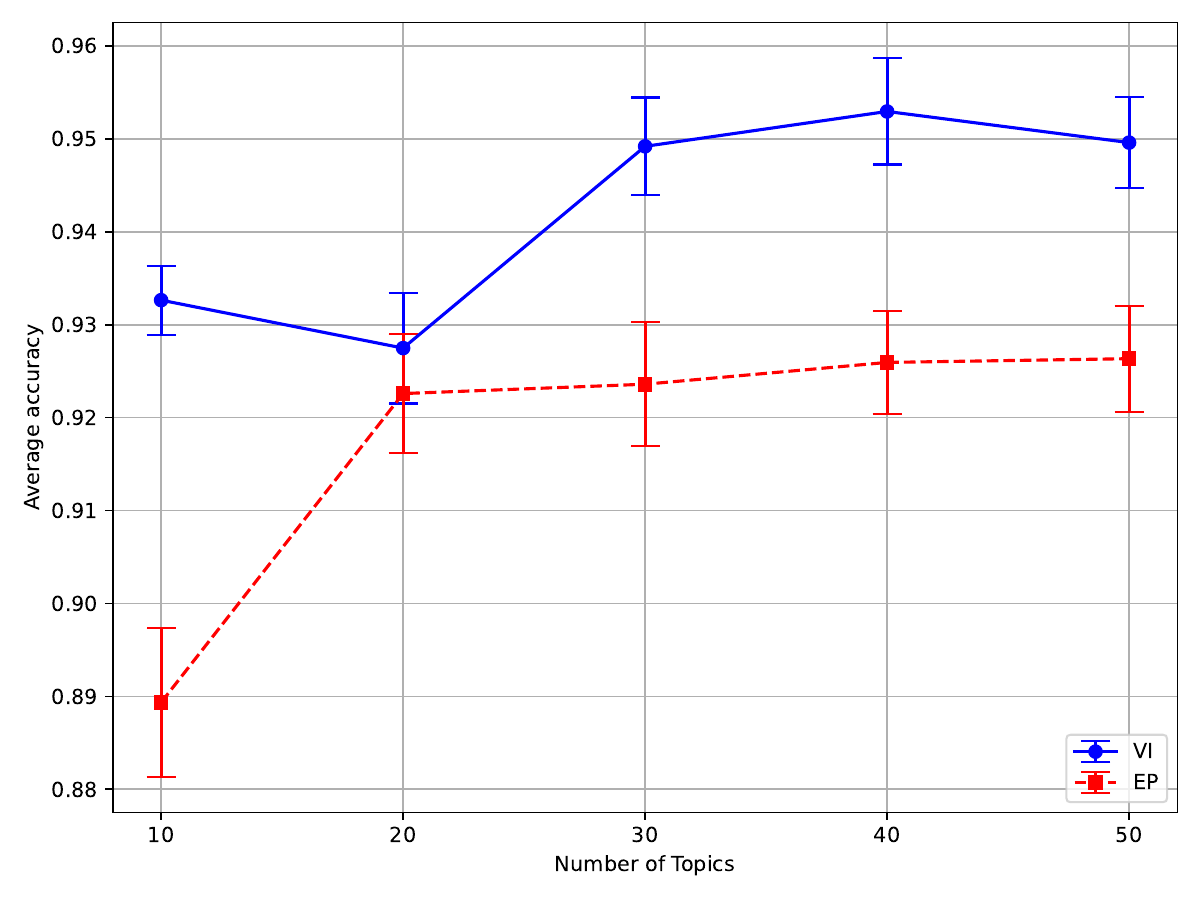}
    \caption{Average classification accuracy of Beta-Liouville prior}
    \label{fig:BL_acc}
\end{minipage}
\hfill
\begin{minipage}{0.48\textwidth}
    \centering
    \includegraphics[width=\linewidth]{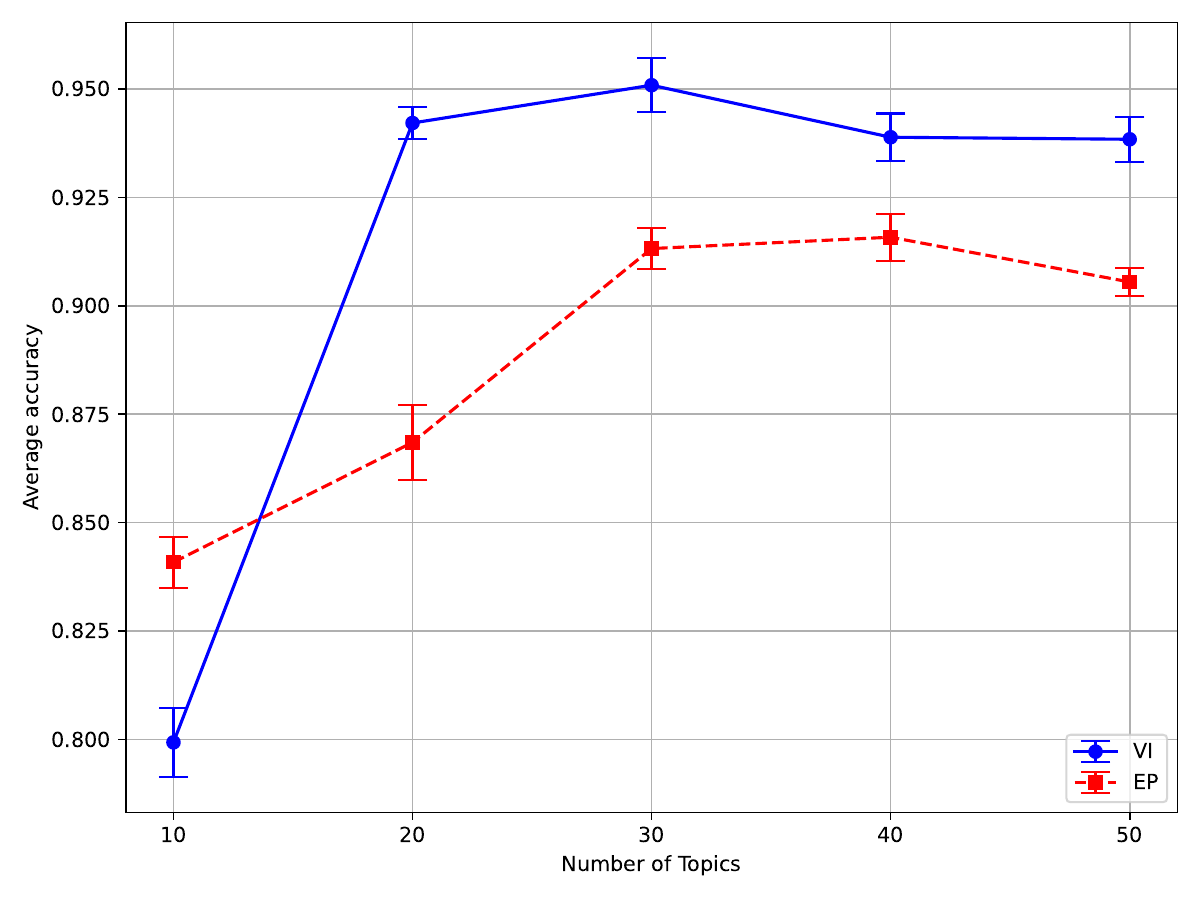}
    \caption{Average classification accuracy of Generalized Dirichlet prior}
    \label{fig:GDir_acc}
\end{minipage}

\end{figure}




\subsection{Document Classification}

In the tasks of document classification, a collection of documents are categorized into binary or multiple classes, where each document is represented by a fixed set of features (Blei 2003).

A natural thinking is to use the counts of all the words that appear in a document as the representative features, yet leading to an extremely large feature space.
Topic models play the role of dimension reduction in document classification, where the large vocabulary is reduced to K-dimensional simplex.

\begin{table}[ht]
\centering
\caption{Number of documents of chosen categories}
\label{table:chosen_documents_reuters}
\vspace{0.3em}
\begin{tabular}{lr}
\toprule
\textbf{Category} & \textbf{\# Documents} \\
\midrule
earn        & 3900 \\
acq         & 2289 \\
crude       &  370 \\
trade       &  324 \\
money-fx    &  306 \\
interest    &  271 \\
\bottomrule
\end{tabular}
\end{table}

In our experiment, we use Reuters-21578 corpus set which is a collection of labeled newswire articles.
After necessary preprocessing, we choose the six categories that contain the most documents.
Finally, we obtain a vocabulary of 4562 unique words, and 7460 documents in total.
The numbers of documents in each category are shown in Table \ref{table:chosen_documents_reuters}. 

We train multiple models with different settings for the whole corpus, and compute the expectation of each document's posterior distribution as its representative feature.
Subsequently, we train a logistic regression classifier based on the learned topic proportions and the original labels.
For each topic model setting, we randomly split the corpus into 80\% training set and 20\% test set for 10 times.
The average classification accuracy, as well as its standard deviation, is computed and employed as its final accuracy result.

Figure \ref{fig:Dir_acc}, Figure \ref{fig:BL_acc} and Figure \ref{fig:GDir_acc} are, respectively, the results of classification accuracy of Dirichlet prior, Beta Liouville prior, and Generalized Dirichlet prior.
Table \ref{table:classification_acc} shows the specific accuracy value of each model setting.
For each model setting, we choose the topic number that generates the highest average accuracy and demonstrate its confusion matrix in Figure \ref{fig:cm}.

\begin{table}[ht]
  \centering
  \begin{tabular}{lccccc}
    \toprule
    \multirow{2}{*}{\textbf{Models}} & \multicolumn{5}{c}{\textbf{Accuracy}} \\
    \cmidrule(lr){2-6}
    & K=10 & K=20 & K=30 & K=40 & K=50 \\
    \midrule
    Dir-VI        & 92.0\%   & 93.2\%   & 94.4\%   & \textbf{95.6\%}   & 94.6\% \\
    Dir-EP        & 88.6\%   & 91.6\%   & 91.8\%   & 93.5\%   & \textbf{93.9\%} \\
    BL-VI         & 93.3\%   & 92.7\%   & 94.9\%   & \textbf{95.3\%}   & 95.0\% \\
    BL-EP         & 88.9\%   & 92.3\%   & 92.4\%   & 92.5\%   & \textbf{92.6\%} \\
    GDir-VI       & 79.9\%   & 94.2\%   & \textbf{95.1\%}   & 93.9\%   & 93.8\% \\
    GDir-EP       & 84.1\%   & 86.8\%   & 91.3\%   & \textbf{91.6\%}   & 90.5\% \\
    \bottomrule
  \end{tabular}
  \caption{Average accuracy of classification}
  \label{table:classification_acc}
\end{table}




\begin{figure}[htbp]
    \centering

    \begin{subfigure}[b]{0.48\textwidth}
        \centering
        \includegraphics[width=\linewidth]{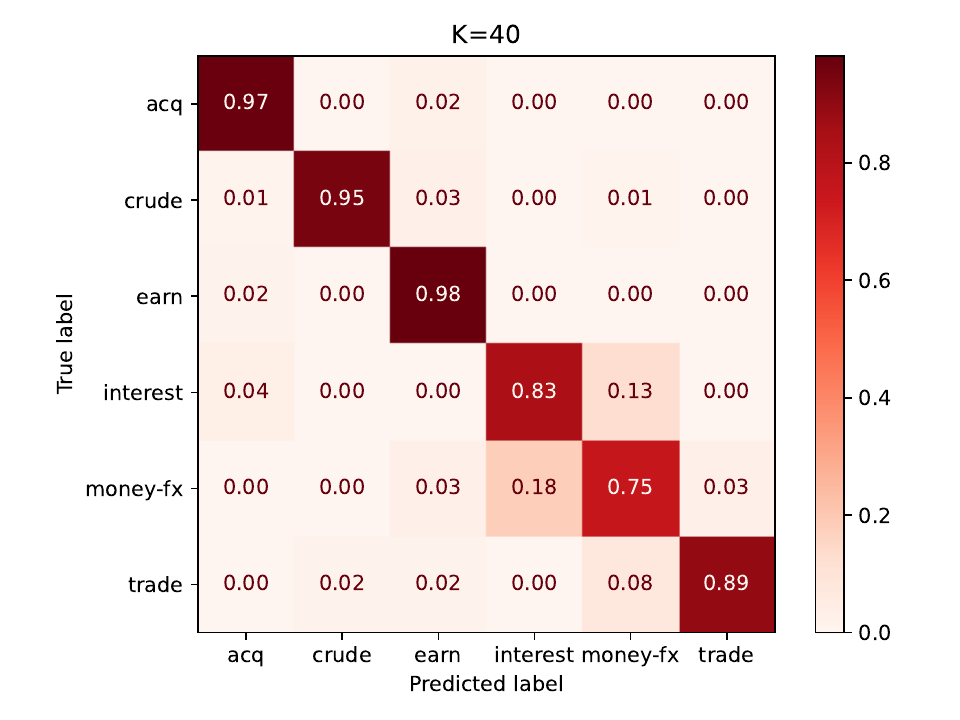}
        \caption{Dir-VI}
    \end{subfigure}
    \hspace{0.02\textwidth}
    \begin{subfigure}[b]{0.48\textwidth}
        \centering
        \includegraphics[width=\linewidth]{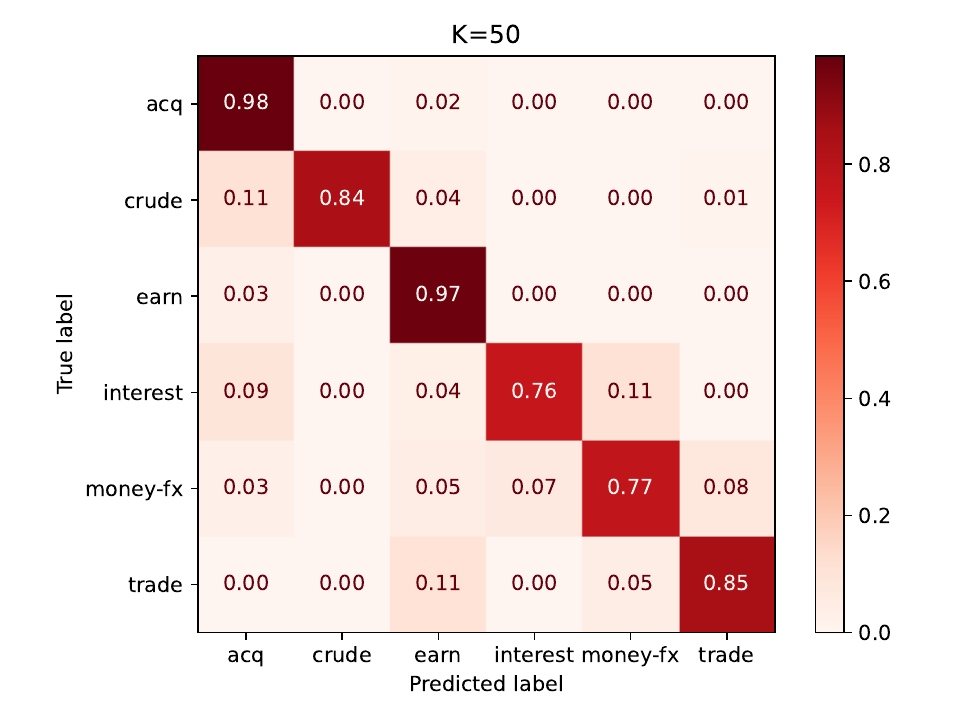}
        \caption{Dir-EP}
    \end{subfigure}

    \vspace{1em}

    \begin{subfigure}[b]{0.48\textwidth}
        \centering
        \includegraphics[width=\linewidth]{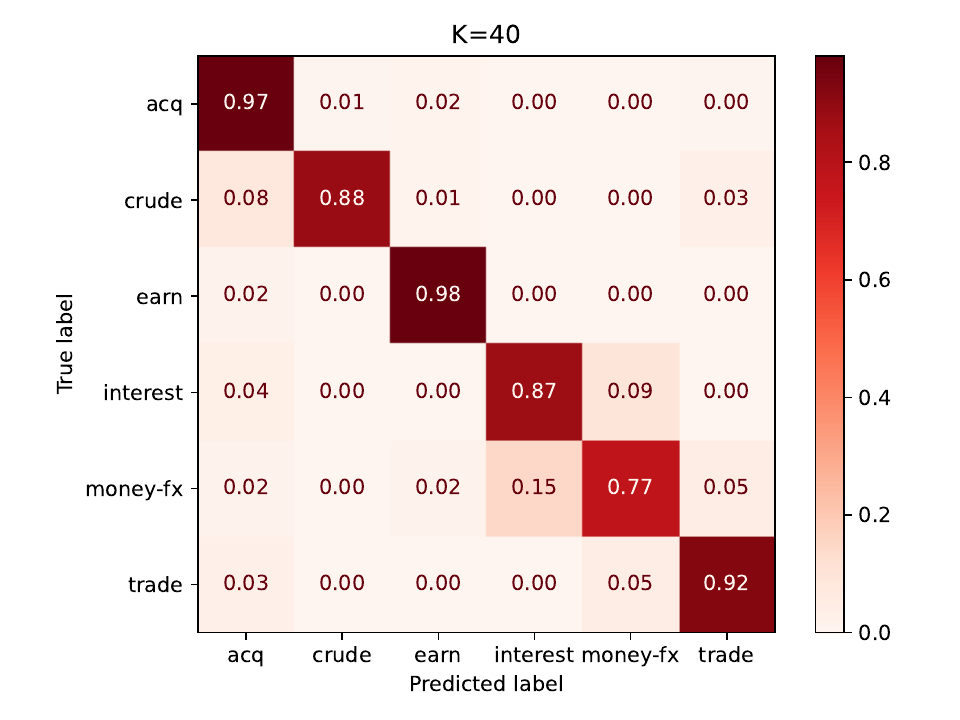}
        \caption{BL-VI}
    \end{subfigure}
    \hspace{0.02\textwidth}
    \begin{subfigure}[b]{0.48\textwidth}
        \centering
        \includegraphics[width=\linewidth]{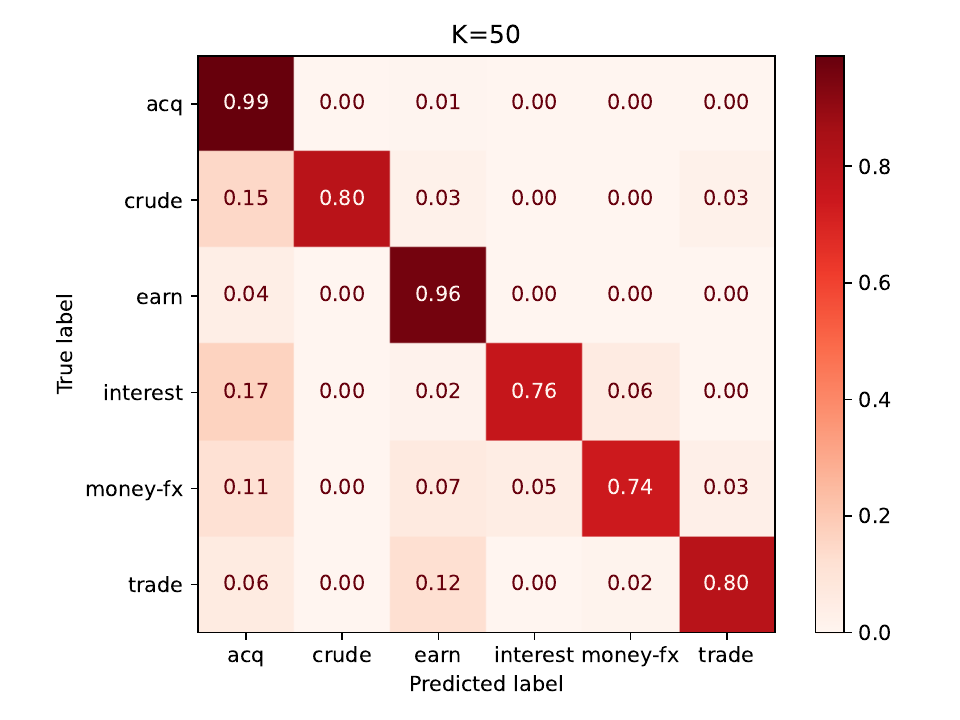}
        \caption{BL-EP}
    \end{subfigure}

    \vspace{1em}

    \begin{subfigure}[b]{0.48\textwidth}
        \centering
        \includegraphics[width=\linewidth]{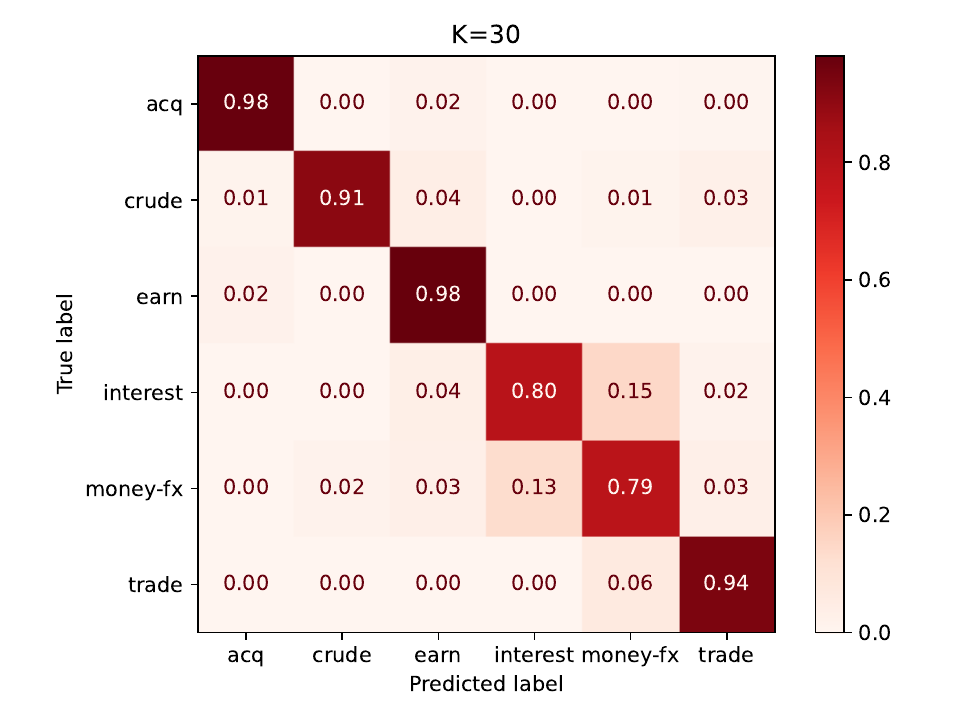}
        \caption{GDir-VI}
    \end{subfigure}
    \hspace{0.02\textwidth}
    \begin{subfigure}[b]{0.48\textwidth}
        \centering
        \includegraphics[width=\linewidth]{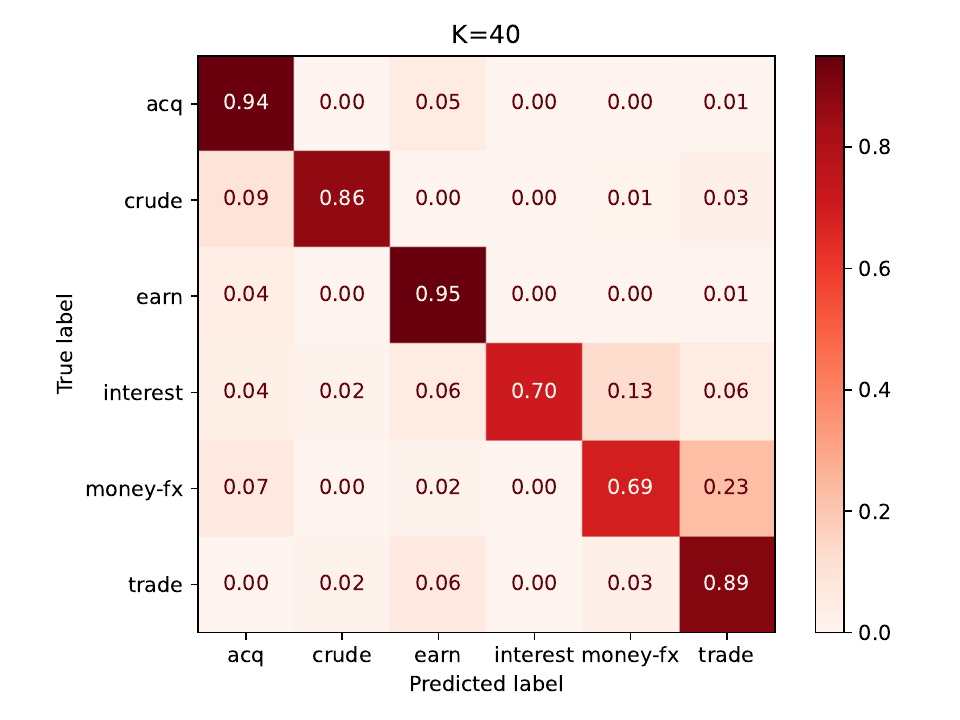}
        \caption{GDir-EP}
    \end{subfigure}

    \caption{Best confusion matrix for each model settings; the titles indicating the topic number}
    \label{fig:cm}
\end{figure}

\subsection{Image Classification}

\begin{figure}[htbp]
    \centering

    \begin{subfigure}[b]{0.41\textwidth}
        \centering
        \includegraphics[width=\linewidth]{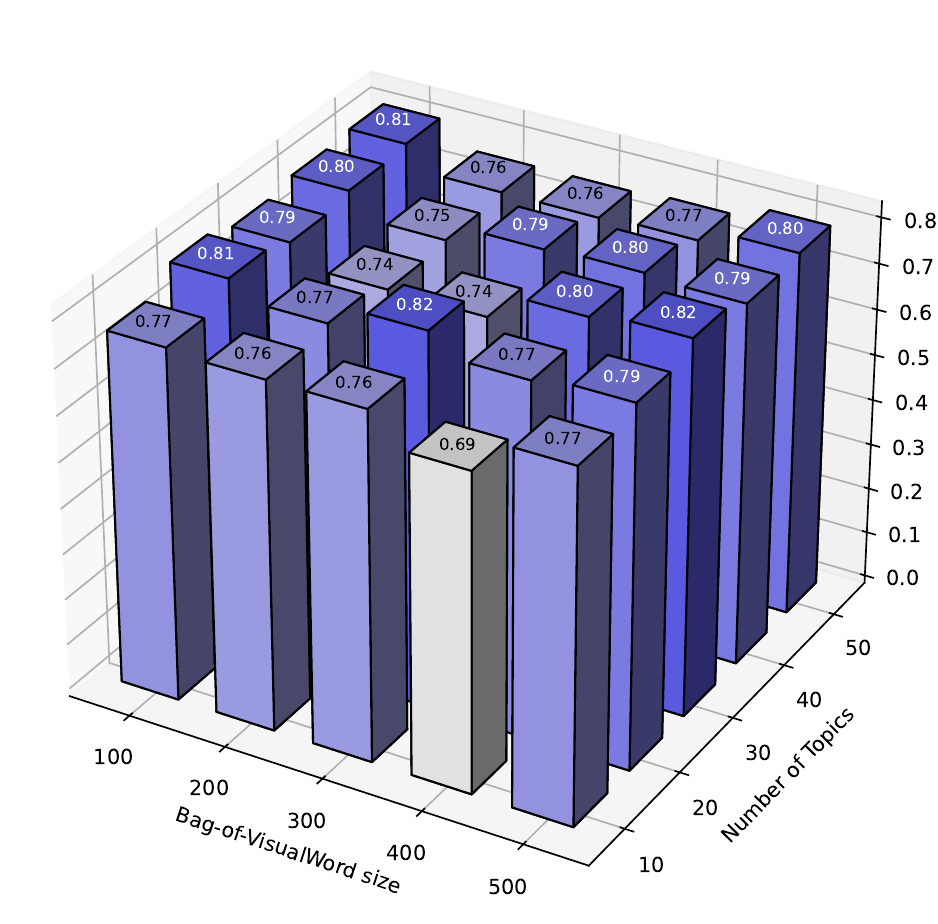}
        \caption{Dir-VI}
    \end{subfigure}
    \hspace{0.02\textwidth}
    \begin{subfigure}[b]{0.41\textwidth}
        \centering
        \includegraphics[width=\linewidth]{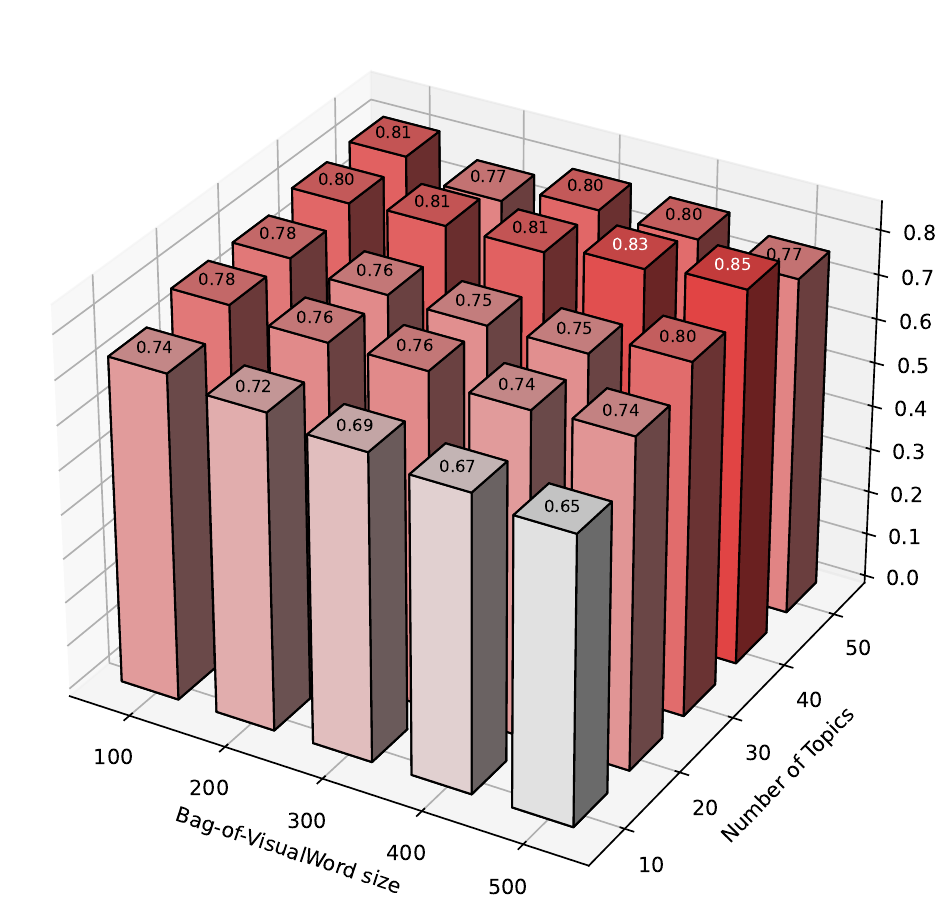}
        \caption{Dir-EP}
    \end{subfigure}


    \begin{subfigure}[b]{0.41\textwidth}
        \centering
        \includegraphics[width=\linewidth]{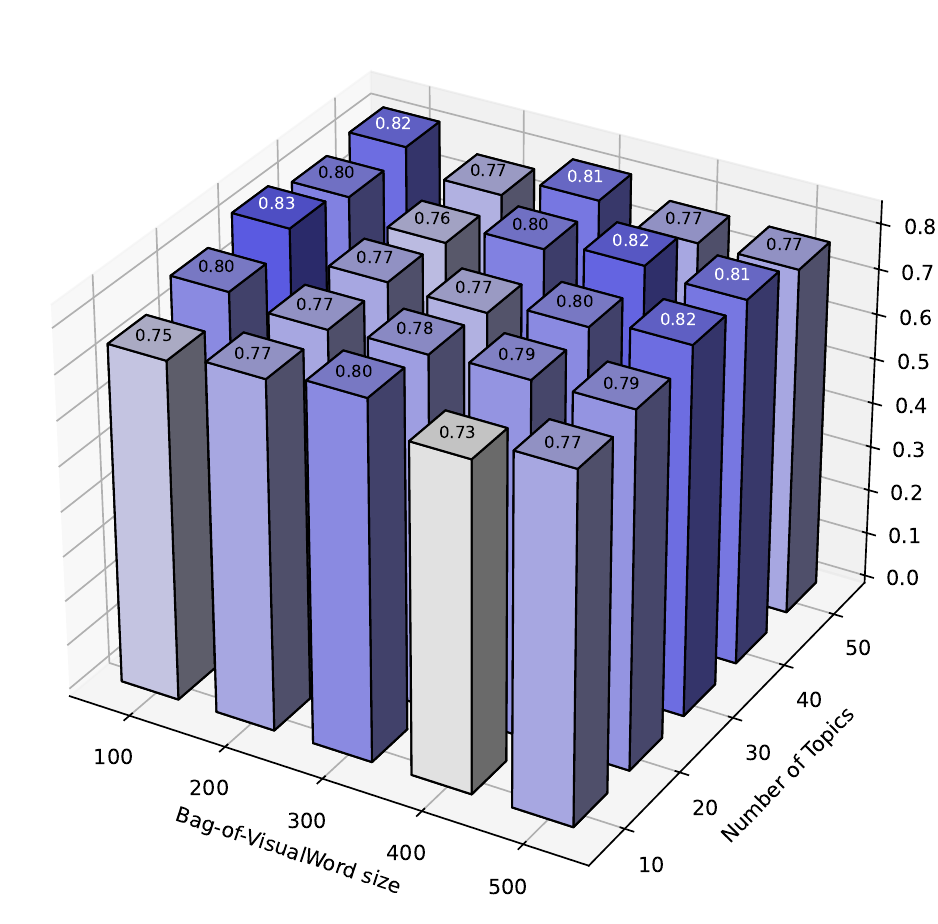}
        \caption{BL-VI}
    \end{subfigure}
    \hspace{0.02\textwidth}
    \begin{subfigure}[b]{0.41\textwidth}
        \centering
        \includegraphics[width=\linewidth]{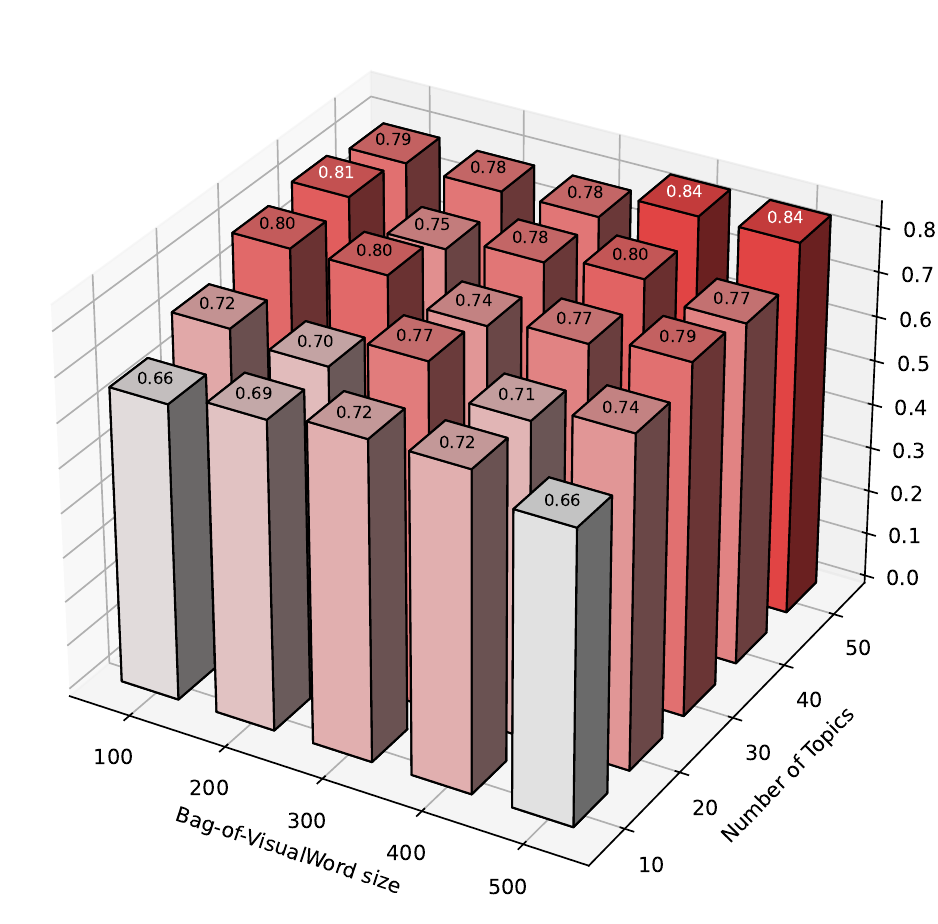}
        \caption{BL-EP}
    \end{subfigure}


    \begin{subfigure}[b]{0.41\textwidth}
        \centering
        \includegraphics[width=\linewidth]{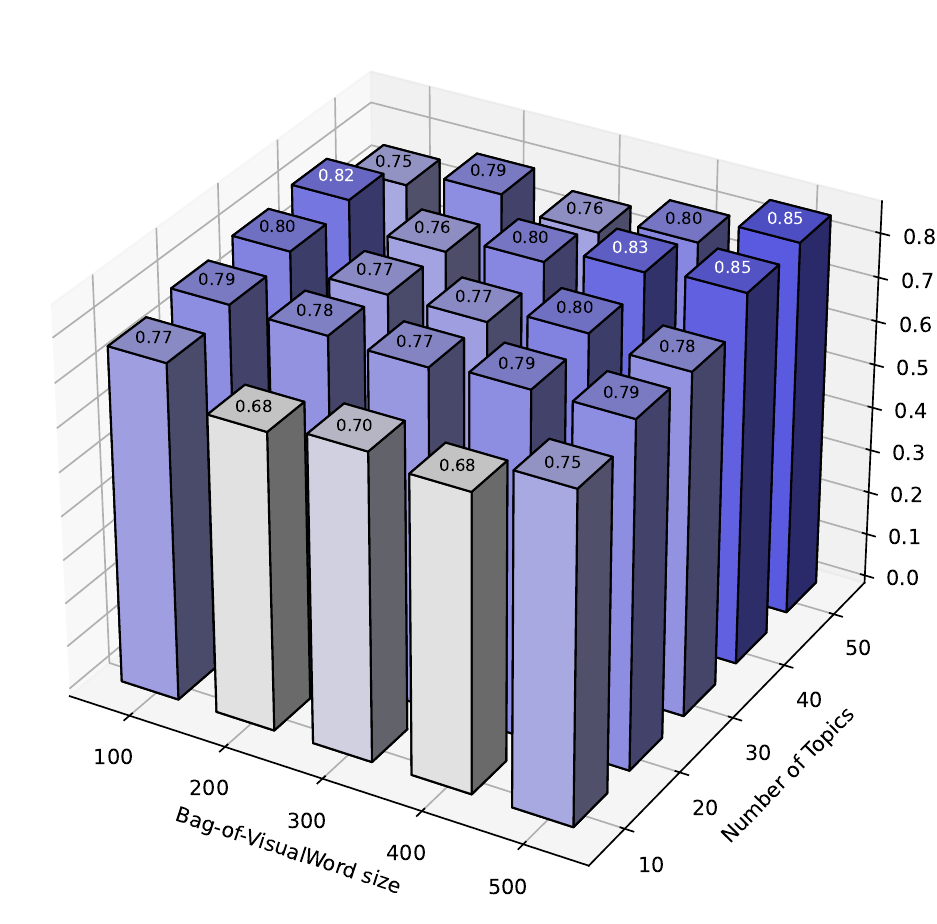}
        \caption{GDir-VI}
    \end{subfigure}
    \hspace{0.02\textwidth}
    \begin{subfigure}[b]{0.41\textwidth}
        \centering
        \includegraphics[width=\linewidth]{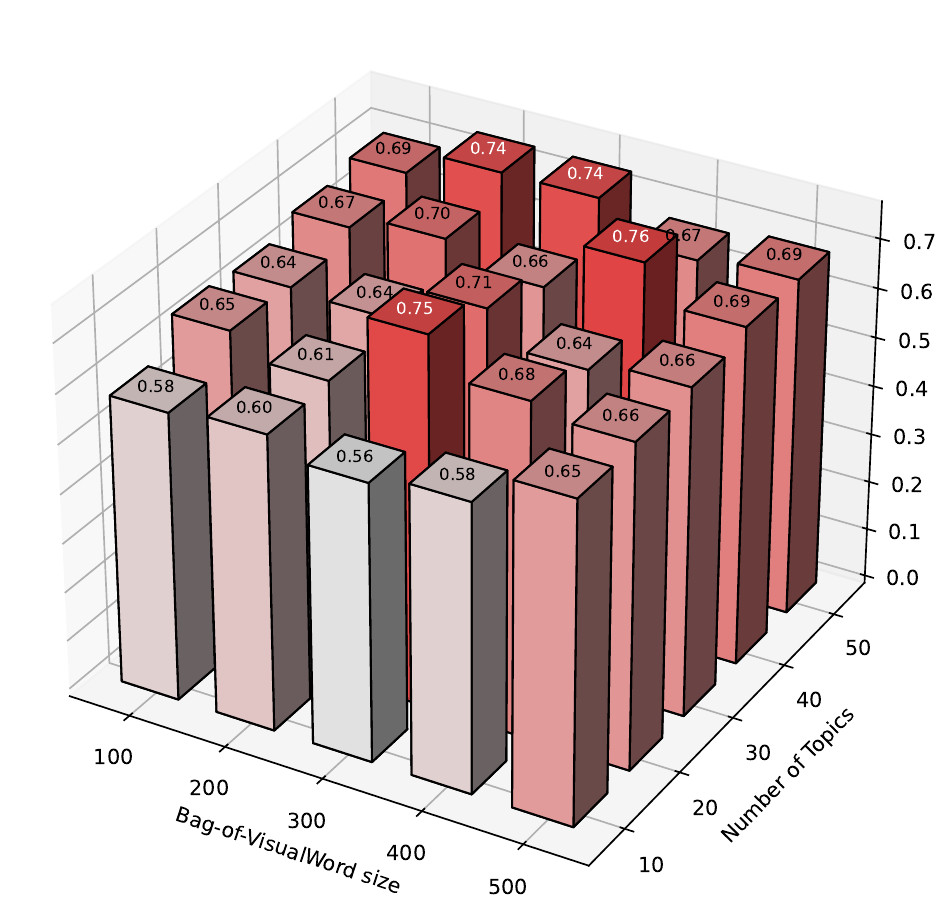}
        \caption{GDir-EP}
    \end{subfigure}

    \caption{Image classification accuracies}
    \label{fig:3dbar}
\end{figure}

\begin{figure}[htbp]
    \centering

    \begin{subfigure}[b]{0.48\textwidth}
        \centering
        \includegraphics[width=\linewidth]{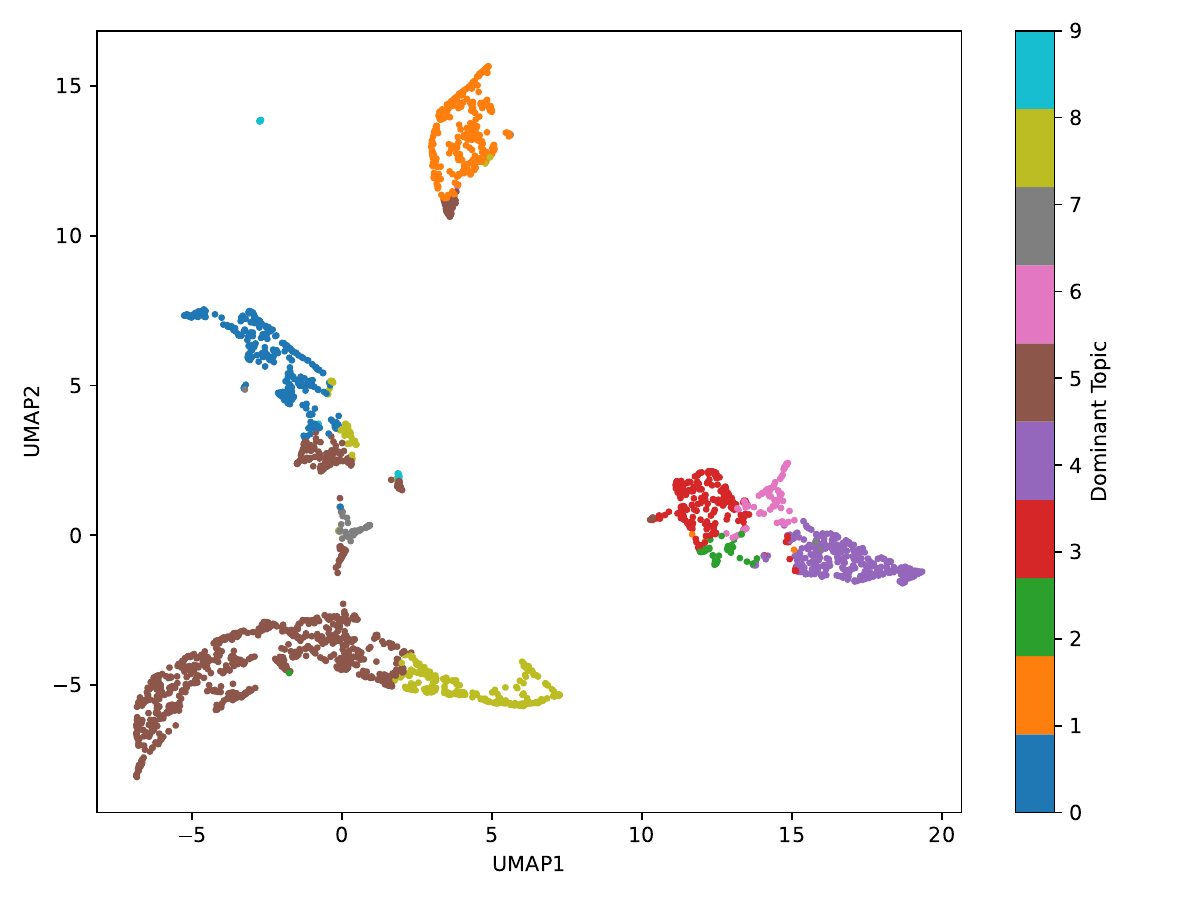}
        \caption{Dir-VI}
    \end{subfigure}
    \hspace{0.02\textwidth}
    \begin{subfigure}[b]{0.48\textwidth}
        \centering
        \includegraphics[width=\linewidth]{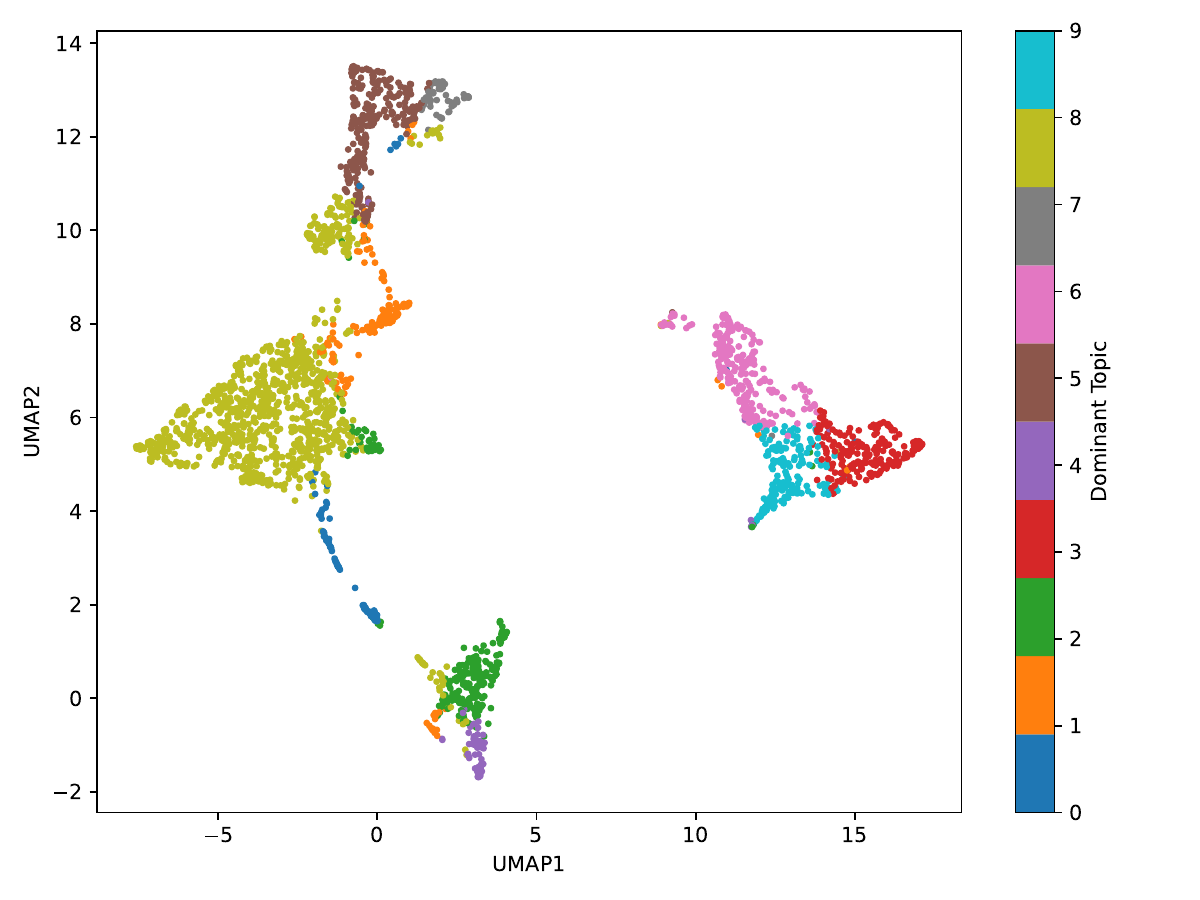}
        \caption{Dir-EP}
    \end{subfigure}

    \vspace{1em}

    \begin{subfigure}[b]{0.48\textwidth}
        \centering
        \includegraphics[width=\linewidth]{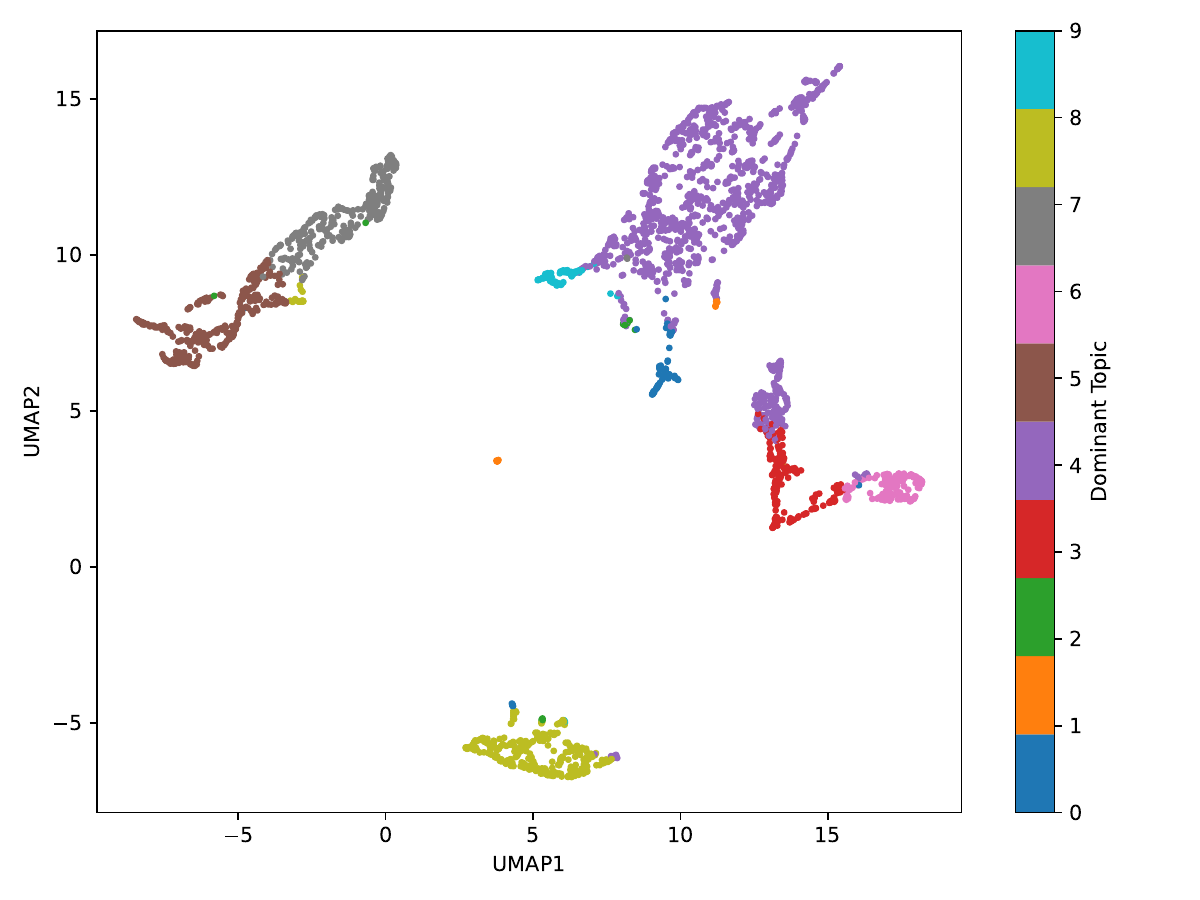}
        \caption{BL-VI}
    \end{subfigure}
    \hspace{0.02\textwidth}
    \begin{subfigure}[b]{0.48\textwidth}
        \centering
        \includegraphics[width=\linewidth]{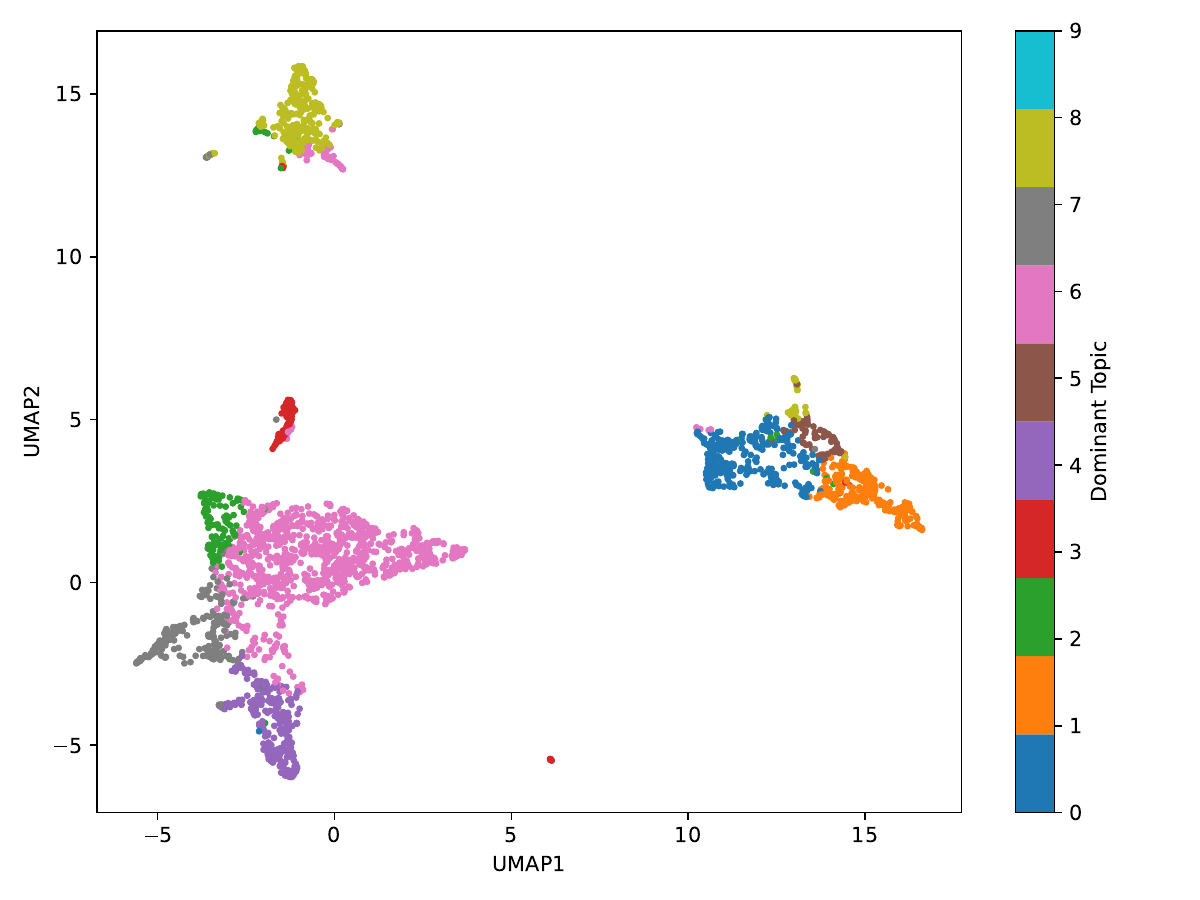}
        \caption{BL-EP}
    \end{subfigure}

    \vspace{1em}

    \begin{subfigure}[b]{0.48\textwidth}
        \centering
        \includegraphics[width=\linewidth]{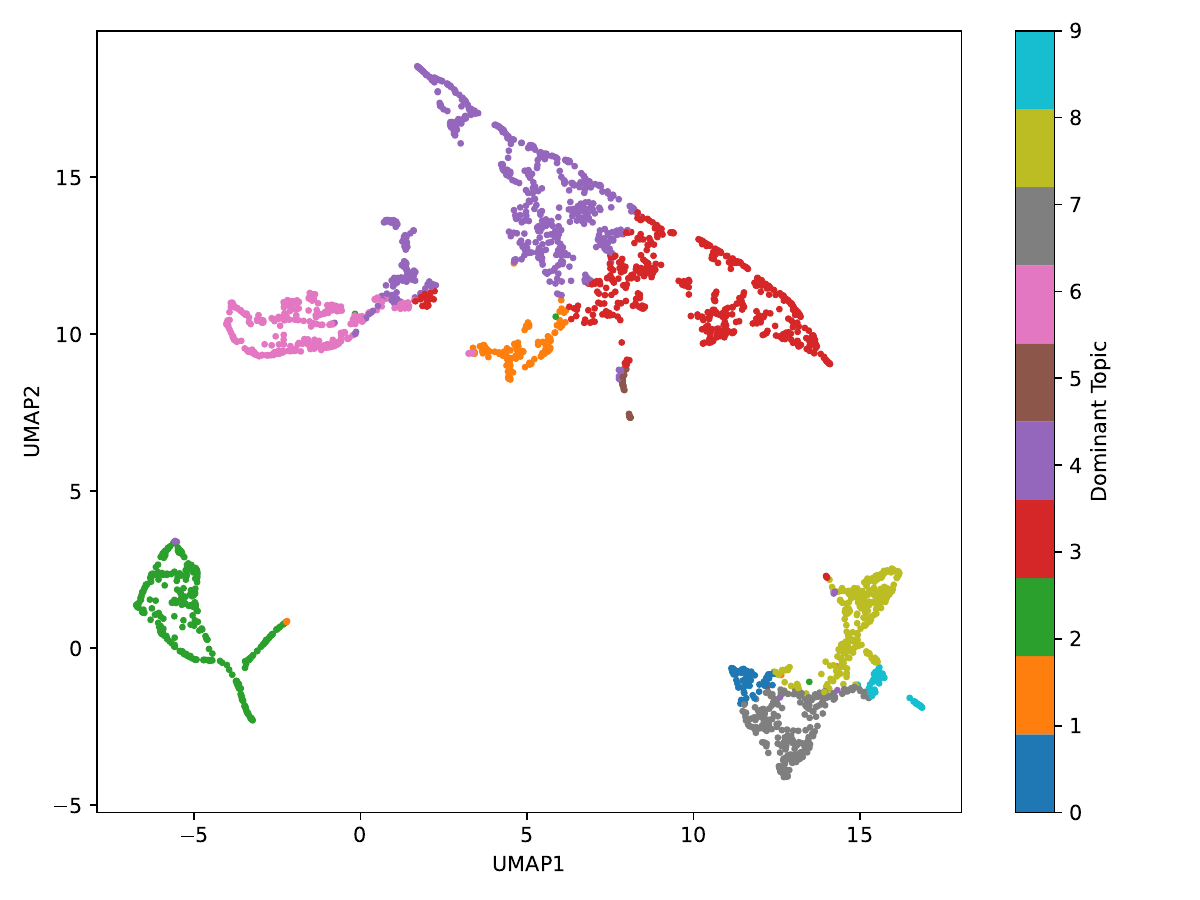}
        \caption{GDir-VI}
    \end{subfigure}
    \hspace{0.02\textwidth}
    \begin{subfigure}[b]{0.48\textwidth}
        \centering
        \includegraphics[width=\linewidth]{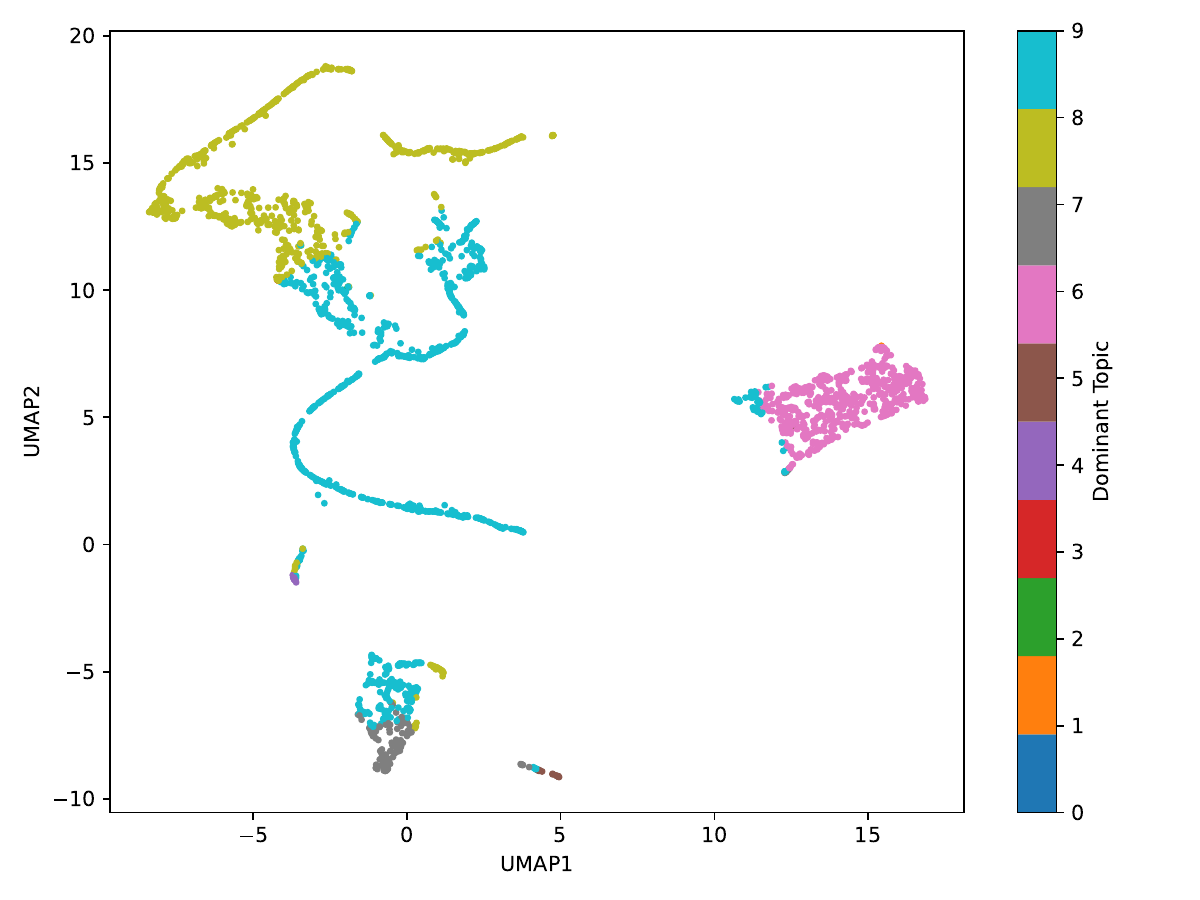}
        \caption{GDir-EP}
    \end{subfigure}

    \caption{UMAP of topic proportion of single-cells}
    \label{fig:pbmc3k}
\end{figure}

We further examine our models with an image classification task.
We choose 15 Scene Categories dataset, and pick five categories out of them: forest, highway, kitchen, office, and tallbuilding.

After loading the five categories of images, their feature descriptors are extracted with SIFT(Scale-Invariant Feature Transform), which are roughly distributed in a continuous space.
Thus, we use K-MEANS on all obtained descriptors to form bag-of-visualword (BoVW) vocabulary.
Then, a topic model with a certain prior-inference combination is trained on the BoVM, and the topic proportions of each image are obtained.
Subsequently, an SVM classifier is employed to perform classifications based on the topic proportions of all images.

In our experiment, we set the grid search based on the size of the BoVM and the topic number.
We choose from 100 to 500 for the size of the BoVW, and 10 to 50 for the topic number.
Finally, we obtain a total of 150 models of different settings, and compute their classification accuracies.
Figure \ref{fig:3dbar} is the 3D-plot of the image classification accuracy results.
We achieve a maximum accuracy of 85\% from our models.

\subsection{An Application in Bioinformatics}

In this section, we test our models using a well-known task in bioinformatics, namely single-cell RNA sequencing (scRNA-seq).
We use the famous dataset of Peripheral Blood Mononuclear Cells (PBMC) from 10X Genomics, which contains 2700 single cells that were sequenced on the Illumina NextSeq 500.
Each cell is equivalent to a document in our textual experiments, and the collection of genes of a single cell is equivalent to the collection of words in a document.
We firstly preprocess the PBMC dataset, and train multiple models with number of topics set to 10, as it is reported as the effective setting.
Figure \ref{fig:pbmc3k} is the UMAPs of topic proportion of all single-cells obtained from our models.

\section{Discussion}

This paper focuses on developing a general framework, called Latent Dirichlet-Tree Allocation (LDTA), to extend the conventional Latent Dirichlet Allocation (LDA) model both theoretically and practically.
Despite being overwhelmingly successful in modeling all kinds of discrete data, LDA employs the classical Dirichlet distribution as the prior for the topic proportions, which has intrinsic limitations such as strictly negative covariance matrix.
This shortcoming consequently imposes a structural restriction on LDA's capability of capturing the correlations and hierarchical relationships among topics.
To alleviate this restriction, we generalize the Dirichlet distribution to a broader family of Dirichlet-Tree distributions.
While maintaining the properties of conjugacy and exponential form, the Dirichlet-Tree's expressiveness is significantly enhanced thanks to its flexible hierarchical structure.
LDTA therefore extends the modeling capacity of LDA while preserving its generative semantic and interpretability.

In Section 2, we formally derive the Dirichlet-Tree distribution and demonstrate the key properties that underpin the subsequent derivation of Bayesian inference algorithm.
In particular, we prove the exponential form of Dirichlet-Tree for the first time, and we summarize the three equivalent representations: the node form, the general form, and the exponential form.
Under the Dirichlet-Tree framework, the classical Dirichlet, the Generalized Dirichlet, and the Beta-Liouville proposed in earlier literature are all unified as special cases of Dirichlet-Tree corresponding to different tree-structure.
To further prepare the development of the inference algorithm, we introduce several new concepts including Dirichlet selection operator, Bayesian operator and the collection of derived distributions.
These concepts provides new insight into the conjugacy and Bayesian updating while greatly simplifying the description of approximate Bayesian inference procedures.

In Section 3, 4 and 5, we introduce the generative process of LDTA, and derive and compare two universal inference algorithms, and Mean-Field Variational Inference (MFVI) and Expectation Propagation (EP).
We use the word ``universal" because the algorithms are derived and formulated for any arbitrary Dirichlet-Tree prior.
The two algorithms aim to approximate the intractable posteriors from different perspectives.
MFVI employs a global mean-field approximating distribution and optimizes the approximating parameters by maximizing the lower bound of evidence, while EP, in contrast, factorizes the target posterior into multiple terms and approximates them by a corresponding group of approximators step by step.
Throughout the formulation, we particularly highlight the vectorized nature of the two algorithms by expressing their computational steps using tensor operations.
And finally in Section 6, we implement LDTA with three different Dirichlet-Tree priors and both inference algorithms across three different application scenarios.
The applications and experiments are accelerated using Pytorch on GPU, and the experimental results demonstrate feasibility and scalability of our model.

Despite these advantages, LDTA also brings further challenges and opportunities for improvement. 
One limitation of the current work is that the tree structure must be specified in advance. 
While this gives the model users the flexibility to choose the desired tree structure, it remains an important question of how in general the tree should ``grow" itself given data where no obvious tree structure appears. 
Automatic structure learning, such as inferring the tree topology from data or using a flexible nonparametric prior, represents a promising direction for future research.
Another future direction calls the improvement and generalization of KL-divergence to measure the similarity between the approximating distributions and the target posteriors. 
Both MFVI and EP are based on KL-divergence which is known to be second-order approximation of Fisher-Rao metric \citep{Amari2016Information}.
Moreover, the global mean-field approximating distribution in MFVI can have more generalized and adaptive inner structure based on statistical optimal transport \citep{ChewiNilesWeedRigollet2025SOT, WuBlei2024_XiVI}.
In this viewpoint, the mean-field is a special case of a more general family of approximating distributions.
In summary, this thesis demonstrates a meaningful step in this research field, and more potential improvements are waiting to be discovered and developed.


\acks{The completion of this research was made possible thanks to the Natural Sciences and Engineering Research Council of Canada (NSERC).}


\newpage

\appendix

\section{Exponential Family and KL-Divergence}\label{app:properties_expo}

The exponential family of distributions has the following form \citep{bishop:06}:
\begin{equation*}
    p(\mathbf{x}|\boldsymbol{\eta})
    = h(\mathbf{x}) \mathrm{exp} \{ \boldsymbol{\eta}^{\top} \mathbf{u(x)} - \log g(\boldsymbol{\eta}) \}
\end{equation*}
where $\mathbf{u(x)}$ is the sufficient statistics; $\boldsymbol{\eta}$ is the natural parameter; $h(\mathbf{x})$ is the underlying measure; and $g(\boldsymbol{\eta})$ is the normalizer:
\begin{equation*}
        g(\boldsymbol{\eta}) = \int_{\mathbf{x}} \exp \{ \boldsymbol{\eta}^{\top} \mathbf{u(x)} \} h(\mathbf{x}) \,\mathrm{d}\mathbf{x}
\end{equation*}
\begin{theorem}[Expectation of sufficient statistics]
    \label{Moment on Sufficient Statistics}
    Let probability distribution $p(\mathbf{x}|\boldsymbol{\eta})$ belong to exponential family. The expectation of its sufficient statistics $\mathbf{u(x)}$ is equal to the gradient of its log-normalizer $\log g(\boldsymbol{\eta})$ w.r.t. its natural parameter $\boldsymbol{\eta}$.
    \begin{equation*}
        \nabla_{\boldsymbol{\eta}} \log g(\boldsymbol{\eta}) = \frac{1}{g(\boldsymbol{\eta})} \nabla_{\boldsymbol{\eta}} \,g(\boldsymbol{\eta}) = \int_{\mathbf{x}} \frac{1}{g(\boldsymbol{\eta})} \mathrm{exp} \{ \boldsymbol{\eta}^{\top} \mathbf{u(x)} \} \mathbf{u(x)} h(\mathbf{x}) \,\mathrm{d}\mathbf{x} = \mathbb{E}\left[ \mathbf{u(x)} \right]
    \end{equation*}
\end{theorem}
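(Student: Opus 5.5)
The plan is to differentiate the normalizer identity directly. By definition
\begin{equation*}
g(\boldsymbol{\eta}) = \int_{\mathbf{x}} \exp\{\boldsymbol{\eta}^{\top}\mathbf{u}(\mathbf{x})\}\,h(\mathbf{x})\,\mathrm{d}\mathbf{x}.
\end{equation*}
This is a Laplace-type transform of the measure $h(\mathbf{x})\,\mathrm{d}\mathbf{x}$ pushed forward by $\mathbf{u}$, so it is finite on the natural parameter space.

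First, apply the chain rule to the logarithm: $\nabla_{\boldsymbol{\eta}}\log g = g^{-1}\nabla_{\boldsymbol{\eta}} g$. This is valid wherever $g>0$, which always holds since the integrand is positive on a set of positive $h$-measure.

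Second, move the gradient inside the integral. Pointwise we have $\nabla_{\boldsymbol{\eta}}\exp\{\boldsymbol{\eta}^{\top}\mathbf{u}\} = \mathbf{u}\exp\{\boldsymbol{\eta}^{\top}\mathbf{u}\}$. This gives $\nabla g = \int \mathbf{u}(\mathbf{x})\exp\{\boldsymbol{\eta}^{\top}\mathbf{u}(\mathbf{x})\}h(\mathbf{x})\,\mathrm{d}\mathbf{x}$.

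Third, divide by $g(\boldsymbol{\eta})$. The integrand becomes exactly $\mathbf{u}(\mathbf{x})\,p(\mathbf{x}|\boldsymbol{\eta})$, which is the stated expectation $\mathbb{E}[\mathbf{u}(\mathbf{x})]$.

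The only real obstacle is justifying the interchange of differentiation and integration in the second step. I would use the dominated convergence theorem, assuming $\boldsymbol{\eta}$ lies in the interior of the natural parameter space $\{\boldsymbol{\eta}: g(\boldsymbol{\eta})<\infty\}$. Choose $\epsilon>0$ so that the closed ball of radius $2\epsilon$ around $\boldsymbol{\eta}$ stays inside this set. For $\|\boldsymbol{\eta}'-\boldsymbol{\eta}\|\le\epsilon$, use the elementary bound $\|\mathbf{u}\|e^{\boldsymbol{\eta}'^{\top}\mathbf{u}} \le \epsilon^{-1}\sum_{\pm,i} e^{(\boldsymbol{\eta}\pm 2\epsilon \mathbf{e}_i)^{\top}\mathbf{u}}$, up to a constant. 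This follows from $|t|\le \epsilon^{-1}e^{\epsilon|t|}$ applied coordinatewise, together with $e^{a|t|}\le e^{at}+e^{-at}$. The right-hand side is integrable against $h$ and does not depend on $\boldsymbol{\eta}'$. Combined with the mean value theorem, it dominates the difference quotients, so the interchange is legitimate.

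For the Dirichlet-Tree case that the paper actually needs, this condition is simply $\xi_{t\mid s}>0$, an open set. So the interchange holds at every admissible parameter, and Corollary~\ref{cor:mean_u} follows by differentiating $\log\prod_s B(\boldsymbol{\xi}_s)$.
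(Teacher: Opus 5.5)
Your proof is correct and uses the same argument as the paper, whose proof is just the displayed chain: the chain rule for $\log g$, differentiation under the integral, and recognition of $\mathbf{u}(\mathbf{x})\,p(\mathbf{x}|\boldsymbol{\eta})$. Your dominated-convergence justification of the interchange, which the paper omits, is a worthwhile addition. One minor fix: the bound $\|\mathbf{u}\|e^{\boldsymbol{\eta}'^{\top}\mathbf{u}}\le C\epsilon^{-1}\sum_{\pm,i}e^{(\boldsymbol{\eta}\pm 2\epsilon\mathbf{e}_i)^{\top}\mathbf{u}}$ fails on a Euclidean $\epsilon$-ball in dimension at least $4$ (try $\mathbf{u}=t\mathbf{1}$ with $\boldsymbol{\eta}'-\boldsymbol{\eta}\propto\mathbf{1}$); it does hold for $\|\boldsymbol{\eta}'-\boldsymbol{\eta}\|_1\le\epsilon$, and the mean value theorem for partial derivatives only needs it along coordinate segments $\boldsymbol{\eta}+h\mathbf{e}_j$ with $|h|\le\epsilon$, where your coordinatewise estimate is exactly right.
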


\begin{theorem}[ELBO Maximization]
    Suppose $p(\mathbf{z}|\boldsymbol{w})$ is a target posterior given observation $\boldsymbol{w}$; $q(\mathbf{z}|\boldsymbol{\eta})$ is the approximating distribution.
    The optimized approximating distribution $q(\mathbf{z}|\boldsymbol{\eta}^{*})$ to the target posterior $p(\mathbf{z}|\boldsymbol{w})$ measured by minimizing KL-Divergence $\mathrm{KL}(q(\mathbf{z}|\boldsymbol{\eta})||p(\mathbf{z}|\boldsymbol{w}))$ is the one that maximizes the Evidence Lower Bound (ELBO) given as:
    \begin{equation*}
        \mathrm{ELBO} = \int_{\mathbf{z}} q(\mathbf{z}|\boldsymbol{\eta})\log \frac{p(\boldsymbol{w}, \mathbf{z})}{q(\mathbf{z}|\boldsymbol{\eta})} \,\mathrm{d}\mathbf{z}
    \end{equation*}
\end{theorem}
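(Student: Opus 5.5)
We write the ELBO as a rearrangement of the log-evidence and show that the two differ by exactly the KL-divergence. Since the evidence does not depend on $\boldsymbol{\eta}$, minimizing one is then the same as maximizing the other.

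First, apply Bayes' rule inside the KL-divergence: $p(\mathbf{z}|\boldsymbol{w}) = p(\boldsymbol{w},\mathbf{z})/p(\boldsymbol{w})$. Substituting this into $\mathrm{KL}(q(\mathbf{z}|\boldsymbol{\eta})\,||\,p(\mathbf{z}|\boldsymbol{w})) = \int_{\mathbf{z}} q(\mathbf{z}|\boldsymbol{\eta}) \log \frac{q(\mathbf{z}|\boldsymbol{\eta})}{p(\mathbf{z}|\boldsymbol{w})}\,\mathrm{d}\mathbf{z}$ splits the logarithm into two pieces. Because $q$ integrates to one, the $\log p(\boldsymbol{w})$ piece comes out of the integral unchanged. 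This gives
\begin{equation*}
    \mathrm{KL}(q(\mathbf{z}|\boldsymbol{\eta})\,||\,p(\mathbf{z}|\boldsymbol{w})) = \log p(\boldsymbol{w}) - \int_{\mathbf{z}} q(\mathbf{z}|\boldsymbol{\eta})\log\frac{p(\boldsymbol{w},\mathbf{z})}{q(\mathbf{z}|\boldsymbol{\eta})}\,\mathrm{d}\mathbf{z} = \log p(\boldsymbol{w}) - \mathrm{ELBO}(\boldsymbol{\eta}).
\end{equation*}
For mixed discrete and continuous $\mathbf{z}$, as in the LDTA mean-field family, the same computation goes through with the integral read as the pairing $\langle\cdot,\cdot\rangle$ defined earlier.

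Second, note that $\log p(\boldsymbol{w})$ depends only on the model and the observation, not on the variational parameter $\boldsymbol{\eta}$. So $\mathrm{KL} + \mathrm{ELBO}$ is constant over the family $\mathcal{Q}$. Hence any $\boldsymbol{\eta}^{*}$ minimizing the KL-divergence maximizes the ELBO, and conversely, so the two $\arg\min$/$\arg\max$ sets coincide. As a by-product, nonnegativity of KL (Gibbs' inequality, via Jensen) shows the ELBO is a lower bound on $\log p(\boldsymbol{w})$, which justifies the name.

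The only delicate point is well-definedness. The KL-divergence and the ELBO must be finite, which requires $q$ to be absolutely continuous with respect to the posterior, and $p(\boldsymbol{w})>0$. I would state these as standing assumptions; both hold for the Dirichlet-Tree and multinomial factors used here, since all densities are positive on the interior of the simplex.
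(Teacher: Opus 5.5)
Your proposal is correct and follows essentially the same route as the paper's proof: both use Bayes' rule to write $\mathrm{KL}(q(\mathbf{z}|\boldsymbol{\eta})\,||\,p(\mathbf{z}|\boldsymbol{w})) = \log p(\boldsymbol{w}) - \mathrm{ELBO}$ and then note that $\log p(\boldsymbol{w})$ does not depend on $\boldsymbol{\eta}$, with $\mathrm{KL}\ge 0$ showing the ELBO is a lower bound. Your extra remarks, on reading the integral as the mixed discrete/continuous pairing and on stating finiteness of KL and ELBO as standing assumptions, are reasonable additions that the paper omits.
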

\begin{proof}
    Given that $p(\mathbf{z}|\boldsymbol{w})$ is the target posterior and $q(\mathbf{z}|\boldsymbol{\eta})$ belongs to exponential family, the Kullback-Leibler divergence $\mathrm{KL}(q(\mathbf{z}|\boldsymbol{\eta})||p(\mathbf{z}|\boldsymbol{w}))$ w.r.t. $q(\mathbf{z}|\boldsymbol{\eta})$ has the following form:
    \begin{equation*}
    \begin{aligned}
        \mathrm{KL}(q(\mathbf{z}|\boldsymbol{\eta})||p(\mathbf{z}|\boldsymbol{w}))
        &= \int_{\mathbf{z}} q(\mathbf{z}|\boldsymbol{\eta}) \log \frac{q(\mathbf{z}|\boldsymbol{\eta})}{p(\mathbf{z}|\boldsymbol{w})} \,\mathrm{d}\mathbf{z} \\[1ex]
        &= \log p(\boldsymbol{w}) - \int_{\mathbf{z}} q(\mathbf{z}|\boldsymbol{\eta}) \log \frac{p(\boldsymbol{w}, \mathbf{z})}{q(\mathbf{z}|\boldsymbol{\eta})} \,\mathrm{d}\mathbf{z} \\[1ex]
        &= \log p(\boldsymbol{z}) - \mathrm{ELBO}
    \end{aligned}
    \end{equation*}
    where $\log p(\boldsymbol{w})$ is independent of $\boldsymbol{\eta}$, and $\mathrm{KL}(q(\mathbf{z}|\boldsymbol{\eta})||p(\mathbf{z}|\boldsymbol{w}))\ge 0$. 
    The theorem is thus proved.
\end{proof}

\begin{theorem}[Moment Matching]\label{thm:moment_matching}
    Let $q(\mathbf{x}|\boldsymbol{\eta})$ belong to the exponential family.
    The optimized approximating distribution $q(\mathbf{x}|\boldsymbol{\eta}^{*})$ to a target distribution $p(\mathbf{x})$ measured by minimizing KL-Divergence $\mathrm{KL}(p(\mathbf{x})||q(\mathbf{x}|\boldsymbol{\eta}))$ is the one under which the expectation of its own sufficient statistics is equal to the expectation of that sufficient statistics under the target distribution.
\end{theorem}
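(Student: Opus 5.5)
We write $\mathrm{KL}(p\,\|\,q(\cdot|\boldsymbol{\eta}))$ as a function of $\boldsymbol{\eta}$, show it is convex, and read off its stationarity condition. Using the exponential form $q(\mathbf{x}|\boldsymbol{\eta}) = h(\mathbf{x})\exp\{\boldsymbol{\eta}^{\top}\mathbf{u}(\mathbf{x}) - \log g(\boldsymbol{\eta})\}$, the divergence expands as
\begin{equation*}
\mathrm{KL}(p\,\|\,q) = \int_{\mathbf{x}} p(\mathbf{x})\log p(\mathbf{x})\,\mathrm{d}\mathbf{x} - \int_{\mathbf{x}} p(\mathbf{x})\log h(\mathbf{x})\,\mathrm{d}\mathbf{x} - \boldsymbol{\eta}^{\top}\mathbb{E}_{p}\left[\mathbf{u}(\mathbf{x})\right] + \log g(\boldsymbol{\eta}).
\end{equation*}
The first two terms do not depend on $\boldsymbol{\eta}$. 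Minimizing the divergence is therefore equivalent to minimizing $F(\boldsymbol{\eta}) = \log g(\boldsymbol{\eta}) - \boldsymbol{\eta}^{\top}\mathbb{E}_{p}[\mathbf{u}(\mathbf{x})]$ over the natural parameter space.

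Next we differentiate $F$. By Theorem~\ref{Moment on Sufficient Statistics}, $\nabla_{\boldsymbol{\eta}}\log g(\boldsymbol{\eta}) = \mathbb{E}_{q(\mathbf{x}|\boldsymbol{\eta})}[\mathbf{u}(\mathbf{x})]$. Hence
\begin{equation*}
\nabla_{\boldsymbol{\eta}}F(\boldsymbol{\eta}) = \mathbb{E}_{q(\mathbf{x}|\boldsymbol{\eta})}\left[\mathbf{u}(\mathbf{x})\right] - \mathbb{E}_{p}\left[\mathbf{u}(\mathbf{x})\right],
\end{equation*}
and the stationarity condition $\nabla F = \mathbf{0}$ is exactly the moment-matching equation in the statement.

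It remains to show that a stationary point is a global minimizer. Differentiating the identity from Theorem~\ref{Moment on Sufficient Statistics} once more under the integral sign gives
\begin{equation*}
\nabla^2_{\boldsymbol{\eta}}\log g(\boldsymbol{\eta}) = \mathrm{Cov}_{q(\mathbf{x}|\boldsymbol{\eta})}\left[\mathbf{u}(\mathbf{x})\right],
\end{equation*}
which is positive semidefinite. So $F$ is convex, and any $\boldsymbol{\eta}^{*}$ satisfying the moment-matching condition is a global minimizer of the KL-divergence. If the sufficient statistics are affinely independent (a minimal representation), the covariance is positive definite, $F$ is strictly convex, and the minimizer is unique. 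For the Dirichlet-Tree this holds because the $\log(\Theta_t/\Theta_s)$ within each node are functionally independent apart from the simplex constraint, which the parametrization already absorbs.

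The main obstacle is justifying that we may differentiate $g$ twice under the integral. This requires $\boldsymbol{\eta}$ to lie in the interior of the natural parameter space, where $g$ is finite and analytic. For the Dirichlet-Tree the interior corresponds to all $\xi_{t|s} > 0$, so this is satisfied. We also need $\mathbb{E}_p[\mathbf{u}]$ to be finite. The argument above shows that if a solution exists, it is the KL minimizer. Existence follows when $\mathbb{E}_p[\mathbf{u}]$ lies in the interior of the mean-parameter space, by the standard steepness property of regular exponential families. We will state these as assumptions rather than prove them in full generality.
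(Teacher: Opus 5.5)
Your proof is correct and follows the same route as the paper. Both write $\mathrm{KL}(p\,\|\,q)$ as $\log g(\boldsymbol{\eta}) - \boldsymbol{\eta}^{\top}\mathbb{E}_{p}[\mathbf{u}(\mathbf{x})]$ plus an $\boldsymbol{\eta}$-independent constant, differentiate using $\nabla_{\boldsymbol{\eta}}\log g(\boldsymbol{\eta}) = \mathbb{E}_{q}[\mathbf{u}(\mathbf{x})]$, and set the gradient to zero. Your added step, that $\nabla^2_{\boldsymbol{\eta}}\log g(\boldsymbol{\eta}) = \mathrm{Cov}_{q}[\mathbf{u}(\mathbf{x})] \succeq 0$ makes the objective convex, is a real improvement: the paper only derives the stationarity condition and never argues that this critical point is the global minimizer (and, under a minimal representation, the unique one).
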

\begin{proof}
    Given that $p(\mathbf{x})$ is a target distribution and $q(\mathbf{x}|\boldsymbol{\eta})$ is the approximating distribution and $q(\mathbf{x}|\boldsymbol{\eta})$ belongs to the exponential family. 
    The Kullback-Leibler divergence $\mathrm{KL}(p(\mathbf{x})||q(\mathbf{x}|\boldsymbol{\eta}))$ w.r.t. $q(\mathbf{x}|\boldsymbol{\eta})$ has the following simple form:
    \begin{equation*}
    \begin{split}
        \mathrm{KL}(p(\mathbf{x})||q(\mathbf{x}|\boldsymbol{\eta}))
        & = \int_{\mathbf{x}} p(\mathbf{x}) \log \frac{p(\mathbf{x})}{q(\mathbf{x}|\boldsymbol{\eta})} \,\mathrm{d}\mathbf{x}  \\[1ex]
        & = \log g(\boldsymbol{\eta}) - \int_{\mathbf{x}} p(\mathbf{x}) \left\{\boldsymbol{\eta}^{\top}\mathbf{u(x)}\right\} \,\mathrm{d}\mathbf{x} + \int_{\mathbf{x}} p(\mathbf{x}) \log \frac{p(\mathbf{x})}{h(\mathbf{x})} \,\mathrm{d}\mathbf{x}  \\[1ex]
        & = \log g(\boldsymbol{\eta}) - \boldsymbol{\eta}^{\top} \mathbb{E}_{p(\mathbf{x})} \left[\mathbf{u(x)}\right] + \mathrm{const.}
    \end{split}
    \end{equation*}
    where the constant is independent of $\boldsymbol{\eta}$.
    The gradient is given as:
    \begin{equation*}
        \nabla_{\boldsymbol{\eta}} \mathrm{KL} (\boldsymbol{\eta}) = \mathbb{E}_{q(\mathbf{x})} \left[ \mathbf{u(x)} \right] - \mathbb{E}_{p(\mathbf{x})} \left[ \mathbf{u(x)} \right]
        \label{eq:moment_matching}
    \end{equation*}
    Consider minimizing the KL divergence w.r.t. $\boldsymbol{\eta}$ by setting the gradient to zero.
    This complete the proof of the theorem.
\end{proof}

\section{Details of Moment Matching}

In previous sections, we demonstrate that both maximization of ELBO with respect to $\boldsymbol{\xi}$ in mean-field variational inference and message passing in expectation propagation inference are reduced to moment matching.
More specifically, we want to ``recover" a Dirichlet distribution's parameters given its expectation of sufficient statistics.
\cite{minka:00} introduces the more general case of maximum likelihood estimate of a Dirichlet, and fix-point iteration and Newton-Ralphson algorithm for solving the problem.
\cite{sklar2014fast} tackles the problem with a fast Newton method.
In this appendix, we formulate the problem and introduce the fast Newton method that we used in the implementation of this paper, based on \cite{sklar2014fast} with slight differences in formulation and implementation.
In the end, we show that the same method can be generalized to any kind of Dirichlet-Tree.

Suppose that we have a Dirichlet distribution $\mathrm{Dir}(\boldsymbol{\theta}|\boldsymbol{\alpha})$ with unknown parameter $\boldsymbol{\alpha}$, and we want to compute the parameter from the known expectation of sufficient statistics $\boldsymbol{u}$.
For convenience, we define the function $\xi: \mathbb{R}_{<0}^{K} \to \mathbb{R}_{>0}^{K}$, such that $\boldsymbol{\alpha} = \xi(\boldsymbol{u})$.
This is equivalent to solving a system of non-linear equations:
\begin{equation*}
    \Biggl\{\psi(\alpha_k) - \psi \left( \sum_{l=1}^{K} \alpha_l \right) - u_k = 0 \Biggr\}_{k=1}^{K}
\end{equation*}
A natural idea is to use fixed-point iteration, as indicated in \citep{minka:00}.
Starting from an initial guess $\boldsymbol{\alpha}_0$, the optimized $\boldsymbol{\alpha}$ can be computed iteratively:
\begin{equation*}
    \Biggl\{\alpha_k^{(t+1)} = \psi^{-1}\left(  \psi \left( \sum_{l=1}^{K} \alpha_l^{(t)} \right) + u_k \right) \Biggr\}_{k=1}^{K}
\end{equation*}
This approach is computationally inefficient and may require many iterations in some cases. 
Therefore, we employ a fast Newton-Raphson method in our implementation.
We denote $\alpha_0 = \sum_{l=1}^{K} \alpha_l$ and:
\begin{align*}
    d(\boldsymbol{\alpha}) &= 
        \left(
            \begin{array}{c}
                d_1  \\
                d_2  \\
                \vdots  \\
                d_K
            \end{array}
        \right) = 
        \left(
            \begin{array}{c}
                \psi(\alpha_1) - \psi(\alpha_0) - u_1  \\
                \psi(\alpha_2) - \psi(\alpha_0) - u_2  \\
                \vdots  \\
                \psi(\alpha_K) - \psi(\alpha_0) - u_K
            \end{array}
        \right) \label{eq:grad} \\[1ex]
    J(\boldsymbol{\alpha}) &= 
        \left(
        \begin{array}{cccc}
            \frac{\partial}{\partial \alpha_1} d_1 & \frac{\partial}{\partial \alpha_1} d_2 & \cdots & \frac{\partial}{\partial \alpha_1} d_K \\
            \frac{\partial}{\partial \alpha_2} d_1 & \frac{\partial}{\partial \alpha_2} d_2 & \cdots & \frac{\partial}{\partial \alpha_2} d_K  \\
            \vdots & \vdots & \ddots & \vdots  \\
            \frac{\partial}{\partial \alpha_K} d_1 & \frac{\partial}{\partial \alpha_K} d_2 & \cdots & \frac{\partial}{\partial \alpha_K} d_K
        \end{array}
        \right) \notag \\[1ex]
    &= 
        \left(
        \begin{array}{cccc}
            \psi^{\prime}(\alpha_1)-\psi^{\prime}(\alpha_0) & -\psi^{\prime}(\alpha_0) & \cdots & -\psi^{\prime}(\alpha_0)  \\
            -\psi^{\prime}(\alpha_0) & \psi^{\prime}(\alpha_2)-\psi^{\prime}(\alpha_0) & \cdots & -\psi^{\prime}(\alpha_0)  \\
            \vdots & \vdots & \ddots & \vdots  \\
            -\psi^{\prime}(\alpha_0) & -\psi^{\prime}(\alpha_0) & \cdots & \psi^{\prime}(\alpha_K)-\psi^{\prime}(\alpha_0)
        \end{array}
        \right) \notag \\[1ex]
    &= \operatorname{diag}(\psi^{\prime}(\boldsymbol{\alpha})) - \psi^{\prime}(\alpha_0) \mathbf{1}_K \mathbf{1}_K^{\top}
\end{align*}
where $\psi^{\prime}(\cdot)$ is the trigamma function.
The Newton's method is given as:
\begin{equation*}
    \boldsymbol{\alpha}^{(t+1)} = \boldsymbol{\alpha}^{(t)} - J(\boldsymbol{\alpha}^{(t)})^{-1} d(\boldsymbol{\alpha}^{(t)})
\end{equation*}
Here, the key is to compute the inverse of Jacobian (or Hessian in the problem of MLE of a Dirichlet).
It should be noted that $J(\boldsymbol{\alpha})$ equals a diagonal matrix plus a constant matrix.
Sherman-Morrison formula, which is a special case of the Woodbury matrix inversion lemma, provides a perfect analytic solution for the inversion of this special kind of matrix.
According to the Sherman-Morrison formula, if $A \in \mathbb{R}^{K \times K}$ is an invertible matrix, $\boldsymbol{u}, \boldsymbol{v} \in \mathbb{R}^K$ are two vectors, we have:
\begin{equation*}
    \left( A + \boldsymbol{u}\boldsymbol{v}^{\top} \right)^{-1} = A^{-1} - \frac{A^{-1}\boldsymbol{u}\boldsymbol{v}^{\top}A^{-1}}{1 + \boldsymbol{v}^{\top}A^{-1}\boldsymbol{u}}
\end{equation*}
We replace $A$ with $\operatorname{diag}(\psi^{\prime}(\boldsymbol{\alpha}))$, and $\boldsymbol{u}\boldsymbol{v}^{\top}$ with $- \psi^{\prime}(\alpha_0) \mathbf{1}_K \mathbf{1}_K^{\top}$.
After a few steps, we arrive at the following expression:
\begin{equation*}
    J(\boldsymbol{\alpha})^{-1} = \operatorname{diag} \left( \boldsymbol{\pi}(\boldsymbol{\alpha}) \right) - \frac{\boldsymbol{\pi}(\boldsymbol{\alpha}) \boldsymbol{\pi}(\boldsymbol{\alpha})^{\top}}{\sigma(\boldsymbol{\alpha})}
\end{equation*}
where
\begin{align*}
    \boldsymbol{\pi}(\boldsymbol{\alpha}) &=
    \left( \frac{1}{\psi^{\prime}(\alpha_1)}, \; \frac{1}{\psi^{\prime}(\alpha_2)}, \; \cdots, \; \frac{1}{\psi^{\prime}(\alpha_K)}
    \right)^{\top} \notag \\[1ex]
    \sigma(\boldsymbol{\alpha}) &= \left( \sum_{l=1}^{K} \frac{1}{\psi^{\prime}(\alpha_l)} \right) - \frac{1}{\psi^{\prime}(\alpha_0)} \notag
\end{align*}
We proceed to derive the update in each iteration:
\begin{equation*}
    \boldsymbol{\alpha}^{(t+1)} = \boldsymbol{\alpha}^{(t)} - \left( d(\boldsymbol{\alpha}^{(t)}) - \frac{\boldsymbol{\pi}(\boldsymbol{\alpha}^{(t)})^{\top} d(\boldsymbol{\alpha}^{(t)})}{\sigma(\boldsymbol{\alpha}^{(t)})} \mathbf{1}_K \right) \odot \boldsymbol{\pi}(\boldsymbol{\alpha}^{(t)})
\end{equation*}
In Dirichlet-Tree distribution, the parameter $\{\xi_{t\mid s}\}_{c(s)},\;s\in\boldsymbol{\Lambda}$ under each internal node is treated as an independent sub-group of Dirichlet parameters that results in a sub-group of equations, and solved locally.

\section{Examples of Dirichlet-Tree}\label{app:examples_DT}

In this section, we introduce three typical Dirichlet-Tree distributions: the classical Dirichlet, the Beta-Liouville, and the Generalized Dirichlet.

\begin{figure}[htbp]
    \centering
    \includegraphics[width=1.0\linewidth]{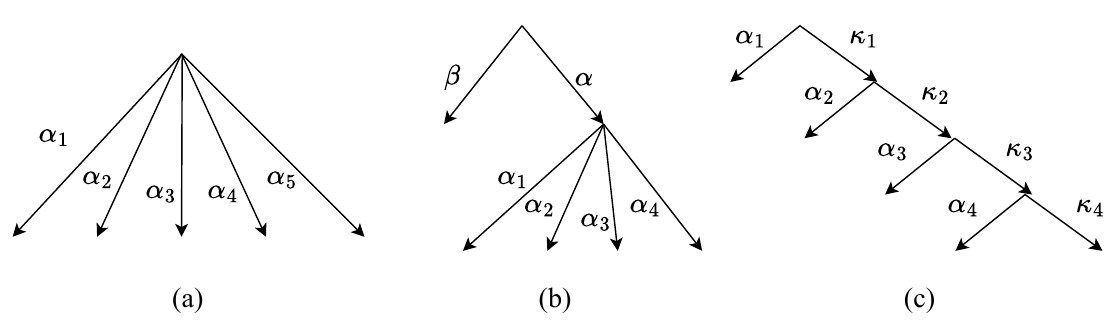}
    \caption{(a) Classical Dirichlet; (b) Beta-Liouville; (c) Generalized Dirichlet}
    \label{fig:d_bl_gd}
\end{figure}

\subsection{Dirichlet}
The Dirichlet distribution is the most basic Dirichlet-Tree (Figure~\ref{fig:d_bl_gd} (a)).
Given the parameter $\boldsymbol{\alpha} = (\alpha_1, \dots, \alpha_K)$, the PDF of Dirichlet is given as:
\begin{equation*}
    p(\boldsymbol{\theta}|\boldsymbol{\alpha}) = \frac{\Gamma\left(\sum_{l=1}^{K}\alpha_l\right)}{\prod_{k=1}^{K}\Gamma\left(\alpha_k\right)} \prod_{k=1}^{K} {\theta_k}^{\alpha_k -1}
\end{equation*}
Dirichlet's mean, variance, and covariance are given as:
\begin{equation*}
\begin{aligned}
    &\mathbb{E}[\theta_k] = \frac{\alpha_k}{\sum_{l=1}^{K}\alpha_l} \\[1ex]
    &\mathrm{Var}(\theta_k) = \frac{\alpha_k\left(\sum_{l=1}^{K}\alpha_l - \alpha_k\right)}{{\left(\sum_{l=1}^{K}\alpha_l\right)}^2\left(\sum_{l=1}^{K}\alpha_l+1\right)} \\[1ex]
    &\mathrm{Cov}(\theta_i, \theta_j) = -\,\frac{\alpha_i \alpha_j}{\left(\sum_{l=1}^{K}\alpha_l\right)^2\left(\sum_{l=1}^{K}\alpha_l+1\right)}
\end{aligned}    
\end{equation*}

\subsubsection{Exponential Form}

The natural parameters follows:
\begin{equation*}
    \eta_k = \alpha_k -1, \quad k = 1,\dots,K
\end{equation*}
The sufficient statistics:
\begin{equation*}
    u_k = \log \theta_k, \quad k=1, \dots, K
\end{equation*}
The log-normalizer:
\begin{equation*}
    \log g(\boldsymbol{\eta}) = \sum_{k=1}^{K} \log \Gamma(\alpha_k) - \log \Gamma \left( \sum_{l=1}^{K} \alpha_l \right)
\end{equation*}
The expectation of sufficient statistics:
\begin{equation*}
    \mathbb{E}\left[u_k\right] = \psi(\alpha_k) - \psi\left(\sum_{l=1}^{K}\alpha_l\right), \quad k = 1, \dots, K
\end{equation*}

\subsubsection{Conjugacy}

The Dirichlet distribution is conjugate to multinomial.
Given multinomial observation $\boldsymbol{n} = (n_1, \dots, n_K)$, the parameters of the Dirichlet posterior is updated as:
\begin{equation*}
    {\alpha_k}^{\prime} = \alpha_k + n_k, \quad k=1, \dots, K
\end{equation*}

\subsection{Beta-Liouville}

Given the parameter $\boldsymbol{\xi} = (\alpha, \beta, \alpha_1, \dots, \alpha_{K-1})$, the PDF of Beta-Liouville (Figure~\ref{fig:d_bl_gd} (b)) is given as:
\begin{equation*}
    p(\boldsymbol{\theta} | \boldsymbol{\xi}) = \frac{\Gamma(\sum_{l=1}^{K-1}\alpha_l)\Gamma(\alpha + \beta)}{\Gamma(\alpha)\Gamma(\beta)}\prod_{k=1}^{K-1}\frac{\theta_k^{\alpha_k - 1}}{\Gamma(\alpha_k)}
    \left(\sum_{l=1}^{K-1}\theta_l\right)^{\alpha-\sum_{l=1}^{K-1}\alpha_l}\left(1-\sum_{l=1}^{K-1}\theta_l\right)^{\beta-1}
\end{equation*}
Beta-Liouville also belongs to Liouville family of distributions whose general form is:
\begin{equation*}
    p(\boldsymbol{\theta}|\boldsymbol{\xi}) = \frac{f(u|\zeta)}{u^{\sum_{l=1}^{K-1}\alpha_l-1}} \frac{\Gamma(\sum_{l=1}^{K-1}\alpha_l)}{\prod_{k=1}^{K-1}\Gamma(\alpha_k)} \prod_{k=1}^{K-1}\theta_{k}^{\alpha_k-1}
\end{equation*}
where $u = \sum_{l=1}^{K-1} \theta_l$ and $f(u|\zeta)$ is the generating density with parameter $\zeta$.
Particularly, Beta-Liouville employs Beta distribution as $f(u|\zeta)$.
When $\alpha = \sum_{k=1}^{K-1} \alpha_k$ and $\beta = \alpha_K$, Beta-Liouville descends to the conventional Dirichlet distribution.
Beta-Liouville's mean, variance, and covariance matrix are given as:
\begin{align*}
    & \mathbb{E}\left[\theta_{k}\right] = \frac{\alpha}{\alpha + \beta} \frac{\alpha_{k}}{\sum_{l=1}^{K-1} \alpha_{l}}  \\[1ex]
    & \mathrm{Var}(\theta_{k}) = {\left( \frac{\alpha}{\alpha + \beta} \right)}^2 \frac{\alpha_{k} (\alpha_{k}+1)}{(\sum_{l=1}^{K-1} \alpha_{l}) (\sum_{l=1}^{K-1}\alpha_{l}+1)} - \left(\frac{\alpha_k E(\theta_k)}{\sum_{l=1}^{K-1} \alpha_l}\right)^2 \\[1ex]
    & \mathrm{Cov}(\theta_p, \theta_q) = \frac{\alpha_p \alpha_q}{\sum_{l=1}^{K} \alpha_l} \left( \frac{\frac{\alpha + 1}{\alpha + \beta + 1} \frac{\alpha}{\alpha + \beta}}{\sum_{l=1}^{K-1}\alpha_l + 1} - \frac{\frac{\alpha}{\alpha + \beta}}{\sum_{l=1}^{K-1} \alpha_l} \right) 
\end{align*}

\subsubsection{Exponential Form}

The natural parameters are as follows:
\begin{equation*}
\begin{aligned}
    &\eta_k = \alpha_k -1, \quad k = 1,\dots,K-1 \\[1ex]
    &\eta_K = \alpha - K + 1 \\[1ex]
    &\eta_{K+1} = \beta - 1 \\[1ex]
\end{aligned}
\end{equation*}
The sufficient statistics:
\begin{align*}
    & u_k = \log \theta_k - \log \left( \sum_{l=1}^{K-1} \theta_l \right), \quad k=1, \dots, K-1  \\[1ex]
    & u_K = \log \left( \sum_{l=1}^{K-1} \theta_l \right)    \\[1ex]
    & u_{K+1} = \log \left( 1 - \sum_{l=1}^{K-1} \theta_l \right)
\end{align*}
The log-normalizer:
\begin{equation*}
    \log g(\boldsymbol{\eta}) = \sum_{k=1}^{K-1} \log \Gamma(\alpha_k) + \log \Gamma(\alpha) + \log \Gamma(\beta) - \log \Gamma \left( \sum_{l=1}^{K-1} \alpha_l \right) - \log \Gamma(\alpha + \beta)
\end{equation*}
The expectation of sufficient statistics:
\begin{equation*}
\begin{aligned}
    &\mathbb{E}\left[u_k\right] = \psi(\alpha_k) - \psi\left(\sum_{l=1}^{K-1}\alpha_l\right), \quad k = 1, \dots, K-1 \\[1ex]
    &\mathbb{E} \left[ u_K \right] = \psi(\alpha) - \psi(\alpha + \beta)    \\[1ex]
    &\mathbb{E} \left[ u_{K+1} \right] = \psi (\beta) - \psi (\alpha + \beta)
\end{aligned}
\end{equation*}
And:
\begin{equation*}
\begin{aligned}
    &\mathbb{E} \left[ \log \theta_k \right]= \psi(\alpha_k) - \psi \left( \sum_{l=1}^{K-1} \alpha_l \right) + \psi (\alpha) -\psi (\alpha + \beta), \quad k=1,\dots, K-1  \\[1ex]
    &\mathbb{E} \left[ \log \theta_K \right] = \psi(\beta) - \psi(\alpha+\beta)
\end{aligned}
\end{equation*}

\subsubsection{Conjugacy}

The Beta-Liouville distribution is conjugate to multinomial.
Given multinomial observation $\boldsymbol{n} = (n_1, \dots, n_K)$, the parameters of the Beta-Liouville posterior is updated as:
\begin{align*}
    & {\alpha_k}^{\prime} = \alpha_k + n_k, \quad k=1, \dots, K-1  \\[1ex]
    & \alpha^{\prime} = \alpha + \sum_{l=1}^{K-1} n_l   \\[1ex]
    & \beta^{\prime} = \beta + n_K
\end{align*}

\subsection{Generalized-Dirichlet}

Given the parameters $\boldsymbol{\alpha}=(\alpha_1, \dots, \alpha_{K-1})$ and $\boldsymbol{\kappa} = (\kappa_1, \dots, \kappa_{K-1})$, the PDF of the Generalized-Dirichlet (Figure~\ref{fig:d_bl_gd} (c)) is given as:
\begin{equation*}
    p(\boldsymbol{\theta}|\boldsymbol{\alpha}, \boldsymbol{\kappa}) = \prod_{k=1}^{K-1} \frac{\Gamma(\alpha_k + \kappa_k)}{\Gamma(\alpha_k)\Gamma(\kappa_k)} {\theta_k}^{\alpha_k -1} \left(1-\sum_{l=1}^{k}\theta_l\right)^{\gamma_k}
\end{equation*}
where $\gamma_k = \kappa_k - \alpha_{k+1} - \kappa_{k+1}$ for $k=1,\dots,K-2$ and $\gamma_{K-1} = \kappa_{K-1}-1$.
Generalized-Dirichlet's mean, variance, and covariance are given as:
\begin{equation*}
\begin{aligned}
    &\mathbb{E}\left[\theta_k\right] = \frac{\alpha_k}{\alpha_k + \kappa_k} \prod_{l=1}^{k-1} \frac{\kappa_l}{\alpha_l + \kappa_l} \\[1ex]
    &\mathrm{Var}(\theta_k) = \mathbb{E}[\theta_k] \left( \frac{\alpha_k+1}{\alpha_k+\kappa_k+1} \prod_{l=1}^{k-1} \frac{\kappa_l+1}{\alpha_l+\kappa_l+1} - \mathbb{E}[\theta_k] \right)  \\[1ex]
    &\mathrm{Cov}(\theta_m, \theta_n) = \mathbb{E}[\theta_n] \left( \frac{\alpha_m}{\alpha_m + \kappa_m + 1} \prod_{l=1}^{m-1} \frac{\kappa_l + 1}{\alpha_l + \kappa_l + 1} - \mathbb{E}[\theta_m] \right)
\end{aligned}
\end{equation*}

\subsubsection{Exponential Form}

The natural parameters are as follows:
\begin{equation*}
\begin{aligned}
    &\eta_{1k} = \alpha_k - 1,\quad k = 1,\dots, K-1 \\[1ex]
    &\eta_{2k} = \kappa_k - K + k, \quad k = 1,\dots, K-1
\end{aligned}
\end{equation*}
The sufficient statistics:
\begin{equation*}
\begin{aligned}
    &u_{1k} = \log \theta_k - \log \sum_{l=k}^{K}\theta_l,\quad k = 1,\dots,K-1 \\[1ex]
    &u_{2k} = \log \sum_{l=k+1}^{K} \theta_l - \log \sum_{j=k}^{K} \theta_j, \quad k=1,\dots,K-1
\end{aligned}
\end{equation*}
The log-normalizer:
\begin{equation*}
    \log g(\boldsymbol{\eta}) = \sum_{k=1}^{K-1}\log\Gamma(\alpha_k) + \sum_{k=1}^{K-1}\log\Gamma(\kappa_k) - \sum_{k=1}^{K-1}\log\Gamma(\alpha_k + \kappa_k)
\end{equation*}
The expectation of sufficient statistics:
\begin{equation*}
\begin{aligned}
    &\mathbb{E}[u_{1k}] = \psi(\alpha_k) - \psi(\alpha_k+\kappa_k), \quad k = 1, \dots, K-1 \\[1ex]
    &\mathbb{E}[u_{2k}] = \psi(\kappa_k) - \psi(\alpha_k+\kappa_k), \quad k = 1,\dots, K-1
\end{aligned}
\end{equation*}
And:
\begin{equation*}
\begin{aligned}
    &\mathbb{E}[\log \theta_1] = \psi(\alpha_1) - \psi(\alpha_1+\kappa_1) \\[1ex]
    &\mathbb{E}[\log \theta_k] = \psi(\alpha_k) - \psi(\alpha_k+\kappa_k) + \sum_{l=1}^{k-1}\psi(\kappa_l) - \sum_{j=1}^{k-1}\psi(\alpha_j + \kappa_j),\quad k = 2,\dots,K-1 \\[1ex]
    &\mathbb{E}[\log \theta_K] = \sum_{l=1}^{K-1} \psi(\kappa_l) - \sum_{j=1}^{K-1}\psi(\alpha_j + \kappa_j)
\end{aligned}
\end{equation*}

\subsubsection{Conjugacy}
The Generalized-Dirichlet distribution is conjugate to the multinomial.
Given multinomial observation $\boldsymbol{n} = (n_1, \dots, n_K)$, the parameters of the Generalized-Dirichlet posterior are updated as:
\begin{equation*}
\begin{aligned}
    &\alpha_k^{\prime} = \alpha_k + n_k, \quad k =1, \dots, K-1  \\[1ex]
    &\kappa_k^{\prime} = \kappa_k + \sum_{l=k+1}^{K} n_l, \quad k=1, \dots, K-1
\end{aligned} 
\end{equation*}

\section{Matrix form of Bayesian Theorem}

The matrix form of Bayesian theory provides an intuition for Expectation-Maximization algorithm.
The basic form of Bayesian theorem is defined over binary sample space $\{A, A^{\mathrm{C}}\}$ and $\{B, B^{\mathrm{C}}\}$:
\begin{multline*}
        \left(
        \begin{array}{cc}
            P(A|B) & P(A|B^\mathrm{C}) \\
            P(A^\mathrm{C}|B) & P(A^\mathrm{C}|B^\mathrm{C})
        \end{array}
        \right)
        \left(
        \begin{array}{cc}
            P(B) & 0 \\
            0 & P(B^\mathrm{C})
        \end{array}
        \right) =  \\[1ex]
        \left(
        \begin{array}{cc}
            P(AB) & P(AB^{\mathrm{C}}) \\
            P(A^{\mathrm{C}}B) & P(A^{\mathrm{C}}B^{\mathrm{C}})
        \end{array}
        \right) = \\[1ex]
        \left(
        \begin{array}{cc}
            P(A) & 0 \\
            0 & P(A^\mathrm{C})
        \end{array}
        \right)
        \left(
        \begin{array}{cc}
            P(B|A) & P(B|A^\mathrm{C}) \\
            P(B^\mathrm{C}|A) & P(B^\mathrm{C}|A^\mathrm{C})
        \end{array}
        \right)^{\mathrm{T}}
\end{multline*}
Therefore, we have:
\begin{multline*}
        \left(
        \begin{array}{cc}
            P(A|B) & P(A|B^\mathrm{C}) \\
            P(A^\mathrm{C}|B) & P(A^\mathrm{C}|B^\mathrm{C})
        \end{array}
        \right) = \\[1ex]
        \left(
        \begin{array}{cc}
            P(A) & 0 \\
            0 & P(A^\mathrm{C})
        \end{array}
        \right)
        \left(
        \begin{array}{cc}
            P(B|A) & P(B|A^\mathrm{C}) \\
            P(B^\mathrm{C}|A) & P(B^\mathrm{C}|A^\mathrm{C})
        \end{array}
        \right)^{\mathrm{T}}
        \left(
        \begin{array}{cc}
            P(B) & 0 \\
            0 & P(B^\mathrm{C})
        \end{array}
        \right)^{-1}
\end{multline*}

Now, we generalize Bayesian theorem to multiple sample space. 
Suppose we have two finite sample spaces, $\mathbb{Z} = \{ z_1, \ldots, z_k, \ldots, z_K \}$, and $\mathbb{W} = \{ w_1, \ldots, w_v, \ldots, w_V \}$; and we assign a probabilistic distribution to each of the sample spaces: 

\begin{equation*}
        \boldsymbol{\theta} = (\theta_k)_K = 
        \left(
        \begin{array}{c}
             \theta_1   \\
             \vdots   \\
             \theta_k   \\
             \vdots   \\
             \theta_K
        \end{array}
        \right) = 
        \left(
        \begin{array}{c}
            P(z_1)   \\
            \vdots   \\
            P(z_k)   \\
            \vdots   \\
            P(z_K)
        \end{array}
        \right)
        \quad\;\;\;\;
        \boldsymbol{t} = (t_v)_V =
        \left(
        \begin{array}{c}
            t_1 \\
            \vdots \\
            t_v \\
            \vdots \\
            t_V
        \end{array}
        \right) = 
        \left(
        \begin{array}{c}
            P(w_1) \\
            \vdots \\
            P(w_v) \\
            \vdots \\
            P(w_V)
        \end{array}
        \right)
\end{equation*}
Let $M(\cdot)$ be the transform from a vector to a square diagonal matrix with each of its diagonal component being the corresponding component of the vector, and other components being 0.
\begin{equation*}
        M(\boldsymbol{\theta}) = 
        \left(
        \begin{array}{cccc}
            \theta_1 & 0 & \cdots & 0  \\
            0 & \theta_2 & \cdots & 0  \\
            \vdots & \vdots & \ddots & \vdots \\
            0 & 0 & \cdots & \theta_K
        \end{array}
        \right)
        \quad\;\;\;\;
        M(\boldsymbol{t}) = 
        \left(
        \begin{array}{cccc}
            t_1 & 0 & \cdots & 0 \\
            0 & t_2 & \cdots & 0 \\
            \vdots & \vdots & \ddots & \vdots \\
            0 & 0 & \cdots & t_V
        \end{array}
        \right)
\end{equation*}
Let $\boldsymbol{\varphi} = \{\varphi_{vk}\}_{V \times K}$ and $\boldsymbol{\phi} = \{\phi_{kv}\}_{K \times V}$ be two conditional probability matrices where $\varphi_{vk} = P(w_v|z_k)$ and $\phi_{kv} = P(z_k|w_v)$ for $k = 1, 2, \ldots, K$ and $v = 1, 2, \ldots, V$; and let $\Phi = \{P(w_v, z_k)\}_{V \times K}$ be the joint probabilistic matrix where $P(w_v, z_k) = P(w_v|z_k)P(z_k)$. 
Similarly, $\Phi^{\mathrm{T}} = \{P(z_k, w_v)\}_{K \times V}$ where $P(z_k, w_v) = P(z_k|w_v)P(w_v)$. 
Obviously, we have:
\begin{align*}
        & \boldsymbol{t} = \boldsymbol{\varphi} \boldsymbol{\theta} \\[1ex]
        & \boldsymbol{\theta} = \boldsymbol{\phi} \boldsymbol{t} \\[1ex]
        & \Phi = \boldsymbol{\varphi} M(\boldsymbol{\theta}) \\[1ex]
        & \Phi^{\mathrm{T}} = \boldsymbol{\phi} M(\boldsymbol{t})
\end{align*}
Let $\mathbf{1}_n$ be an n-dimensional vector whose components all equal to 1.
Consequently, we have:
\begin{align*}
        & \Phi \;\; \mathbf{1}_K = \boldsymbol{t} \\[1ex]
        & \Phi^{\mathrm{T}} \mathbf{1}_V = \boldsymbol{\theta}
\end{align*}
\begin{theorem}[Matrix form of Bayesian theorem]
        Given the marginal probability distributions $\boldsymbol{\theta}$ and $\boldsymbol{t}$ respectively corresponding to the conditional probability matrices $\boldsymbol{\varphi}$ and $\boldsymbol{\phi}$, the following transforms exit:

        \begin{align*}
            & \boldsymbol{\varphi} = M(\boldsymbol{t}) \boldsymbol{\phi}^{\mathrm{T}} M(\boldsymbol{\theta})^{-1} \\[1ex]
            & \boldsymbol{\phi} = M(\boldsymbol{\theta}) \boldsymbol{\varphi}^{\mathrm{T}} M(\boldsymbol{t})^{-1}
        \end{align*}
\end{theorem}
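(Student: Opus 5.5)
The plan is to read both identities entry by entry, using only the two factorizations of the joint matrix set up just before the statement. One factorization is $\Phi = \boldsymbol{\varphi} M(\boldsymbol{\theta})$, with entries $P(w_v,z_k) = P(w_v|z_k)P(z_k)$. The other is $\Phi^{\mathrm{T}} = \boldsymbol{\phi} M(\boldsymbol{t})$, with entries $P(z_k,w_v) = P(z_k|w_v)P(w_v)$. Since $P(w_v,z_k)=P(z_k,w_v)$ as numbers, the $(v,k)$ entry of $\Phi$ equals the $(k,v)$ entry of $\Phi^{\mathrm{T}}$. This gives the matrix identity
\begin{equation*}
\boldsymbol{\varphi} M(\boldsymbol{\theta}) = \Phi = \left(\boldsymbol{\phi} M(\boldsymbol{t})\right)^{\mathrm{T}} = M(\boldsymbol{t})\,\boldsymbol{\phi}^{\mathrm{T}},
\end{equation*}
where the last step uses that a diagonal matrix equals its own transpose.

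Next I will solve this identity for each conditional matrix. Right-multiplying by $M(\boldsymbol{\theta})^{-1}$ gives $\boldsymbol{\varphi} = M(\boldsymbol{t})\boldsymbol{\phi}^{\mathrm{T}}M(\boldsymbol{\theta})^{-1}$. For the second identity, I transpose the displayed equation to get $M(\boldsymbol{\theta})\boldsymbol{\varphi}^{\mathrm{T}} = \boldsymbol{\phi}M(\boldsymbol{t})$, then right-multiply by $M(\boldsymbol{t})^{-1}$ to get $\boldsymbol{\phi} = M(\boldsymbol{\theta})\boldsymbol{\varphi}^{\mathrm{T}}M(\boldsymbol{t})^{-1}$. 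Entrywise this is just Bayes' rule, $\phi_{kv} = \theta_k\varphi_{vk}/t_v$. So the whole proof is the $2\times 2$ argument from the start of this appendix, extended to finite $K$ and $V$.

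The one point that needs care is invertibility of the diagonal matrices. $M(\boldsymbol{\theta})^{-1}$ exists exactly when every $\theta_k>0$, and $M(\boldsymbol{t})^{-1}$ exists exactly when every $t_v>0$. I will state these as standing assumptions, in the same way that $P(z_k|w_v)$ is only defined when $P(w_v)>0$.

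If zero entries have to be allowed, I would restrict both identities to the support, meaning the indices with $\theta_k>0$ and $t_v>0$. Alternatively, I would replace the inverse by the pseudo-inverse on the support, noting that rows or columns of $\Phi$ indexed by zero-mass states vanish. Consistency also comes for free: the relations $\Phi\mathbf{1}_K=\boldsymbol{t}$ and $\Phi^{\mathrm{T}}\mathbf{1}_V=\boldsymbol{\theta}$ show that the derived $\boldsymbol{\phi}$ has columns summing to one. This confirms that the formula yields a genuine conditional probability matrix.
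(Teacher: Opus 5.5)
Your proposal is correct and follows the paper's proof: both equate the two factorizations $\boldsymbol{\varphi} M(\boldsymbol{\theta}) = \Phi = (\boldsymbol{\phi} M(\boldsymbol{t}))^{\mathrm{T}}$ and then multiply by the inverse diagonal matrices to solve for each conditional matrix. Your explicit assumption that every entry of $\boldsymbol{\theta}$ and $\boldsymbol{t}$ is positive, which is needed for $M(\boldsymbol{\theta})^{-1}$ and $M(\boldsymbol{t})^{-1}$ to exist, states a hypothesis the paper leaves implicit.
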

\begin{proof}
        \begin{equation*}
            \Phi = \boldsymbol{\varphi} M(\boldsymbol{\theta}) = \left( \boldsymbol{\phi} M(\boldsymbol{t}) \right)^{\mathrm{T}}
        \end{equation*}
        therefore
        \begin{equation*}
            \boldsymbol{\varphi} = M(\boldsymbol{t}) \boldsymbol{\phi}^{\mathrm{T}} M(\boldsymbol{\theta})^{-1}
        \end{equation*}
        The proof of the other equation is analogous.
\end{proof}

\section{Details of Explicitization}

For the first step, we define a new operation on matrices.
\begin{definition}[Matrix Exponential]
        Given two matrices $\mathbf{A} = \{a_{ml}\}_{M \times L}$ and $\mathbf{B} = \{b_{ln}\}_{L \times N}$, define $\mathbf{A}$ to the power of $\mathbf{B}$ as $\mathbf{C} = \mathbf{A}^\mathbf{B} = \{c_{mn}\}_{M \times N}$, where:
        \begin{equation*}
            c_{mn} = \prod_{l=1}^{L} {a_{ml}}^{b_{ln}}
        \end{equation*}
\end{definition}
The operation has the following properties:
\begin{enumerate}
        \item $\mathbf{A}^{\mathbf{I}} = \mathbf{A}$
        \item $\mathbf{A}^{\mathbf{B} + \mathbf{C}} = \mathbf{A^B} \odot \mathbf{A^C}$
        \item $(\mathbf{A} \odot \mathbf{B})^{\mathbf{C}} = \mathbf{A^C} \odot \mathbf{B^C}$
        \item $\mathbf{\Phi}^{\mathbf{AB}} = (\mathbf{\Phi^A})^{\mathbf{B}}$ 
\end{enumerate}
where $\mathbf{I}$ is identity matrix, $\odot$ is Hadamard product. 
The properties are obvious except the last one. 
Now, we prove the last one:
\begin{proof}
        Without the loss of generality, consider a special case where $\mathbf{\Phi}$ is a row vector $\boldsymbol{\theta}^{\mathrm{T}} = (\theta_1, \ldots, \theta_k, \ldots, \theta_K)$, $\mathbf{B}$ is a column vector $\mathbf{n} = (n_v)_V$, and $\mathbf{A}$ is matrix $\boldsymbol{\phi} = \{\phi_{kv}\}_{K \times V}$. We only need to prove that the property is correct for this special case and it can be naturally generalized.

        According to the definition:

        \begin{multline*}
            \left(\boldsymbol{\theta}^{\mathrm{T}}\right)^{\boldsymbol{\phi n}} = \prod_{k=1}^{K} {\theta_k}^{\sum_{v=1}^{V} \phi_{kv} n_v} = \\
            \prod_{k=1}^{K} \left( \prod_{v=1}^{V} {\theta_k}^{\phi_{kv} n_v} \right) = \prod_{v=1}^{V} \left( \prod_{k=1}^{K} {\theta_k}^{\phi_{kv} n_v} \right) = \\
            \prod_{v=1}^{V} \left( \prod_{k=1}^{K} {\theta_k}^{\phi_{kv}} \right)^{n_v} = \left[ \left(\boldsymbol{\theta}^{\mathrm{T}}\right)^{\boldsymbol{\phi}} \right]^{\boldsymbol{n}}
        \end{multline*}
\end{proof}
The definition here is different from the traditional definition of matrix exponentiation in Lie groups and Lie algebra, which requires the base to be square matrix.
The proof explains the origin of implicit approximate posterior in Expectation Porpagation.

Matrix exponential can be viewed as some form of ``inner power", which is similar in spirit to the definition of inner product. 
Now, we can show that the explicitization is more essentially interpreted as a decoration of Bayesian operator caused by the matrix exponential:
\begin{equation*}
\begin{aligned}
        \mathbf{B_D}(\boldsymbol{\phi}\boldsymbol{n_{(v)}})
        &= \left[ \frac{g(\boldsymbol{\eta}(\boldsymbol{\xi}+\mathbf{D}\boldsymbol{\phi}\boldsymbol{n_{(v)}}))}{g(\boldsymbol{\eta}(\boldsymbol{\xi}))} \right]^{-1} \prod_{k=1}^{K} {\theta_{k}}^{\sum_{v=1}^{V} \phi_{kv} n_v} \\[1ex]
        &= \left[ \frac{g(\boldsymbol{\eta}(\boldsymbol{\zeta}_{\boldsymbol{n}}))}{g(\boldsymbol{\eta}(\boldsymbol{\xi}))} \right]^{-1} \left( \boldsymbol{\theta}^{\mathrm{T}} \right)^{\boldsymbol{\phi n}} \\[1ex]
        &= \left[ \frac{g(\boldsymbol{\eta}(\boldsymbol{\zeta}_{\boldsymbol{n}}))}{g(\boldsymbol{\eta}(\boldsymbol{\xi}))} \right]^{-1} \left( \left( \boldsymbol{\theta}^{\mathrm{T}} \right)^{\boldsymbol{\phi}} \right)^{\boldsymbol{n}} \\[1ex]
        &= \left[ \frac{g(\boldsymbol{\eta}(\boldsymbol{\zeta}_{\boldsymbol{n}}))}{g(\boldsymbol{\eta}(\boldsymbol{\xi}))} \right]^{-1} \prod_{v=1}^{V} \left( \prod_{k=1}^{K} {\theta_{k}}^{\phi_{kv}} \right)^{n_v}
\end{aligned}
\end{equation*}

\vskip 0.2in
\bibliography{zheng_ldta}

\end{document}